\documentclass{article}


\usepackage{arxiv}




\usepackage[utf8]{inputenc} 
\usepackage[T1]{fontenc}    
\usepackage{hyperref}       
\usepackage{url}            
\usepackage{booktabs}       
\usepackage{amsfonts}       
\usepackage{nicefrac}       
\usepackage{microtype}      
\usepackage{xcolor}         

\usepackage{bm}
\usepackage{bbm}
\usepackage{comment}         
\usepackage{multicol}
\usepackage{multirow}
\usepackage{caption}
\usepackage{natbib}

\newcommand{\set}[1]{\mathcal{#1}}




\newcommand\Pbb {\mathbb{P}}

\usepackage{amsmath,amssymb,amsthm}

\theoremstyle{plain}

\newtheorem{theorem}{Theorem}[section]
\newtheorem{lemma}[theorem]{Lemma}

\newtheorem{corollary}[theorem]{Corollary}
\newtheorem{proposition}[theorem]{Proposition}

\newtheorem{definition}[theorem]{Definition}
\newtheorem{assumption}[theorem]{Assumption}

\newtheorem*{remark}{Remark}

\usepackage{color}

\usepackage{graphicx}



\newcommand{\R}{\mathbb{R}}

\newcommand{\Ebb}{\mathbb{E}}

\newcommand{\KL}{\mathrm{KL}}

\newcommand{\Gap}{\Delta}


\usepackage{times}

\newcommand{\pa}{\mathrm{\pa}}

\newcommand{\Ebbnu}{\mathbb{E}_{\nu}}
\newcommand{\Ftminone}{\mathcal{F}_{t-1}}

\newcommand{\RN}[1]{%
  \textup{\uppercase\expandafter{\romannumeral#1}}%
}

\newcommand*\interior[1]{#1^{\mathsf{o}}}

\usepackage{algorithm}
\usepackage{algorithmic}

\newcommand*{\email}[1]{\texttt{#1}}

\usepackage{xcolor}
\newcount\Comments  
\Comments=1 
\newcommand{\kibitz}[2]{\ifnum\Comments=1\textcolor{#1}{#2}\fi}
\newcommand{\mkato}[1]{\kibitz{black}{#1}}

\usepackage{authblk}

\allowdisplaybreaks

\title{Optimal Best Arm Identification\\ in Two-Armed Bandits with a Fixed Budget\\ under a Small Gap}

\author[1]{Masahiro Kato\thanks{\email{masahiro\_kato@cyberagent.co.jp}.}$\ \ $}
\author[1,2]{Kaito Ariu}
\author[3]{Masaaki Imaizumi}
\author[1]{Masahiro Nomura}
\author[5]{Chao Qin}

\affil[1]{AI Lab, CyberAgent, Inc.}
\affil[2]{School of Electrical Engineering and Computer Science, KTH}
\affil[3]{Department of Basic Science / Komaba Institute for Science, the University of Tokyo}
\affil[5]{Columbia Business School, Columbia University}

\begin{document}

\maketitle

\begin{abstract}
We consider fixed-budget best-arm identification in two-armed Gaussian bandit problems. One of the longstanding open questions is the existence of an optimal strategy under which the probability of misidentification matches a lower bound. We show that a strategy following the Neyman allocation rule \citep{Neyman1934OnTT} is asymptotically optimal when the gap between the expected rewards is small. First, we review a lower bound derived by \citet{Kaufman2016complexity}. Then, we propose the ``Neyman Allocation (NA)-Augmented Inverse Probability weighting (AIPW)'' strategy, which consists of the sampling rule using the Neyman allocation with an estimated standard deviation and the recommendation rule using an AIPW estimator. Our proposed strategy is optimal because the upper bound matches the lower bound when the budget goes to infinity and the gap goes to zero. 
\end{abstract}

\section{Introduction}
We study the \emph{best arm identification (BAI) with a fixed budget} in stochastic two-armed Gaussian bandit problems. The goal is to identify an arm that has the highest expected reward with the smallest failure probability under a fixed number of rounds, called a \emph{budget} \citep{Bubeck2009,Audibert2010}. 
Formally, we consider the following setting given a fixed budget $T$: 
for $t\in [T] := \{1,2,\dots, T\}$ and $a \in \{1,0\}$, there exists a potential independent reward $X_{a, t} \in\mathbb{R}$ from a distribution $\nu_a$ with an expected reward $\mu_a$.
Let $\mathcal{F}_{t}$ be the sigma-algebra by all observations up to time $t$.

A tuple of the distributions ${\nu} = (\nu_1, \nu_0)$ is called a \emph{bandit model}. In this study, we consider Gaussian bandit models defined as $\mathcal{M} = \{\nu = (\mathcal{N}(\mu_1, \sigma^2_1), \mathcal{N}(\mu_0, \sigma^2_0)): (\mu_1, \mu_0)\in\mathbb{R}^2, \mu_1\neq \mu_0\}$, where $\mathcal{N}(\mu_a, \sigma^2_a)$ is a Gaussian distribution with the mean $\mu_a$ and variance $\sigma_a^2$ for $a \in \{1,0\}$.
The best arm $a^*(\nu) \in \{1,0\}$ is defined as $\mu_{a^*} > \max_{a \neq a^*(\nu)} \mu_a$, which is assumed to exist uniquely. 
Without loss of generality, we assume that $\mu_1 > \mu_0$, hence $a^*(\nu) = 1$.
At time $t$, an agent randomly samples an arm $A_t \in 
\{1,0\}$ and receives a reward $X_t=\sum_{a\in\{1, 0\}}\mathbbm{1}[A_t = a]X_{a, t}$.

A \emph{strategy} for the BAI problem determines which arms to sample and which arm to choose as the best arm.
Formally, we define a strategy as a pair $((A_t)_{t\in[T]}, \hat{a}_T)$, where (i) $(A_t)_{t\in[T]}$ is a \textit{sampling rule} that determines which arm $A_t$ is chosen in each $t$ based on $\mathcal{F}_{t-1}$, and (ii) $ \hat{a}_T$ is a \textit{recommendation rule} that samples an arm  $\hat{a}_T$ based on $\mathcal{F}_T$. A sampling rule is often designed based on a \textit{target allocation ratio}; that is, the ratio of the expected number of samples of an arm. A target allocation ratio determines the number of samples allocated to each arm, and estimating the ratio during the round plays an important role in the strategy.

Our goal is to find a strategy that minimizes the probability of misidentification $\mathbb{P}_{\nu}( \hat{a}_T \neq a^*(\nu))$, where $\mathbb{P}_{\nu}$ (resp. $\mathbb{E}_{\nu}$) is the probability law (resp.
expectation) of a sequence of random variables $(X_t)_{t\in[T]}$.
We call a strategy is \emph{consistent}, if $\mathbb{P}_{\nu}( \hat{a}_T \neq a^*(\nu)) \to 0$ as $T \to \infty$.
To evaluate the performance of strategies, we focus on the following metric for $\mathbb{P}_{\nu}( \hat{a}_T \neq a^*(\nu))$ used in many studies, such as \citet{Kaufman2016complexity}:
\begin{align*}
    -\frac{1}{T}\log \mathbb{P}_{\nu}( \hat{a}_T \neq a^*(\nu)).
\end{align*}
We remark that the upper bound (resp. lower bound) of this term works as a lower bound (resp. upper bound) of the probability $\mathbb{P}_{\nu}( \hat{a}_T \neq a^*(\nu))$ since $x \mapsto -\log x$ is a strictly decreasing function. 

One of the main interests in this field is to derive a tight lower bound for the probability of misidentification and an optimal strategy under which the probability of misidentification matches the lower bound. 
\citet{glynn2004large} proposes an optimal strategy when an optimally selected target allocation ratio is given. Their target allocation ratio is identical to the one given in the Neyman allocation \citep{Neyman1934OnTT}, where samples are allocated to each treatment arm based on the ratio of its standard deviation.
However, they assume that the standard deviations is known and do not consider the issue of estimating them. \citet{Kaufman2016complexity} derives a distribution-dependent lower bound for the misidentification probability, which is agnostic to the optimal target allocation ratio; that is, their lower bound presumes the knowledge of the standard deviations.  
Despite the seminal result, it is still unknown whether there exists a strategy under which the probability of misidentification matches the lower bound of \citet{Kaufman2016complexity} when the standard deviations are unknown.
\citep{kaufmann2020hdr}. 

The main contribution of this study is the development of a strategy under which the probability of misidentification matches the distribution-dependent lower bound of  \citet{Kaufman2016complexity}. 
First, we propose the \textit{Neyman Allocation-Augmented Inverse Probability Weight} (NA-AIPW) strategy consisting of the following two components.
(i) a sampling rule with the Neyman Allocation (NA), and
(ii) a recommendation rule using the Augmented Inverse Probability Weighting (AIPW) estimator \citep{Robins1994,bang2005drestimation}. 
We show that the NA-AIPW strategy is asymptotically optimal under a \emph{small gap}. When $T \to \infty$ and a gap $\Delta = \mu_1 - \mu_0$ between the expected rewards goes to zero, the upper bound of the NA-AIPW strategy matches the lower bound of \citet{Kaufman2016complexity}. In Table \ref{table:contribution}, we summarize our contributions.

\begin{table}[t]
 \caption{Comparison of the contributions.}
    \label{table:contribution}
    \centering
    \scalebox{0.8}[0.8]{
    \begin{tabular}{|l|c|c|}
    \hline
        & Lower bound  & Upper bound \\
    \hline
       \citet{glynn2004large} & - &  (with a known standard deviation) \\
    \hline
        \citet{Kaufman2016complexity} & $\checkmark$ &  (with a known standard deviation) \\
    \hline
        Ours & $\checkmark$ (the lower bound of \citet{Kaufman2016complexity}) & $\checkmark$ (small gap) \\
    \hline
    \end{tabular}
    }
    \vspace{-0.5cm}
\end{table}

Investigating optimality under the small gap corresponds to evaluating one of the worst-case performances. 
A small gap implies a situation in which it is difficult to identify the best arm. Our proposed strategy is asymptotically optimal in the sense that it matches the lower bound in the small gap asymptotically. Therefore, our result can be interpreted as a worst-case optimality of the NA-AIPW strategy for the probability of misidentification. From the technical viewpoint, one of the reasons why the probability of misidentification matches the lower bound is partly because the estimation error of the optimal target allocation ratio becomes negligible with respect to the probability of misidentification.
In BAI with fixed confidence, lil'UCB \citep{Jamieson2014} considers optimality under a similar situation.

To evaluate the probability of misidentification under a small gap, we modified a large deviation bound for martingales shown by \citet{Fan2013,fan2014generalization}. 
Then, we apply the large deviation bounds to obtain the upper bound with the AIPW estimator whose asymptotic variance matches the lower bound for the asymptotic variance. 

\textbf{Organization.} The remainder of this paper is organized as follows.
In Section~\ref{sec:lowerbound}, we describe the lower bound, following \citet{Kaufman2016complexity}. In Section~\ref{sec:track_aipw}, we define the NA-AIPW strategy. In Section~\ref{sec:asymp_opt}, we present our main theorem on the asymptotic optimality of our proposed strategy with its proof.
In Section~\ref{sec:experiments}, we provide the empirical results of the numerical experiments to demonstrate the efficiency of our strategy. In Section~\ref{sec:related}, we discuss the related work and remaining problems. A detailed discussion of the AIPW estimator can be found in Sections \ref{sec:track_aipw} and \ref{sec:aipw_est} and Appendix~\ref{appdx:proof_large_deviation}.f

\section{Lower bound}
\label{sec:lowerbound}
As a preparation, we introduce a distribution-dependent lower bound for the probability of misidentification in BAI with a fixed budget. For BAI, \citet{Kaufman2016complexity} presents the distribution-dependent lower bounds specific to each reward distribution. Their lower bounds correspond to those for regret minimization, popularized by \citet{Lai1985}. Based on their results, we provide lower bounds for Gaussian bandit models $\mathcal{M}$. 

In order to present lower bounds and for subsequent theoretical analysis, we assume the following conditions on the bandit models.
\begin{assumption} \label{asm:bounded_mean_variance}
There exist known constants $C_{\mu}, C_{\sigma^2} > 0$ such that, for all $a \in \{1,0\}$, $| \mu_a| \le  C_\mu$ and $\max\{ 1/\sigma^2_a, \sigma^2_a\} < C_{\sigma^2}$. 
\end{assumption}

Recall that $\Delta = \mu_1 - \mu_0$ is a gap of the means between $\nu_1$ and $\nu_0$ in $\nu$. Then, we obtain the following statement from the arguments in Section~5.1 in \citet{Kaufman2016complexity}. 
\begin{proposition}
\label{prp:lowerbound_2arms}
Under Assumption~\ref{asm:bounded_mean_variance}, for each $\nu \in \mathcal{M}$, any consistent strategy satisfies 
\begin{align*}
    \limsup_{T \to \infty} - \frac{1}{T}\log \mathbb{P}_\nu(\hat{a}_T \neq a^*(\nu)) \le \frac{\Delta^2}{2 \big(\sqrt{\sigma^2_1} + \sqrt{\sigma^2_0}\big)^2}.
\end{align*}
\end{proposition}
For the sake of completeness, we show it in Appendix~\ref{appdx:prp:lowerbound_2arms}. 

It is known that there exist optimal strategies under which the probability of misidentification asymptotically matches the lower bound when the standard deviations are known \citep{glynn2004large,Kaufman2016complexity}. In existing optimal strategies using the known standard deviations, such as \citet{Kaufman2016complexity}, we sample each arm with the ratio of the standard deviations as well as the Neyman allocation. However, to the best of our knowledge, if standard deviations need to be estimated by adaptive experimentation, no optimal strategy has been proposed. Furthermore, \citet{Carpentier2016} derives another lower bound, which implies that there is no optimal strategies for the lower bound in Proposition~\ref{prp:lowerbound_2arms} when the gap $\Delta$ is fixed. In this study, we consider a strategy under which the probability of misidentification matches the lower bound in Proposition~\ref{prp:lowerbound_2arms} under the small gap, $\Delta \to 0$. This result implies optimality in a worst-case. 

\begin{remark}[Lower bound for two-armed Bernoulli bandits]
\citet{Kaufman2016complexity} also presents a lower bound for two-armed Bernoulli bandits, where each reward follows Bernoulli distributions. They also discuss that for Bernoulli bandits, an optimal strategy can be approximated by a strategy that uses uniform sampling and recommends an arm with the highest sample average of observed rewards.
\end{remark}

\section{Proposed strategy: the NA-AIPW strategy}
\label{sec:track_aipw}
In this section, we define our strategy, which consists of sampling and recommendation rules. 
We define a truncation operator: for a variable $v \in \R$ and a constant $c \geq 1$, $\mathrm{thre}(v;c) := \max\{ \min\{v,c\},1/c\}$.

\subsection{NA-AIPW strategy}
First, we define a target allocation ratio, which determines the ratio of the expected number of samples of an arm that the sampling rule follows. Then, we define an optimal target allocation ratio as
\begin{align*}
w^*_1 = \frac{\sqrt{\sigma^2_1}}{\sqrt{\sigma^2_1} + \sqrt{\sigma^2_0}},\mbox{~~and~~} w^*_0 = 1 - w^*_1.
\end{align*}
A sampling rule following this allocation ratio is known as the Neyman allocation rule \citep{Neyman1934OnTT}. \citet{glynn2004large} and \citet{Kaufman2016complexity} also propose this allocation. 
This optimal target allocation ratio is unknown when the variances are unknown; therefore, to use this ratio, we need to estimate it from observations during the bandit process. 

In each $t \in [T]$, our sampling rule randomly samples an arm, following an estimated target allocation ratio; that is, we sample an arm with the probability identical to the estimated target allocation ratio (target allocation probability). In final time $T$, our recommendation rule recommends an arm with the highest estimated expected reward. 
Based on these rules, we refer to this as the NA-AIPW strategy.\footnote{Similar strategies are often used in the context of the average treatment effect estimation by an adaptive experiment \citep{Laan2008TheCA,Kato2020adaptive}.} 

\subsection{Sampling rule with NA rule using an estimated standard deviation}
We provide a sampling strategy referred to as a \textit{Neyman allocation} (NA) strategy. For $a\in\{1,0\}$ and $t\in[T]$, let $w_{a,t}$ be an estimated target allocation ratio such that $\forall a\in\{1,0\}, \ w_{a, t} > 0$ and $\sum_{a\in\{1,0\}} {w}_{a,t} = 1$. In each time $t$, we obtain $\gamma_t$ from the uniform distribution on $[0,1]$ and sample arm $A_t = 1$ if $\gamma_t \leq w_{1, t}$ and $A_t = 0$ if $\gamma_t > w_{1, t}$. 

As an initialization, we sample arm $1$ at time $1$ and arm $0$ at time $2$ and set $w_{a,1} = w_{a,2} = 1/2$ for $a\in\{1,0\}$.
In time $t > 2$, for all $a\in\{1,0\}$, we estimate the optimal target allocation ratio $w^*_a$ using past observations $\mathcal{F}_{t-1}$.
To construct $w_{a,t}$, we define a estimator of the expected reward as $\tilde{\mu}_{a, t} = ({\sum^{t-1}_{s=1}\mathbbm{1}[A_s = a]})^{-1}\sum^{t-1}_{s=1}\mathbbm{1}[A_s = a] X_{a, s}$, a second moment estimator $\tilde{\zeta}_{a, t} = ({\sum^{t-1}_{s=1}\mathbbm{1}[A_s = a]})^{-1}\sum^{t-1}_{s=1}\mathbbm{1}[A_s = a] X^2_{a, s}$, and a variance estimator $\tilde{\sigma}^2_{a,t} = \tilde{\zeta}_{a, t} - \left(\tilde{\mu}_{a, t}\right)^2$ for $t \geq 3$. 
For $t=1,2$, we set $\tilde{\mu}_{a, t} = \tilde{\zeta}_{a, t} = 0$.
Then, we estimate the variance $\sigma^2_a$ for $a\in\{1,0\}$ in time $t$ as $\hat{\sigma}^2_{a,t} = \mathrm{thre}(\tilde{\sigma}^2_{a,t};C_{\sigma^2})$ and  define
$w_{1,t}$ and $w_{0,t}$ 
as 
\begin{align}
\label{eq:sample_ave_weight}
w_{1,t} = \frac{\sqrt{\hat{\sigma}^2_{1,t}}}{\sqrt{\hat{\sigma}^2_{1,t}} + \sqrt{\hat{\sigma}^2_{0,t}}}, \quad\mathrm{and}\quad w_{0,t} = 1 - w_{1,t}.
\end{align}
We use this strategy to apply the large deviation expansion for martingales to the estimator of the expected reward, which is the core of our theoretical analysis in Section~\ref{sec:asymp_opt}.

\begin{remark}[Remark on the sampling rule]
Unlike the sampling rule of \citet{Garivier2016}, our proposed sampling rule does not sample the next arm so that the empirical allocation ratio tracks the optimal one. This is due to the use of martingale properties under the AIPW estimator in the theoretical analysis of the upper bound. 
\end{remark}

\subsection{Recommendation rule with the AIPW estimator}
We present our recommendation rule.
In the recommendation phase in time $T$, for each $a\in\{1,0\}$, we estimate $\mu_a$ for each $a\in\{1,0\}$ and recommend a arm with the bigger estimated expected reward. 
With a truncated version of the estimated expected reward $\hat{\mu}_{a, t} = \mathrm{thre}(\tilde{\mu}_{a, t}, C_\mu)$, we define the \emph{augmented inverse probability weighting} (AIPW) estimator of $\mu_a$ as
\begin{align}
\label{eq:aipw}
\hat{\mu}^{\mathrm{AIPW}}_{a, T} =\frac{1}{T} \sum^T_{t=1}\hat{X}_{a, t},\ \mathrm{where}\ \hat{X}_{a, t} = \frac{\mathbbm{1}[A_t = a]\big(X_{a, t}- \hat{\mu}_{a, t}\big)}{w_{a,t}} + \hat{\mu}_{a, t}.
\end{align}
In the end of time $t=T$, we recommend $\hat{a}_T$ as 
\begin{align}
\label{eq:recommend}
\hat{a}_T = \begin{cases}
 &  1\quad \mathrm{if}\quad \hat{\mu}^{\mathrm{AIPW}}_{1, T} \ge \hat{\mu}^{\mathrm{AIPW}}_{0, T}, 
 \\
 & 0\quad \mathrm{otherwise}.
\end{cases}
\end{align}

The AIPW estimator $\hat{\mu}^{\mathrm{AIPW}}_{a, T}$ has the following properties: (i) its components satisfy the martingale property and hence, allow us to use the large deviation principle shown in Theorem~\ref{thm:fan_refine}; (ii) it has the smallest asymptotic variance among the possible estimators, which is needed for achieving the lower bound. 
For instance, we can use other estimators with a martingale property, such as the inverse probability weighting (IPW) estimator, but their asymptotic variance will be larger than that of the AIPW estimator. On the other hand, the sample average may have the same asymptotic variance as the AIPW estimator, but we cannot apply our large deviation bound. See also Section~\ref{sec:aipw_est}. 

We show the pseudo-code in Algorithm~\ref{alg}. We note that $C_{\mu}$ and $C_{\sigma^2}$ are introduced for technical purposes to make the estimators bounded. Therefore, we can use any large positive value. 

\begin{remark}[Remark on the recommendation rule]
We estimated the expected reward using the AIPW estimator, which has the smallest variance among the estimators. The use of the AIPW estimator is necessary from a theoretical perspective, but we conjecture that it could be replaced by a sample average estimator in the future. See Section~\ref{sec:aipw_est} for more detailed discussion.
\end{remark}
\begin{remark}[Sampling for stabilization] In the pseudo-code,  only the first two times are used for initialization. To stabilize the performance, we can increase the number of samplings in initialization. This is a similar idea to the forced-sampling \citep{Garivier2016}. In Section~\ref{sec:asymp_opt}, to show the asymptotic optimality, we use almost sure convergence of $w_{a,t}$ to $w^*_a$. As long as $w_{a,t} \xrightarrow{\mathrm{a.s}} w^*_a$, we can construct $w_{a,t}$. For instance, we can use $\tilde{w}_{a,t} = (1-r_t)w_{a,t} + r_t 1/2$ as the sampling probability instead of $w_{a,t}$, where $r_t \to 0$ as $t\to \infty$. 
\end{remark}
\begin{remark}[The role of $C_{\sigma^2}$] 
Assumption~\ref{asm:bounded_mean_variance} implies that the sampling probability is bounded by a small constant, $1/(2C_{\sigma^2}) \leq w^*_a \leq C_{\sigma^2}/2$. We assume Assumption~\ref{asm:bounded_mean_variance} so that the variance of the AIPW estimator is finite. Thus, the role of this constant looks similar to the forced sampling \citep{Garivier2016}, but different from it. We can set $C_{\sigma^2}$ sufficiently large so that it is almost negligible in implementation. 
\end{remark}

\begin{algorithm}[tb]
   \caption{NA-AIPW strategy}
   \label{alg}
\begin{algorithmic}
   \STATE {\bfseries Parameter:} Positive constants $C_{\mu}$ and $C_{\sigma^2}$.
   \STATE {\bfseries Initialization:} 
   \STATE At $t=1$, sample $A_t=1$; at $t=2$, sample $A_t=0$. For $a\in\{1,0\}$, set $w_{a, 1} = w_{a, 2} = 0.5$.
   \FOR{$t=3$ to $T$}
   \STATE Construct $w_{a, t}$ following \eqref{eq:sample_ave_weight}.
   \STATE Sample $\gamma_t$ from the uniform distribution on $[0,1]$. 
   \STATE $A_t = 1$ if $\gamma_t \leq w_{1, t}$; $A_t = 0$ if $\gamma_t > w_{1, t}$. 
   \ENDFOR
   \STATE Construct $\hat{\mu}^{\mathrm{AIPW}}_{a, T}$ for $a\in\{1,0\}$. following \eqref{eq:aipw}.
   \STATE Recommend $\hat{a}_T$ following \eqref{eq:recommend}.
\end{algorithmic}
\end{algorithm} 

\section{Asymptotic optimality of the NA-AIPW strategy}
\label{sec:asymp_opt}
In this section, we show the following upper bound of the misspecification probability of the NA-AIPW strategy, which also implies that the strategy is asymptotically optimal. We show the upper bounds for bandit models, where the rewards are sub-exponential random variables. We assume the following convergence rate assumptions on estimators of $\mu_a$ and $w^*_a$. 
\begin{assumption}
\label{asm:conv_nuisance}
For each $a\in \{1, 0\}$ and some constant $\alpha > 0$, with probability one,
\begin{align*}
& \lim_{t\to \infty} t^{\alpha}\left(\hat{\mu}_{a, t} - \mu_a\right) = 0,\ \ \  \lim_{t\to \infty} t^{\alpha}\left(w_{a, t} - w^*_a\right) = 0.
\end{align*}
\end{assumption}
Then, the upper bound is given as follows.
\begin{theorem}[Upper bound of the NA-AIPW strategy]\label{thm:optimal_2arm}
Suppose that Assumptions~\ref{asm:bounded_mean_variance} and \ref{asm:conv_nuisance} hold. 
Then, there exist positive constants $C_0, C_1 > 0$, which depends on $ C_\mu, C_{\sigma^2}$, such that for any $\nu \in \mathcal{M}$, 
$\sup_{t \in \mathbb{N}} \mathbb{E}_{\nu}\left[\exp\left(C_0  \left|\frac{\hat{X}_{1, t} -\hat{X}_{0, t} -  \Delta}{\big(\sqrt{\sigma^2_1} + \sqrt{\sigma^2_0}\big)^2}\right|\right) \;| \;\set{F}_{t-1}\right] \le C_1$. 
Then, for any $\nu \in \mathcal{M}$ such that $ 0 < \Delta \leq \left(\sqrt{\sigma^2_1} + \sqrt{\sigma^2_0}\right) \min\{ C_0/4, \sqrt{3 C_0^2/(8 C_1})\}$,
under the NA-AIPW strategy, 
\begin{align*}
    \liminf_{T \to \infty} - \frac{1}{T}\log \Pbb_\nu\left(\hat{\mu}^{\mathrm{AIPW}}_{1, T} < \hat{\mu}^{\mathrm{AIPW}}_{0, T}\right) 
\ge \frac{\Delta^2}{2 \big(\sqrt{\sigma^2_1} + \sqrt{\sigma^2_0}\big)^2} - c 
\left(\left(\frac{\Delta}{\sqrt{\sigma^2_1} + \sqrt{\sigma^2_0}}\right)^3 + \left(\frac{\Delta}{\sqrt{\sigma^2_1} + \sqrt{\sigma^2_0}}\right)^4 \right),
\end{align*}
where $c > 0$ is a constant depending on $C_0$ and $C_1$.
\end{theorem}
This theorem allows us to evaluate the exponentially small probability of misidentification up to the constant term when $\Delta\to 0$. 
From this theorem, we can obtain the following corollary, which states the optimality of our proposed strategy in the small gap.
\begin{corollary}[Asymptotic optimality of the NA-AIPW strategy]\label{cor:optimal_2arm}
Suppose that Assumptions~\ref{asm:bounded_mean_variance} and \ref{asm:conv_nuisance} hold. Then, for any $\nu \in \mathcal{M}$, under the NA-AIPW strategy, as $\Delta \to 0$,
\begin{align*}
    \liminf_{T \to \infty} - \frac{1}{\Delta^2T}\log \Pbb_\nu\left(\hat{\mu}^{\mathrm{AIPW}}_{1, T} < \hat{\mu}^{\mathrm{AIPW}}_{0, T}\right) 
\ge \frac{1}{2 \big(\sqrt{\sigma^2_1} + \sqrt{\sigma^2_0}\big)^2} - o(1).
\end{align*}
\end{corollary}

Using this corollary, when bandit models follow the Gaussian bandit models, we can show that the upper bound in Theorem~\ref{thm:optimal_2arm} matches the lower bound in Proposition~\ref{prp:lowerbound_2arms} as $\Delta \to 0$. When $\Delta \to 0$, there is no strategy under which the probability of misidentification is lower than that under the NA-AIW strategy. In other words, this result implies that the NA-AIPW estimator is asymptotically optimal in the sense that the probability of misidentification matches one of a worst-case lower bound under the small gap. 

Moreover, this result also implies that the estimation error of the optimal target allocation ratio $w^*_a$ is negligible when $\Delta\to 0$. As a result, the upper bound matches the performance of the strategies of \citet{glynn2004large} given the optimal target allocation ratio; that is, the true standard deviations are known. 
This also means that under the small gap, the estimation error of the optimal target allocation ratio does not affect the probability of misidentification.

The remaining part of this section provides proof of Theorem~\ref{thm:optimal_2arm}. 

\subsection{Cram\'er's large deviation expansions for the AIPW estimator}
Owing to the non-stationary adaptive process in BAI, it is also difficult to apply the standard large deviation bound \citep{Dembo2009large} to the simple sample average. 
For example, G\"{a}rtner-Ellis theorem \citep{Gartner1977,Ellis1984} provides a large deviation bound for dependent samples, but it requires the existence of the logarithmic moment generating function, which is not easy to be guaranteed for the samples in BAI. 

For these problems, we derive a novel Cram\'er large deviation bounds for martingales by extending the results of \citet{Grama2000} and \citet{Fan2013,fan2014generalization}. Note that their original large deviation bound is only applicable to martingales whose conditional second moment is bounded by any accuracy deterministically in advance. However, in BAI, we can only bound the conditional second moment by any accuracy in a given random path. This randomness prevents us from applying the original results of \citet{Grama2000} and  \citet{Fan2013,fan2014generalization}. 
In the following arguments, we show a large deviation bound for martingales given by the BAI strategy under the mean convergence of the unconditional second moment. Using the novel bound and the AIPW estimator, as the gap $\Delta$ goes to zero, the performance guarantee matches the lower bound.

Following \citet{Fan2013,fan2014generalization}, we derive a large deviation bound for sub-exponential random variable, which is more general than Gaussian distributions. 
\begin{assumption}
\label{asm:sub_exp}
For all $\nu\in\mathcal{M}$ and $a\in\{1,0\}$, $X_{a,t}$ is a sub-exponential random variable.
\end{assumption}

Here, we introduce key elements of our analysis. For each $t\in[T]$, we define the difference variable
\begin{align*}
\xi_{t}  = \frac{\hat{X}_{1, t} -\hat{X}_{0, t} -  (\mu_1 - \mu_0)}{\sqrt{T \left(\sigma^2_1/w_1^* + \sigma^2_0/w^*_0\right)}}
= \frac{\hat{X}_{1, t} -\hat{X}_{0, t} - \Delta}{\sqrt{T} \tilde{\sigma}},\; \mathrm{where}\; \tilde{\sigma} = \sqrt{\frac{\sigma^2_1}{w_1^*}  + \frac{\sigma^2_0}{w^*_0}}.
\end{align*}
We also define its sum $Z_t = \sum^t_{s=1}\xi_{s}$, and a sum of conditional moments $W_t = \sum^t_{s=1}\mathbb{E}_{\nu}[\xi^2_{s}| \mathcal{F}_{s-1}]$ with initialization $W_0 = 0$. By using the difference variable $\xi_{t}$, we can rewrite the estimator of the gap as $\sqrt{T}(\hat{\mu}^{\mathrm{AIPW}}_{1, T} - \hat{\mu}^{\mathrm{AIPW}}_{0, T} - \Delta) / \tilde{\sigma} = \sum^T_{t=1}\xi_{t} = Z_T$. Here, $\left\{\left(\xi_{t}, \mathcal{F}_t\right)\right\}^T_{t=1}$ is a martingale difference sequence (Appendix~\ref{appdx:martingale}), 
using the fact that $\hat{\mu}_{a,t}$ and $w_{a,t}$ are $\mathcal{F}_{t-1}$-measurable random variables.
Let us also define $V_T = \mathbb{E}_\nu [ | \sum_{t=1}^T \Ebbnu[\xi_t^2 | \Ftminone] -1 |]$
and denote the cumulative distribution function of the standard normal distribution by $\Phi(x) = ({\sqrt{2\pi}})^{-1} \int_{-\infty}^x \exp(- {t^2} / {2})\mathrm{d}t$. 
We have the following theorem on the tail probability of $Z_T= \sum^T_{t=1}\xi_{t}$. 

\begin{theorem}
\label{thm:fan_refine}
Suppose that Assumptions~\ref{asm:bounded_mean_variance}, \ref{asm:conv_nuisance}, and \ref{asm:sub_exp},  the following condition hold:\\
Condition~A: $\sup_{t\in\mathbb{N}}\mathbb{E}_{\nu}[\exp(C_0 \sqrt{T}|\xi_{t}|) \;|\; \set{F}_{t-1}]\leq C_1$ for some positive constants $C_0,C_1$.\\
Then, for some $\varepsilon > 0$ and $\alpha > 0$, there exist constants $T_0, c_1, c_2>0$ such that, for all $T\geq T_0$ and $1\leq u \leq \sqrt{T}\min\{ C_0/4, \sqrt{{3 C_0^2} / ({8 C_1})}\}$, 
\mkato{
\begin{align*}
\frac{\mathbb{P}_{\nu}\left(Z_T \leq - u\right)}{\Phi(-u)} &  \le c_1 u \exp\left(c_2\left( \frac{u^3}{\sqrt{T}} + \frac{u^4}{T} + u^2 (V_T + \varepsilon / \{T^{\alpha}(1 - \alpha)\} ) + T_0\right)  \right),
\end{align*}
}
where the constants $c_1,c_2$ depend on $C_0$ and $C_1$ but do not depend on $\{(\xi_t, \mathcal{F}_t)\}^T_{t=1}$, $u$, and the bandit model $\nu$, and $\alpha > 0$ is some constant depending on the convergence rate of $\hat{\mu}_{a,t}$ and $w_{a,t}$. 
\end{theorem}
Here, as explained by \citet{fan2014generalization}, if $T\mathbb{E}[\xi^2_t|\mathcal{F}_{t-1}]$ are all bounded from below by a positive constant, Condition~A implies the conditional Bernstein condition: for a positive constant $C$, $|\mathbb{E}[\xi^k|\mathcal{F}_{t-1}]| \leq \frac{1}{2} k!(C/\sqrt{T})^{k-2}\mathbb{E}[\xi^2_t|\mathcal{F}_{t-1}]$ for all $k\geq 2$ and all $t\in[T]$. Thus, the condition is sufficiently general and holds for our strategy under appropriate assumptions. 

Here, we provide the proof sketch of Theorem~\ref{thm:fan_refine}. The formal proof is shown in Appendix~\ref{appdx:proof_large_deviation}. 
\begin{proof}[Proof sketch of Theorem~\ref{thm:fan_refine}.] Let us define $r_t(\lambda ) = \exp(\lambda \xi_t)/\mathbb{E}[\exp(\lambda \xi_t)]$. Then, we  apply the change-of-measure in \cite{Fan2013, fan2014generalization} to transform the bound. In \citet{Fan2013,fan2014generalization}, the proof is complete up to this procedure. However in our case, unlike their settings, the second moment is also a random variable. Because of the randomness, there remains a term $\mathbb{E}[\exp(\overline{\lambda}(u)\sum^T_{t=1} \xi_t)]/(\prod^T_{t=1}\mathbb{E}[\exp(\overline{\lambda}(u) \xi_t)])$, where $\overline{\lambda}(u)$ is some positive function of $u$. Therefore, we next consider the bound of the conditional second moment of $\xi_t$ to apply $L^r$-convergence theorem (Proposition~\ref{prp:lr_conv_theorem}). With some computation, we complete the proof.
\end{proof}

From Theorem~\ref{thm:fan_refine}, for $u = \sqrt{T} { \Delta} / {\tilde{\sigma}}$ and  $\mathbb{P}_{\nu}(Z_T \leq - \sqrt{T} { \Delta} / {\tilde{\sigma}})  = \mathbb{P}_{\nu}( \sum_{t =1}^T ({\hat{X}_{1,t} - \hat{X}_{0,t} - \Delta})/({\sqrt{T} \tilde{\sigma}} )\le - \sqrt{T} { \Delta} / {\tilde{\sigma}} )
= \mathbb{P}_{\nu}(\hat{\mu}^{\mathrm{AIPW}}_{1, T} \le  \hat{\mu}^{\mathrm{AIPW}}_{0, T})$,
the probability that we fail to make the correct arm comparison is bounded as 
\begin{align*}
    \frac{\mathbb{P}_{\nu}\left(\hat{\mu}^{\mathrm{AIPW}}_{1, T} \le \hat{\mu}^{\mathrm{AIPW}}_{0, T} \right)}{\Phi(-\frac{\sqrt{T} \Delta}{\tilde{\sigma}})} \le c_1 \sqrt{T} \frac{ \Delta}{\tilde{\sigma}} \exp\left(c_2 T \left( \left(\frac{\Delta}{\tilde{\sigma}}\right)^3 +\left(\frac{\Delta}{\tilde{\sigma}}\right)^4 +\left(\frac{\Delta}{\tilde{\sigma}}\right)^2(V_T + \varepsilon  / \{T^{\alpha}(1 - \alpha)\})\right) + c_2T_0  \right).
\end{align*}

\subsection{Gaussian approximation under a small gap}
Finally, we consider an approximation of the large deviation bound. Here, $\Phi(-u)$ is bounded as $\frac{1}{\sqrt{2 \pi} (1 + u)} \exp(- \frac{u^2}{2})\le \Phi(-u) \le \frac{1}{\sqrt{\pi} (1 + u)} \exp( - \frac{u^2}{2}),\; u\ge 0$ (see  \citet[Section 2.2.,][]{Fan2013}).
\mkato{By combining this bound with Theorem~\ref{thm:fan_refine} and Proposition~\ref{prp:rate_clt} in Appendix~\ref{appdx:prelim}, which shows the rate of convergence in the CLT for $0 \leq u \leq 1$, we have the following corollary.}
\begin{corollary}
\label{thm:fan_refine2}
Suppose that  Assumptions~~\ref{asm:bounded_mean_variance}, \ref{asm:conv_nuisance}, and \ref{asm:sub_exp}, Condition~A in Theorem~\ref{thm:fan_refine}, and the following conditions hold:\\
Condition~B: ${ \Gap} / {\tilde{\sigma}} \le \min\{ C_0/4, \sqrt{3 C_0^2/(8 C_1})\}$;~~~ Condition~C: $\lim_{T \to \infty}V_T = 0$.\\
Then, there exists a constant $c>0$ such that
\begin{align*}
    \liminf_{T \to \infty} - \frac{1}{T}\log\mathbb{P}_{\nu}\left(\hat{\mu}^{\mathrm{AIPW}}_{1, T} \le \hat{\mu}^{\mathrm{AIPW}}_{0, T} \right)  \ge  \frac{\Delta^2}{2\tilde{\sigma}^2} - c 
\left(\left(\frac{\Delta}{\tilde{\sigma}}\right)^3 + \left(\frac{\Delta}{\tilde{\sigma}}\right)^4 \right).
\end{align*}
\end{corollary}
This approximation can be thought of as a Gaussian approximation because the probability
is represented by $\exp(-{\Delta^2} T / ({2 \tilde{\sigma}^2}))$. 
Condition~B is satisfied as $\Delta \to 0$. 
To use Corollary~\ref{thm:fan_refine2}, we need to show that Conditions~A and C hold.  
First, the following lemma states that Condition~A holds with the constants $C_0$ and $C_1$, which are universal to the problems in $\mathcal{M}$.
\begin{lemma}
\label{lem:condition1}
Suppose that Assumptions~\ref{asm:bounded_mean_variance} and ~\ref{asm:sub_exp} and hold. For any $\nu \in \mathcal{M}$ and each $C_0 \ge 0$, there exists a positive constant $C_1$, which depends on $ C_0, C_\mu, C_{\sigma^2}$, such that 
$\sup_{t\in\mathbb{N}} \mathbb{E}_{\nu}[\exp(C_0 \sqrt{T} |\xi_{t}|) \;| \;\set{F}_{t-1}] \le C_1$.
\end{lemma}
Next, regarding Condition~C, we introduce the following lemma for the convergence of $V_T$, which also means the mean convergence of the variance of the AIPW estimator scaled with $\sqrt{T}$. 
\begin{lemma}
\label{lem:condition2}
 Suppose that Assumptions~\ref{asm:bounded_mean_variance}, \ref{asm:conv_nuisance}, and \ref{asm:sub_exp} hold. For any $\nu \in \mathcal{M}$, $\lim_{T \to \infty}T^\alpha V_T = 0$; that is, for any $\delta > 0$, there exists $T_0$ such that for all $T>T_0$, $T^\alpha\mathbb{E}_\nu [| \sum_{t=1}^T \Ebbnu[\xi_t^2 | \Ftminone] - 1 |] \leq \delta.$
\end{lemma}

The proofs of Lemma~\ref{lem:condition1} and Lemma~\ref{lem:condition2} are shown in Appendix~\ref{appdx:lem:condition1} and \ref{appdx:lem:condition2}, respectively. 

\begin{remark}[Central limit theorem (CLT)]
Note that the CLT cannot provide the exponentially small evaluation of the probability of misidentification. It gives an approximation around $1/\sqrt{T}$ of the expected reward, but we are interested in evaluation with constant deviation from the expected reward. 
\end{remark}

\section{Experiments}
\label{sec:experiments}
We compare the proposed NA-AIPW (NA) strategy with the alpha-elimination \citep[Alpha,][]{Kaufman2016complexity} and uniform sampling strategies (Uniform). The alpha-elimination strategy is an oracle strategy, which assumes that  the variances are known. See \citet{Kaufman2016complexity} for more details of the alpha-elimination strategy. In the uniform sampling strategy, we sample arm $1$ if the time is odd and sample $0$ otherwise; finally, we recommend an arm with the highest average reward defined as $\tilde{\mu}_{a,T+1}$ in Section~\ref{sec:track_aipw}. Besides, we also investigate the performances of the following three additional strategies :\\
    \textbf{NA-DR (ND) strategy.} We replace the AIPW estimator in the NA-AIPW strategy with the DR estimator defined as $\hat{\mu}^{\mathrm{DR}}_{a, T} =\frac{1}{T} \sum^T_{t=1}\hat{X}^\dagger_{a, t}$, where $\hat{X}^\dagger_{a,t} = \frac{\mathbbm{1}[A_t = a]\big(X_{a, t}- \hat{\mu}_{a, t}\big)}{\frac{1}{t-1}\sum^{t-1}_{s=1}\mathbbm{1}[A_s = a]} + \hat{\mu}_{a, t}$.\\
    \textbf{NA-IPW (NI) strategy.} We replace the AIPW estimator in the NA-AIPW strategy with the inverse probability weighting (IPW) estimator defined as $\hat{\mu}^{\mathrm{IPW}}_{a, T} =\frac{1}{T} \sum^T_{t=1}\hat{X}^\diamond_{a, t}$, where $\hat{X}^\diamond_{a,t} = \frac{\mathbbm{1}[A_t = a]X_{a, t}}{w_{a,t}}$.\\
    \textbf{NA-SA (NS) strategy.} We replace the AIPW estimator in the NA-AIPW strategy with the simple sample average defined as $\hat{\mu}_{a, T}^{\mathrm{SA}} = \frac{1}{\sum^{T}_{t=1}\mathbbm{1}[A_t = a]}\sum^{T}_{t=1}\mathbbm{1}[A_t = a] X_{a, t}$.\\
The DR estimator replaces the allocation probability $w_{a,t}$ with its estimator. 
The IPW estimator does not use the adjustment term $\hat{\mu}_{a,t}$. 
The sample average corresponds to the IPW estimator whose allocation probability is replaced with its estimator $\frac{1}{T}\sum^{T}_{t=1}\mathbbm{1}[A_t = a]$ because $\hat{\mu}_{a, T}^{\mathrm{SA}}$ can be rewritten as $\hat{\mu}_{a, T}^{\mathrm{SA}} =  \frac{1}{T}\sum^{T}_{t=1}\frac{\mathbbm{1}[A_t = a] X_{a, t}}{\frac{1}{T}\sum^{T}_{t=1}\mathbbm{1}[A_t = a]}$. 

Besides, to stabilize the allocation probability $w_{a,t}$, instead of directly using $w_{a,t}$, we use $\tilde{w}_{a,t} = r_t \frac{1}{2} + (1-r_t) w_{a,t}$ 
for $r_t$ such that $r_t \to 0$ as $t\to\infty$. 
This prevents $w_{a,t}$ from being some extreme value. Note that $w^\gamma_{a,t} \xrightarrow{\mathrm{a.s.}} w^*_{a}$ if $w_{a,t} \xrightarrow{\mathrm{a.s.}} w^*_{a}$. 
In this experiments, we use $w^\gamma_{a,t}$ with $\gamma_t = \frac{1}{\sqrt{t}}$ and $100$ initialization times for the NA-AIPW, NA-DR, NA-IPW, and NA-SA strategies.

We consider the following eight sample scenarios: (1) $\mu_1 = 0.05$, $\mu_0 = 0.01$, $\sigma^2_1 = 1$, and $\sigma^2_0 = 0.2$; (2) $\mu_1 = 0.05$, $\mu_0 = 0.01$, $\sigma^2_1 = 1$, and $\sigma^2_0 = 0.1$; (3) $\mu_1 = 0.05$, $\mu_0 = 0.03$, $\sigma^2_1 = 1$, and $\sigma^2_0 = 0.2$; (4) $\mu_1 = 0.05$, $\mu_0 = 0.01$, $\sigma^2_1 = 1$, and $\sigma^2_0 = 0.1$; (5) $\mu_1 = 0.8$, $\mu_0 = 0.75$, $\sigma^2_1 = 5$, and $\sigma^2_0 = 3$; (6) $\mu_1 = 0.8$, $\mu_0 = 0.75$, $\sigma^2_1 = 5$, and $\sigma^2_0 = 1$; (7) $\mu_1 = 0.8$, $\mu_0 = 0.79$, $\sigma^2_1 = 5$, and $\sigma^2_0 = 3$; (8) $\mu_1 = 0.8$, $\mu_0 = 0.79$, $\sigma^2_1 = 5$, and $\sigma^2_0 = 1$. We set $T = 1,000$ and ran $10,000$ independent trials for each setting. Although we set $T = 1,000$, we save the recommended arm for each $t\in\{1,2,\dots, 10000\}$. Then, by taking the average over $10,000$ independent trials, we compute the empirical probability of identification for $t\in\{1,2,\dots, 3000\}$.  The results of Scenarios~1 and 2 are shown in Figure~\ref{fig:synthetic_results}. The other results are shown in Appendix~\ref{sec:appdx_experiments}.

We can confirm that the NA-AIPW (NA), NA-DR (ND), and NA-SA (NS) strategies perform as well as the alpha-elimination. On the other hand, the uniform sampling and NA-IPW strategies show sub-optimal results. The sub-optimal performance of the NA-IPW strategy can be considered that the IPW estimator has a larger variance than that of the AIPW estimator; in fact, the upper bound of the NA-IPW strategy does not match the lower bound, as will be shown in the following. 

As well as Lemma~\ref{lem:condition2}, we can show the following corollary. If we use the NA-IPW strategy, we cannot achieve the lower bound owing to the variance $\kappa$, which is different from that of Lemma~\ref{lem:condition2}.
\begin{corollary}
\label{lem:condition_ipw}
Under Assumptions~\ref{asm:bounded_mean_variance} and \ref{asm:sub_exp}, for any $\delta > 0$, there exists $T_0$ such that for all $T>T_0$, $\mathbb{E}[| \frac{1}{T \kappa}\sum^T_{t=1}\mathbb{E}_{\nu}[(\hat{X}_{1, t} - \hat{X}_{0, t}- \Gap)^2| \mathcal{F}_{t-1}] - 1 | ] \le \delta$,  where
$\kappa = {\mathbb{E}_{\nu}[X^2_{1,t}]} / {w^*_1} + {\mathbb{E}_{\nu}[X^2_{0,t}]} / {w^*_0}$.
\end{corollary}

We conjecture that the reason why the NA-DR strategy performs well is that the re-estimated allocation probability mitigates the fluctuation of the allocation probability $w_{a,t}$. A similar phenomenon is reported by \citet{Kato2021adr} in the context of off-policy evaluation as a paradox because even if we know the true allocation probability, replacing it with an estimator improves the performance. 

We also conjecture that the NA-SA strategy also has the same asymptotically optimal properties as our proposed NA-AIPW strategy. See the discussion in Section~\ref{sec:aipw_est}. 

\begin{figure}[t]
    \begin{tabular}{cc}
      \begin{minipage}[t]{0.45\hsize}
        \centering
        \includegraphics[width=60mm]{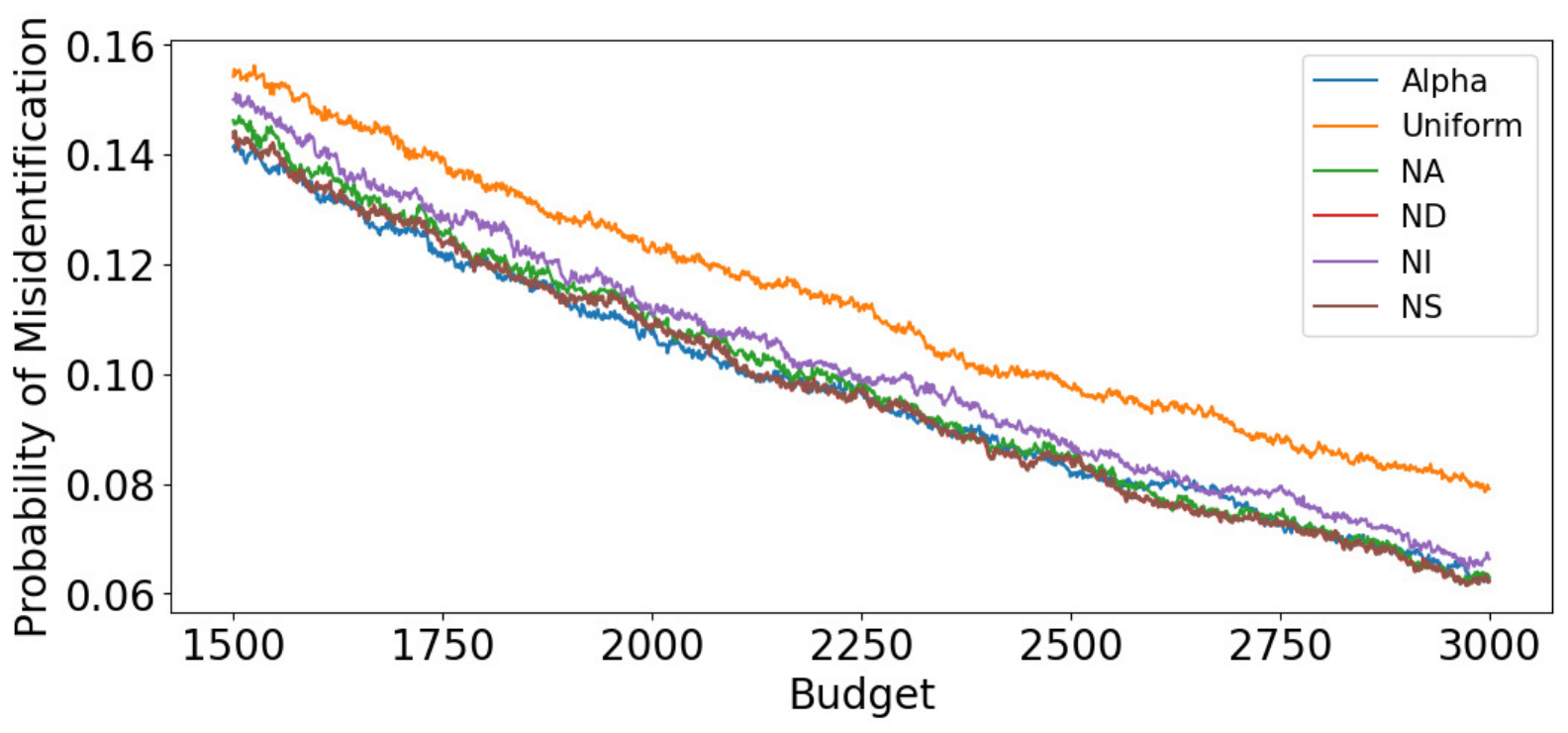}
        \caption*{Scenario~1}
      \end{minipage} &
      \begin{minipage}[t]{0.45\hsize}
        \centering
        \includegraphics[width=60mm]{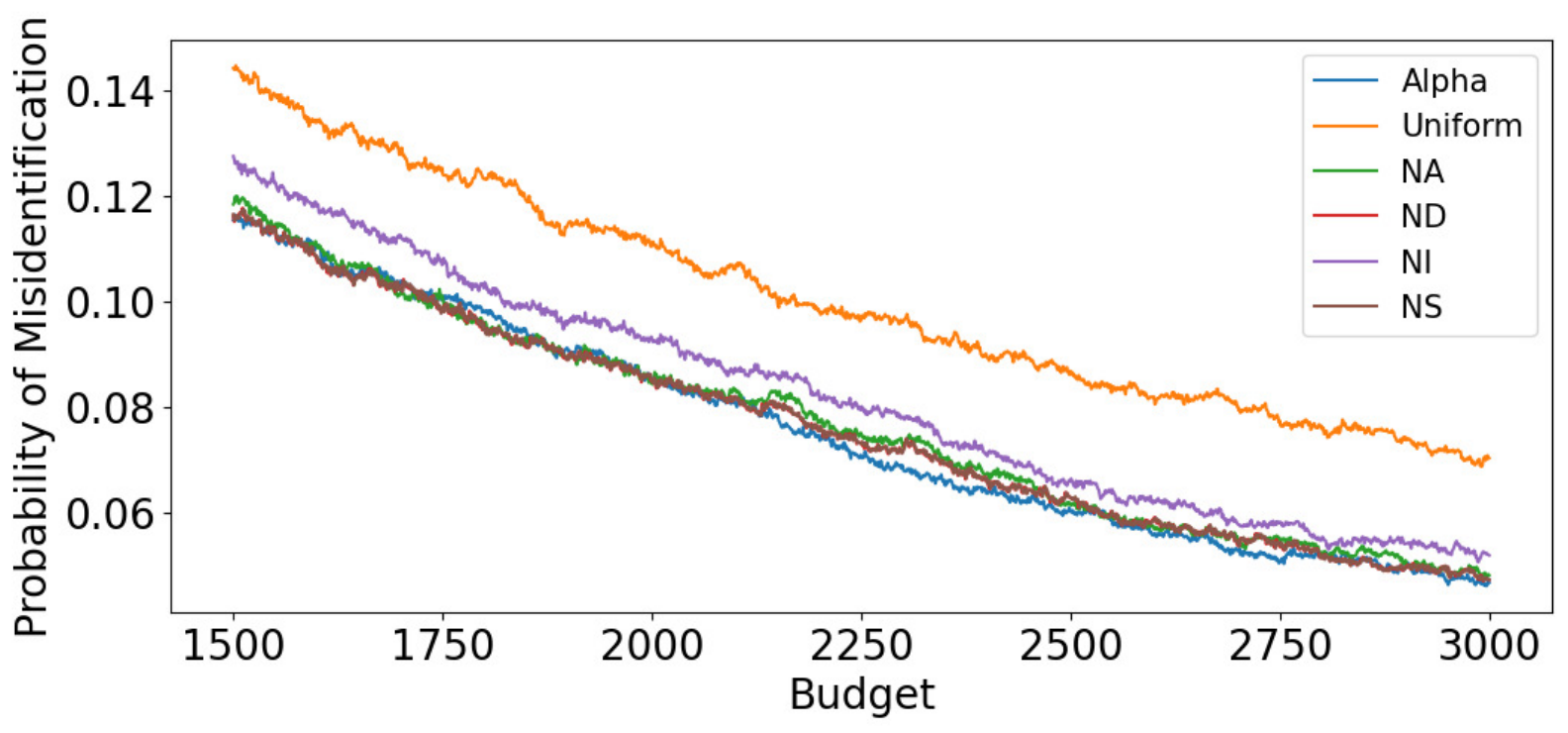}
        \caption*{Scenario~2}
      \end{minipage} 
    \end{tabular}
    \caption{Results of the two-armed Gaussian bandits. We compute the empirical probability of misidentification. Note that the result of ND is overlapped with that of the NS.}
\label{fig:synthetic_results}
\vspace{-0.5cm}
\end{figure}

\section{Discussion and related work} 
\label{sec:related}
This section presents discussions and related works. More details are shown in Appendix~\ref{sec:related2}.

\subsection{Arguments on the asymptotic optimality of BAI with a fixed budget}
\citet{Kaufman2016complexity} derives distribution-dependent lower bounds for BAI with fixed confidence and a fixed budget, based on the change-of-measure arguments as well as \citep{Lai1985}. In BAI with fixed confidence, \citet{Garivier2016} provides a strategy whose upper bound matches the lower bound. In contrast, in the fixed-budget setting, it was unclear whether there was a strategy whose upper bound matches the lower bound of \citet{Kaufman2016complexity}. We consider that this is because the estimation error of an optimal target allocation ratio is negligible in BAI with a fixed budget, unlike BAI with fixed confidence, where we can sample each arm until the strategy satisfies a condition. Besides, there are lower bounds different from \citet{Kaufman2016complexity}, such as \citet{Audibert2010}, \citet{Bubeck2011}, and \citet{Carpentier2016}. \mkato{Recently, \citet{Russo2016}, \citet{Qin2017}, \citet{Shang2020}, and \citet{Jourdan2022} propose the Bayesian BAI methods, which are optimal in the sense of the convergence rate of the posterior. Although some of their methods are shown to achieve the lower bounds of \citet{Kaufman2016complexity} for fixed-confidence BAI, their methods also do not achieve that for fixed-budget BAI. Although the posterior convergence is also optimal in the fixed-budget setting, it does not imply the optimality of the probability of misidentification. For this problem, also see \citet{Kasy2021} and \citet{Ariu2021}.} Another literature is ordinal optimization in the operation research community \citep{Dohyun2021,chen2000,glynn2004large}. Most of those studies have considered the estimation of the allocation ratio separately from the error rate under known optimal target allocation ratios. There are slaos several studies using local asymptotics to discuss the asymptotic optimality of the Neyman allocation rule in this problem \citep{Armstrong2022,adusumilli2022minimax} 

\subsection{The AIPW, IPW, and sample average Estimators}
\label{sec:aipw_est}
The key ingredient of our analysis is the AIPW estimator, which consists of a martingale difference sequence and has the minimum asymptotic variance. By focusing on the martingale difference sequence, we address the dependence among observations. Then, we derive the large deviation bound. Our large deviation bound also can be applied to the IPW estimator; however, in this case, the upper bound may not match the lower bound. This is because the asymptotic variance of the AIPW estimator is known to be smaller than that of the IPW estimator but the same as the sample average when there is no selection bias \citep{Hahn2011}. Note that the minimum variance property of the AIPW estimator is derived from the efficient influence function \citep{hahn1998role,Tsiatis2007semiparametric}.

We also conjecture that we can show the asymptotic optimality of strategies using the naive sample average estimator in the recommendation rule, although we do not show it in this study. The reason why is because \citet{Hahn2011} shows that when using the CLT, the AIPW and sample average estimators have the same asymptotic distribution in a setting similar to BAI with a fixed budget. According to their analysis and our experimental results, we also consider that these have the same large deviation property. This conjecture also implies that the NA-DR and NA-SA strategies are also asymptotically optimal as our proposed NA-AIPW strategy. However, since we cannot use the large deviation bound for martingales to investigate the asymptotic properties of the NA-DR and NA-SA strategies, the analysis becomes difficult owing to the sample dependency. The results of \citet{hirano2003} and \citet{Hahn2011} may be helpful for solving this problem. \citet{Hahn2011} shows the asymptotic distributions of the sample average by going through that of the AIPW estimator. 

\subsection{Extension to BAI in the multi-armed bandit (MAB) problems}
Unlike two-armed bandit problems and BAI with fixed confidence, the lower bounds are also still unknown for the MAB problems. One of the main reasons is that the KL-divergence is reversed. Under our small gap framework, we can consider two approaches. First, assuming the gap between the top two arms is small, we may directly extend our results. Similarly, the second direction is to assume that the gap among all arms is small; a similar approach could be used to realize proof in the multi-armed case, although we may need more assumptions.

\section{Conclusion}
\label{sec:conclusion}
We characterized the optimal probability of misidentification for BAI with a fixed budget when the gap between the expected rewards is small. 
With the help of a new large deviation expansion we developed, we showed that the performance of our proposed NA-AIPW strategy matches the lower bound under a small gap. Although there are several remaining open questions, our result provides insight into long-standing open problems in BAI. Furthermore, we can extend our results to BAI with multiple arms and contextual information, such as \citep{Kato2021Role,Russac2021,QinRusso2022,Kato2022semipara}.

\bibliographystyle{icml2022}
\bibliography{TGBAI.bbl}

\clearpage 

\appendix

\tableofcontents

\section{Preliminaries}
\label{appdx:prelim}

\begin{definition}\label{dfn:uniint}[Uniformly integrable, \citet{Hamilton1994}, p.~191]  A sequence $\{A_t\}$ is said to be uniformly integrable if for every $\epsilon > 0$ there exists a number $c>0$ such that 
\begin{align*}
\mathbb{E}_{\nu}[|A_t|\cdot I[|A_t| \geq c]] < \epsilon
\end{align*}
for all $t$.
\end{definition}
The following proposition is from \citet{Hamilton1994}, Proposition~7.7, p.~191.
\begin{proposition}[Sufficient conditions for uniform integrability]\label{prp:suff_uniint}  (a) Suppose there exist $r>1$ and $M<\infty$ such that $\mathbb{E}_{\nu}[|A_t|^r]<M$ for all $t$. Then $\{A_t\}$ is uniformly integrable. (b) Suppose there exist $r>1$ and $M < \infty$ such that $\mathbb{E}_{\nu}[|b_t|^r]<M$ for all $t$. If $A_t = \sum^\infty_{j=-\infty}h_jb_{t-j}$ with $\sum^\infty_{j=-\infty}|h_j|<\infty$, then $\{A_t\}$ is uniformly integrable.
\end{proposition}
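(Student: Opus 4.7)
The plan is to prove the two parts separately, treating part (a) as the core estimate and then bootstrapping part (b) from it via a uniform $L^r$-bound. Since this is a standard textbook fact, the interest lies in the clean structure rather than any deep technique.

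For part (a), my approach is a direct Markov-type argument. The key pointwise observation is that on the event $\{|A_t|\geq c\}$ one has $1 \leq (|A_t|/c)^{r-1}$ (using $r>1$), hence $|A_t| \leq |A_t|^r / c^{r-1}$. Integrating and applying the uniform hypothesis gives
\[
\mathbb{E}_{\nu}\!\bigl[|A_t|\,\mathbbm{1}[|A_t|\geq c]\bigr] \;\leq\; \frac{\mathbb{E}_{\nu}[|A_t|^r]}{c^{r-1}} \;\leq\; \frac{M}{c^{r-1}},
\]
uniformly in $t$. Since the right-hand side tends to zero as $c\to\infty$, for any prescribed $\epsilon>0$ choosing $c>(M/\epsilon)^{1/(r-1)}$ verifies Definition~\ref{dfn:uniint}.

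For part (b), the strategy is to show that $\{A_t\}$ inherits a uniform $L^r$-bound from $\{b_t\}$, after which part (a) applies directly. By Minkowski's inequality extended to (absolutely convergent) infinite sums,
\[
\bigl(\mathbb{E}_{\nu}[|A_t|^r]\bigr)^{1/r} \;\leq\; \sum_{j=-\infty}^\infty |h_j|\,\bigl(\mathbb{E}_{\nu}[|b_{t-j}|^r]\bigr)^{1/r} \;\leq\; M^{1/r}\sum_{j=-\infty}^\infty |h_j|,
\]
which is finite and independent of $t$. Setting $M' := M\bigl(\sum_j |h_j|\bigr)^r$ and invoking part (a) with the pair $(A_t, M')$ yields uniform integrability.

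The main technical point is the rigorous justification of Minkowski's inequality for the infinite sum. I would handle this via the standard truncation: apply the finite-sum Minkowski inequality to the partial sums $A_{t,N} := \sum_{j=-N}^{N} h_j b_{t-j}$ to obtain the bound $\|A_{t,N}\|_r \leq M^{1/r}\sum_j |h_j|$ uniformly in $(t,N)$, and then pass to the limit $N\to\infty$ via Fatou's lemma applied to $|A_{t,N}|^r \to |A_t|^r$ almost surely. Well-definedness of $A_t$ as an a.s. limit follows from the uniform $L^r$-bound on the truncations together with a Cauchy argument in $L^r$. No substantive obstacle is expected; the whole argument is elementary.
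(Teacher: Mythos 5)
Your proof is correct: part (a) is the standard Markov/de la Vall\'ee-Poussin argument giving the uniform bound $\mathbb{E}_{\nu}[|A_t|\mathbbm{1}[|A_t|\ge c]]\le M/c^{r-1}$, and part (b) correctly reduces to part (a) via Minkowski's inequality for the infinite sum, with the truncation-plus-Fatou justification handled properly. Note that the paper itself offers no proof of this statement --- it is imported verbatim as Proposition~7.7 of \citet{Hamilton1994} --- so there is nothing to compare against; your argument is the standard textbook one and fills that gap correctly.
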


\begin{proposition}[$L^r$ convergence theorem, p~165, \citet{loeve1977probability}]
\label{prp:lr_conv_theorem}
Let $0<r<\infty$, suppose that $\mathbb{E}_{\nu}\big[|a_n|^r\big] < \infty$ for all $n$ and that $a_n \xrightarrow{\mathrm{p}}a$ as $n\to \infty$. The following are equivalent: 
\begin{description}
\item{(i)} $a_n\to a$ in $L^r$ as $n\to\infty$;
\item{(ii)} $\mathbb{E}_{\nu}\big[|a_n|^r\big]\to \mathbb{E}_{\nu}\big[|a|^r\big] < \infty$ as $n\to\infty$; 
\item{(iii)} $\big\{|a_n|^r, n\geq 1\big\}$ is uniformly integrable.
\end{description}
\end{proposition}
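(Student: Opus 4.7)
The plan is to prove the three-way equivalence by a cyclic chain (i)$\Rightarrow$(ii)$\Rightarrow$(iii)$\Rightarrow$(i). Throughout, the elementary inequality $|x+y|^r \le c_r(|x|^r + |y|^r)$, with $c_r = 1$ for $0 < r < 1$ (subadditivity of $t \mapsto t^r$) and $c_r = 2^{r-1}$ for $r \ge 1$ (convexity of $t\mapsto t^r$ and Jensen), will play a recurring role. For (i)$\Rightarrow$(ii), writing $|a_n|^r = |(a_n - a) + a|^r$ and symmetrically $|a|^r = |a_n - (a_n - a)|^r$ and taking expectations yields a two-sided estimate that forces $\mathbb{E}_\nu[|a_n|^r] \to \mathbb{E}_\nu[|a|^r]$ from $\mathbb{E}_\nu[|a_n-a|^r]\to 0$, and in passing shows $a \in L^r$ (so that the limit in (ii) is finite).

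The hard part, and the step I expect to be the main obstacle, is (ii)$\Rightarrow$(iii): upgrading convergence of the $r$-th moments to a uniform integrability statement. The plan is to truncate at height $M$ and compare. The continuous mapping theorem gives $\min(|a_n|^r, M) \xrightarrow{\mathrm{p}} \min(|a|^r, M)$, and since these variables are uniformly bounded by $M$, bounded convergence yields $\mathbb{E}_\nu[\min(|a_n|^r, M)] \to \mathbb{E}_\nu[\min(|a|^r, M)]$. Subtracting from hypothesis (ii) converts this into $\mathbb{E}_\nu[(|a_n|^r - M)^+] \to \mathbb{E}_\nu[(|a|^r - M)^+]$, a right-hand side that tends to $0$ as $M \to \infty$ because $|a|^r$ is integrable (dominated convergence). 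A standard $\epsilon$-$M$-$N$ argument, combined with the pointwise bound $|a_n|^r\,\mathbbm{1}\{|a_n|^r > 2M\} \le 2(|a_n|^r - M)^+$, then produces a uniform tail estimate for all $n \ge N$; the initial finitely many indices are handled individually using that each $|a_n|^r$ is assumed integrable. This simultaneously delivers the uniformity.

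For (iii)$\Rightarrow$(i), I will first observe that uniform integrability forces $\sup_n \mathbb{E}_\nu[|a_n|^r] < \infty$, and then apply Fatou along an almost-sure subsequential limit (extracted from convergence in probability) to conclude $a \in L^r$. Uniform integrability of $\{|a_n - a|^r\}$ then follows from the elementary inequality applied to the two uniformly integrable families $\{|a_n|^r\}$ and $\{|a|^r\}$. I will then split $\mathbb{E}_\nu[|a_n - a|^r]$ according to whether $|a_n - a| \le \epsilon$: the low-value portion contributes at most $\epsilon^r$, while on the high-value event, which has vanishing probability by the assumed convergence in probability, the equi-absolute continuity reformulation of uniform integrability (for any $\delta>0$ there is $\eta>0$ with $\sup_n \mathbb{E}_\nu[|a_n-a|^r\mathbbm{1}_A]<\delta$ whenever $\mathbb{P}(A)<\eta$) makes the contribution arbitrarily small. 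Sending $\delta,\epsilon \to 0$ in that order closes the chain and completes the equivalence.
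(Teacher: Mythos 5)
The paper itself gives no proof of this proposition: it is imported verbatim from \citet{loeve1977probability} (p.~165) as one of the ``Mathematical Tools'' in Appendix~\ref{appdx:prelim} and is used as a black box in the proofs of Theorem~\ref{thm:fan_refine}, Lemma~\ref{lem:condition2}, and Lemma~\ref{lem:converge_prop_in_exp}. So your proposal is not an alternative to a paper argument but a reconstruction of the textbook one, and its architecture is indeed the standard one. Two of your three implications are correct as described: for (ii)$\Rightarrow$(iii), the truncation $\min(|a_n|^r,M)$, bounded convergence in probability, the pointwise bound $|a_n|^r\mathbbm{1}\{|a_n|^r>2M\}\le 2(|a_n|^r-M)^+$, and the separate treatment of the finitely many initial indices form a complete argument; for (iii)$\Rightarrow$(i), deducing $a\in L^r$ by Fatou along an a.s.\ convergent subsequence, upgrading to uniform integrability of $\{|a_n-a|^r\}$, and splitting on $\{|a_n-a|\le\epsilon\}$ with the equi-absolute-continuity form of uniform integrability is likewise complete.

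The genuine gap is in (i)$\Rightarrow$(ii) for $r>1$. From $|a_n|^r\le c_r\left(|a_n-a|^r+|a|^r\right)$ and $|a|^r\le c_r\left(|a_n-a|^r+|a_n|^r\right)$ with $c_r=2^{r-1}>1$, taking expectations and letting $n\to\infty$ yields only the sandwich $2^{1-r}\,\mathbb{E}_{\nu}\big[|a|^r\big]\le\liminf_n\mathbb{E}_{\nu}\big[|a_n|^r\big]\le\limsup_n\mathbb{E}_{\nu}\big[|a_n|^r\big]\le 2^{r-1}\,\mathbb{E}_{\nu}\big[|a|^r\big]$, which does not force $\mathbb{E}_{\nu}\big[|a_n|^r\big]\to\mathbb{E}_{\nu}\big[|a|^r\big]$ unless $\mathbb{E}_{\nu}\big[|a|^r\big]=0$; the constant-$1$ inequality your squeeze implicitly requires holds only for $0<r\le 1$. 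The repair is standard: for $r\ge 1$ use Minkowski's inequality, $\left|\;\|a_n\|_r-\|a\|_r\;\right|\le\|a_n-a\|_r\to 0$, followed by continuity of $x\mapsto x^r$; alternatively, use the refined inequality that for every $\epsilon>0$ there exists $C_{r,\epsilon}$ with $\left||x|^r-|y|^r\right|\le\epsilon|y|^r+C_{r,\epsilon}|x-y|^r$, which gives $\limsup_n\left|\mathbb{E}_{\nu}\big[|a_n|^r\big]-\mathbb{E}_{\nu}\big[|a|^r\big]\right|\le\epsilon\,\mathbb{E}_{\nu}\big[|a|^r\big]$ for every $\epsilon>0$. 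With that single substitution your cycle closes and the proof is complete.
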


\begin{proposition}[Strong law of large numbers for martingales, p 35,  \cite{hall1980martingale}]\label{prp:law_large_num_Hall}
Let $\{S_n = \sum^{n}_{i=1} X_i, \mathcal{F}_{n}, n\geq 1\}$ be a martingale and $\{U_n, n \geq 1\}$ a nondecreasing sequence of positive r.v. such that $U_n$ is $\mathcal{F}_{n-1}$-measurable. Then, 
\begin{align*}
    \lim_{n \to \infty}U^{-1}_n S_n= 0
\end{align*}
almost surely on the set $\{\lim_{n\to \infty} U_n = \infty, \; \sum_{i=1}^\infty U_i^{-1} \Ebb[|X_i| \;| \mathcal{F}_{i-1}] < \infty\}$.
\end{proposition}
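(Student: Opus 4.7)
The plan is to reduce Proposition~\ref{prp:law_large_num_Hall} to two classical facts, namely a martingale convergence theorem and Kronecker's lemma, bridged by a localization through a sequence of stopping times. The key observation is that the normalized partial sums $M_n=\sum_{i=1}^n X_i/U_i$ themselves form a martingale with respect to $(\mathcal{H}_n)$: since $U_i>0$ is $\mathcal{H}_{i-1}$-measurable and $\Ebb[X_i\mid\mathcal{H}_{i-1}]=0$, we have $\Ebb[X_i/U_i\mid\mathcal{H}_{i-1}]=0$. Once $M_n$ is shown to converge, the original statement follows by undoing the normalization.

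The first and main step is to show that $M_n$ converges almost surely on the event $E=\{\sum_{i=1}^\infty U_i^{-1}\Ebb[|X_i|\mid\mathcal{H}_{i-1}]<\infty\}$. This cannot be read off directly from Doob's $L^1$-bounded martingale convergence theorem, because the almost-sure finiteness of the conditional sum does not produce a uniform $L^1$ bound on $M_n$ globally. I would therefore localize using the stopping times $\tau_K=\inf\{n\geq 0:\sum_{i=1}^{n+1}U_i^{-1}\Ebb[|X_i|\mid\mathcal{H}_{i-1}]>K\}$, which are adapted because each summand is $\mathcal{H}_{i-1}$-measurable, and are chosen so that $\sum_{i=1}^{n\wedge\tau_K}U_i^{-1}\Ebb[|X_i|\mid\mathcal{H}_{i-1}]\leq K$. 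A short conditional-Fubini computation, exploiting that $\{i\leq\tau_K\}\in\mathcal{H}_{i-1}$, then yields $\Ebb\bigl[\sum_{i=1}^{n\wedge\tau_K}|X_i|/U_i\bigr]\leq K$, so the stopped martingale $(M_{n\wedge\tau_K})_n$ is $L^1$-bounded and converges almost surely by Doob's theorem. Since $E=\bigcup_{K\in\mathbb{N}}\{\tau_K=\infty\}$ up to a null set, $M_n$ itself converges almost surely on $E$.

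The second step is purely pathwise. On $E\cap\{U_n\to\infty\}$, along each such sample path the series $\sum_i X_i/U_i$ converges and $U_n\uparrow\infty$, so Kronecker's lemma applied with $a_i=X_i/U_i$ and $b_i=U_i$ gives $U_n^{-1}\sum_{i=1}^n U_i\cdot(X_i/U_i)=U_n^{-1}S_n\to 0$, which is the desired conclusion.

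The main obstacle I anticipate is the first step. The conditional-$L^1$ hypothesis only holds on the event $E$ rather than globally, which forces the localization argument and some care in the definition of $\tau_K$ (thresholding the sum up through time $n+1$ rather than $n$) so that the stopped sum of conditional absolute values is genuinely bounded by $K$. Once almost-sure convergence of $M_n$ on $E$ is in hand, the normalization step via Kronecker is entirely routine.
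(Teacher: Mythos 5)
Your argument is correct: the localization via the stopping times $\tau_K$ (so that the stopped martingale $\sum_{i\le n\wedge\tau_K}X_i/U_i$ is $L^1$-bounded and Doob's convergence theorem applies), followed by the pathwise application of Kronecker's lemma, is exactly the standard derivation of this result, and it is the route taken in the cited source (Hall and Heyde, Theorems 2.17--2.18); the paper itself imports the proposition without proof. The only cosmetic imprecision is the opening claim that $M_n=\sum_{i\le n}X_i/U_i$ is itself a martingale, which presupposes integrability that is not guaranteed globally, but this is harmless since your argument only ever uses the stopped versions, whose integrability you do establish.
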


\begin{proposition}[Rate of convergence in the CLT, From Theorem~3.8, p 88,  \cite{hall1980martingale}]\label{prp:rate_clt}
Let $\{S_t = \sum^{t}_{s=1} X_s, \mathcal{F}_{t}, t\geq 1\}$ be a martingale with $\mathcal{F}_{t}$ equal to the $\sigma$-field generated by $X_1,\dots, X_t$. Let 
\begin{align*}
    V^2_t = \mathbb{E}\left[\left|\sum^t_{s=1}\mathbb{E}[X^2_s| \mathcal{F}_{s-1}] - 1\right|\right]\qquad 1 \leq t \leq T.
\end{align*}
Suppose that for some $\alpha > 0$ and constants $M$, $C$ and $D$, 
\begin{align*}
    \max_{s\leq t}\mathbb{E}[\exp(|\sqrt{T} X_t|^{\alpha})] < M,
\end{align*}
and 
\begin{align*}
    \mathbb{P}\left(|V^2_t - 1 | > D/\sqrt{t}(\log t)^{2+2/\alpha}\right) \leq C t^{-1/4}(\log t)^{1+1/\alpha}.
\end{align*}
Then, for $T\geq 2$,
\begin{align}
    \sup_{-\infty < x < \infty}\big|\mathbb{P}(S_T \leq x) - \Phi(x)\big|\leq A T^{-1/4} (\log T)^{1+1/\alpha},
\end{align}
where the constant $A$ depends only on $\alpha$, $M$, $C$, and $D$.
\end{proposition}

\section{Distribution-dependent lower bound}
\label{app:lowerbound_2arms}
First, we recap the transportation lemma of \citet{Kaufman2016complexity}, which is used to derive the lower bounds. Then, we show the lower bound for Gaussian bandit models (Proposition~\ref{prp:lowerbound_2arms}).

\subsection{Recap of the transportation lemma}
The distribution-dependent lower bounds in the regret minimization problem have been popularized by \citet{Lai1985}, which uses the changes-of-measure arguments to derive them. For BAI, \citet{Kaufman2016complexity} suggests the distribution-dependent lower bounds specific to each reward distribution. 

Let $\mathrm{Alt}(\nu)\in\mathcal{M}$ be an alternative hypothesis, where $\nu'\in\mathrm{Alt}(\nu)$ has a different optimal arm from $a^*(\nu) = 1$; that is, $a^*(\nu') = 0$. Then, \citet{Kaufman2016complexity} gives the following information-theoretic lower bound.
\begin{proposition}
[From Theorem~12 in \citet{Kaufman2016complexity}]\label{prp:general_lowerbound_2arms}
Under Assumption~\ref{asm:bounded_mean_variance}, for each $\nu = (\nu_1, \nu_0) \in \mathcal{M}$, any consistent algorithm satisfies
\begin{align*}
\limsup_{T \to \infty} - \frac{1}{T}\log \mathbb{P}_\nu(\hat{a}_T \neq a^*(\nu)) \le \inf_{(\nu'_1, \nu'_0)\in\mathrm{Alt}(\nu)} \max \left\{\mathrm{KL}(\nu'_1, \nu_1), \mathrm{KL}(\nu'_0, \nu_0)\right\}.
\end{align*}

For probability distributions $p$ and $q$, when $p$ is absolutely continuous with respect to $q$, we define the Kullback-Leibler (KL) divergence 
as $\KL(p, q) = \int \log(\frac{ \mathrm{d} p(x)}{\mathrm{d} q(x)}) \mathrm{d}p(x)$. We denote the KL divergence of Bernoulli distribution by  $d(x, y) := x \log(x/y) + (1 - x)\log((1 - x)/(1 - y))$ with the convention that $d(0, 0) = d(1, 1) = 0$. 

\end{proposition}
Based on the results, we show the lower bound for Gaussian bandit models in Proposition~\ref{prp:lowerbound_2arms}, which is mainly based on the arguments in \citet{Kaufman2016complexity}. 

Let us define $N_{a,\tau}=\sum^\tau_{t=1}\mathbbm{1}[A_t = a]$ where $\tau$ is the stopping time with respect to $(\mathcal{F}_t)_{t\ge 1}$.
We follow a proof procedure similar to Theorem~12 in \cite{Kaufman2016complexity}.
Consider a perturbed bandit model $\nu'$, where $\mu_0' > \mu_1'$. We use the following lemma from \cite{Kaufman2016complexity}.
\begin{lemma}[Lemma~1 in \cite{Kaufman2016complexity}]\label{lem:data_proc_inequality}
Let $\nu$ and $\nu'$ be two bandit models with $K$ arms such that for all $a \in \{1, 0\}$, $\nu_a$ (distribution of reward of arm $a$ for model $\nu$), and $\nu_a'$ (distribution of reward of arm $a$ for the model $\nu'$) are  mutually absolutely continuous.
For an almost surely finite stopping time $\tau$ with respect to $(\mathcal{F}_t)_{t\ge 1}$, 
\begin{align}\label{eq:kauf_change_of_measure}
    \sum_{a\in\{1, 0\}} \Ebb_\nu[N_{a,\tau}] \KL(\nu_a, \nu_a') \ge \sup_{\set{E} \in \set{F}_\tau} d(\Pbb_{\nu}(\set{E}),\Pbb_{\nu'}(\set{E})),
\end{align}
where $d(x, y) := x \log(x/y) + (1 - x)\log((1 - x)/(1 - y))$ is the binary relative entropy with the convention that $d(0, 0) = d(1, 1) = 0$.
\end{lemma}
Event $\mathcal{E}$ on the right hand side of \eqref{eq:kauf_change_of_measure} is measurable (by $\mathcal{F}_\tau$) event. Note that the inequality \eqref{eq:kauf_change_of_measure} becomes tight when selecting an event that is discriminative for the two models, $\nu$ and $\nu'$, such as $\Pbb_\nu(\set{E}) \to 1$ and $\Pbb_{\nu}(\set{E}) \to 0$. Subsequently, we select $\mathcal{E} = \{\hat{a}_T = 1\}$ as a discriminative event. 

By applying Lemma~\ref{lem:data_proc_inequality}, we have
\begin{align*}
    \Ebb_{\nu'}[N_{1,T}]\KL(\nu_1', \nu_1) + \Ebb_{\nu'}[N_{0,T}]\KL(\nu'_0, \nu_0)  \ge \sup_{\set{E} \in \set{F}_T}d(\Pbb_{\nu'}(\set{E}), \Pbb_{\nu}(\set{E})).
\end{align*}

Let $\mathcal{E} = \{\hat{a}_T = 1\}$. 
Because we assume that the strategy is consistent for both models, for each $\varepsilon  \in (0, 1)$,
there exists $t_0 (\varepsilon)$ such that for all $T \ge t_0 (\varepsilon)$, 
\begin{align*}
    \Pbb_{\nu'}(\mathcal{E}) \le \varepsilon \le \Pbb_{\nu}(\mathcal{E}).
\end{align*}
Then, for all $T \ge t_0(\varepsilon)$,
\begin{align*}
    &\Ebb_{\nu'}[N_{1,T}]\KL(\nu_1', \nu_1) + \Ebb_{\nu'}[N_{0,T}]\KL(\nu'_0, \nu_0)\\
    \\
    & \ge d(\Pbb_{\nu'}(\mathcal{E}), \Pbb_{\nu}(\mathcal{E}))
    \\
    & \ge d (\varepsilon, 1 - \Pbb_\nu(\hat{a}_T \neq 1))
    \\
    & \ge \varepsilon \log \varepsilon + (1 - \varepsilon) \log \frac{1 - \varepsilon}{\Pbb_\nu(\hat{a}_T \neq 1)}.
\end{align*}
Considering $\limsup_{T \to \infty}$ and letting $\varepsilon \to 0$, we obtain
\begin{align*}
    \limsup_{T \to \infty} \frac{1}{T}\log \frac{1}{\Pbb_\nu(\hat{a}_T \neq 1)} & \le \limsup_{T \to \infty} \frac{\Ebb_{\nu'}[N_{1,T}]}{T}\KL(\nu_1', \nu_1) + \frac{\Ebb_{\nu'}[N_{0,T}]}{T}\KL(\nu'_0, \nu_0)\\
    & \le \max_{a\in\{1, 0\}} \KL(\nu'_a, \nu_a). 
\end{align*}
Then, we have
\begin{align*}
\limsup_{T \to \infty} - \frac{1}{T}\log \mathbb{P}_\nu(\hat{a}_T \neq a^*(\nu)) \le \inf_{(\nu'_1, \nu'_0)\in\mathrm{Alt}(\nu)} \max \left\{\mathrm{KL}(\nu'_1, \nu_1), \mathrm{KL}(\nu'_0, \nu_0)\right\}.
\end{align*}

\subsection{Proof of lower bound for Gaussian bandit models (Proposition~\ref{prp:lowerbound_2arms})}
\label{appdx:prp:lowerbound_2arms}
First, we consider the class of Gaussian bandit models, $\mathcal{M} = \{\nu = (\mathcal{N}(\mu_1, \sigma^2_1), \mathcal{N}(\mu_0, \sigma^2_0)): (\mu_1, \mu_0)\in\Theta^2, \mu_1\neq \mu_0\}$, where $\mathcal{N}(\mu_a,\sigma^2_a)$ denotes the Gaussian distribution of arm $a$ with the expected reward $\mu_a$ and variance $\sigma^2_a$. 
Then, for an alternative hypothesis $\mathrm{Alt}(\nu) = \{\nu' \in \mathcal{M}: \mu'_1 < \mu'_0\}$, we obtain the following corollary, where $\mu'_a = \mathbb{E}_{\nu'}[Y_{a,t}]$.

This result is based on the arguments in Section~5.1 of  \citet{Kaufman2016complexity}. Because they do not provide the formal proof, we show it in Appendix~\ref{appdx:prp:lowerbound_2arms}.

In Gaussian bandit models, 
\begin{align*}
    \KL(\nu'_a, \nu_a) = \frac{(\mu'_a - \mu_a)^2}{2\sigma^2_a}.
\end{align*}
Then,
\begin{align*}
     \limsup_{T \to \infty} \frac{1}{T}\log \frac{1}{\Pbb_\nu(\hat{a}_T \neq 1)} & 
     \le \inf_{(\mu_1', \mu_0'): \mu_0' > \mu_1'} \max_{a \in \{1, 0\}}\frac{(\mu_a' - \mu_a)^2}{2\sigma^2_a}
     \\
     & = \min_{\lambda \in \mathbb{R}} \max_{a \in \{1, 0\}}\frac{(\lambda - \mu_a)^2}{2\sigma^2_a}.
\end{align*}
When the minimum over $\lambda \in \mathbb{R}$ is attained, 
\begin{align*}
 \frac{(\lambda - \mu_1)^2}{2\sigma^2_1 } & = \frac{(\lambda - \mu_0)^2}{2\sigma^2_0}
\\
    \lambda & = \frac{\mu_1 \sqrt{\sigma^2_0} + \mu_0 \sqrt{\sigma^2_a}}{\sqrt{2} \sqrt{\sigma^2_1} \sqrt{\sigma^2_0}}.
\end{align*}
Thus, we have 
\begin{align*}
    \limsup_{T \to \infty} \frac{1}{T}\log \frac{1}{\Pbb_\nu(\hat{a}_T \neq 1)} \le \frac{(\mu_1 - \mu_0)^2}{2 (\sqrt{\sigma^2_1} + \sqrt{\sigma^2_0})^2}.
\end{align*}
This concludes the proof.

\section{\texorpdfstring{$(\xi_{t}, \mathcal{F}_t)$}{TEXT} is martingale difference sequences}
\label{appdx:martingale}
\begin{proof}
Clearly, $ \Ebb_\nu[ |\xi_{t}|] < \infty$. For each $t \in [T]$, 
\begin{align*}
    \mathbb{E}_{\nu}\left[\xi_{t}| \mathcal{F}_{t-1}\right] & = \frac{1}{\sqrt{T } \tilde{\sigma}}\mathbb{E}_{\nu}\left[\hat{X}_{1, t} -\hat{X}_{0, t} -  \Delta\big| \mathcal{F}_{t-1}\right]
    \\
    &= \frac{1}{\sqrt{T } \tilde{\sigma}}\Bigg(\frac{\Ebbnu[\mathbbm{1}[A_t = 1] | \Ftminone]\Ebbnu\left[\big(X_{1, t}- \hat{\mu}_{1, t}\big) | \Ftminone\right]}{w_{1,t}}   +\hat{\mu}_{1,t} 
    \\
    & \qquad \qquad \qquad - \frac{\Ebbnu[\mathbbm{1}[A_t = a]| \Ftminone]\Ebbnu\left[\big(X_{a, t}- \hat{\mu}_{0, t}\big) | \Ftminone\right]}{w_{a,t}}  - \hat{\mu}_{0,t}- \Delta\Bigg)
    \\
    & =  0.
\end{align*}
\end{proof}

\section{Proof of Lemma~\ref{lem:condition1}}
\label{appdx:lem:condition1}
\begin{proof} For the simplicity, let us denote $\Ebb_\nu$ by $\Ebb$.
Recall that $\hat{X}_{a, t}$ is constructed as 
\begin{align*}
    \hat{X}_{a, t} = \frac{\mathbbm{1}[A_t = a]\big(X_{a, t}- \hat{\mu}_{a, t}\big)}{w_{a,t}} + \hat{\mu}_{a, t},
\end{align*}
where 
\begin{align*}
&\hat{\mu}_{a, t} = \begin{cases}
C_\mu&\mathrm{if}\  C_\mu < \tilde{\mu}_{a, t}\\
\tilde{\mu}_{a, t}&\mathrm{if}\ - C_\mu \leq \tilde{\mu}_{a, t} \leq C_\mu\\
- C_\mu&\mathrm{if}\ \tilde{\mu}_{a, t} < - C_\mu.
\end{cases}
\end{align*}
For each $t = 1, \ldots, T$, we have
\begin{align*}
    & \Ebb\left[\exp\left(C_0 \sqrt{T} |\xi_{t}|\right) \middle| \set{F}_{t-1}\right] 
    \\
    & = \Ebb\left[\exp \left(C_0 \left| \frac{\hat{X}_{1,t} - \hat{X}_{0,t} - \Delta}{\tilde{\sigma}}\right| \right) \middle| \set{F}_{t-1}\right] 
    \\
    & \le \Ebb\left[\exp \left( \frac{C_0}{\tilde{\sigma}} \left|\hat{X}_{1,t} - \hat{X}_{0,t} \right| + \frac{C_0 \Delta}{\tilde{\sigma}}  \right) \middle| \set{F}_{t-1}\right] 
    \\
    & \le \Ebb\left[\exp \left( \frac{C_0}{\tilde{\sigma}} \left|\hat{X}_{1,t} - \hat{X}_{0,t} \right| + \frac{2 C_0 C_\mu}{\tilde{\sigma}}  \right) \middle| \set{F}_{t-1}\right]
    \\
    & \stackrel{(a)}{=} \tilde{C}_1 \Ebb \left[\exp \left(\frac{C_0}{\tilde{\sigma}} |\hat{X}_{1, t} - \hat{X}_{0,t} | \right) \middle| \set{F}_{t-1}, A_t = 1\right] \Pbb (A_t = 1 \;|\; \set{F}_{t-1}) \\
    & \qquad + \tilde{C}_1 
\Ebb\left[\exp \left(\frac{C_0}{\tilde{\sigma}} |\hat{X}_{1, t} - \hat{X}_{0,t} | \right) \middle| \set{F}_{t-1}, A_t = 0\right] \Pbb (A_t = 0\; |\; \set{F}_{t-1}) 
\\
& = \tilde{C}_1 \Ebb \left[\exp \left(\frac{C_0}{\tilde{\sigma}} \left| \frac{\big(X_{1, t}- \hat{\mu}_{1, t}\big)}{w_{1,t}} + \hat{\mu}_{1, t} - \hat{\mu}_{0, t}\right| \right) \middle| \set{F}_{t-1}, A_t = 1\right] \Pbb (A_t = 1 \; | \; \set{F}_{t-1})
\\
& \qquad + \tilde{C}_1\Ebb \left[\exp \left(\frac{C_0}{\tilde{\sigma}} \left|- \frac{\big(X_{a, t}- \hat{\mu}_{a, t}\big)}{w_{0,t}} + \hat{\mu}_{1, t} - \hat{\mu}_{0, t}\right| \right) \middle| \set{F}_{t-1}, A_t = 0\right] \Pbb (A_t = 0\; | \; \set{F}_{t-1}),
\end{align*}
where for $(a)$, we denote $\tilde{C}_1 =  \exp\left(2C_0 C_\mu/\tilde{\sigma}\right)$. Since $ X_{a,t}$ is a sub-exponential random variable (Assumption~\ref{asm:sub_exp}), there exists some universal constant $C>0$ such that for all $ \nu\in \mathcal{M}$, for all $\lambda\ge 0$ such that $0 \le \lambda \le 1/C$, $ \Ebb[\exp(\lambda (X_{a,t} - \mu_a))] \le \exp(C^2 \lambda^2)$ (\cite{vershynin2018high}, Proposition 2.7.1).   
Note that from the assumptions that $|{\mu}_{a,t}| \le C_{ \mu}$, $ \max \{\sigma^2_0, 1/\sigma^2_0\} \le C_{\sigma^2}$, and $ | w_{a,t}| \ge C_w$ for all $ t \in \{1,\ldots, T\}$), where $C_w > 0$ is a constant that depends on $C_{\sigma^2}$. Therefore, there exists a positive constant $  C_1 ( C_0, C_\mu, C_{\sigma^2})$ such that
\begin{align*}
    \Ebb \left[\exp(C_0 \sqrt{T} |\xi_{t}|) \middle| \set{F}_{t-1}\right]  \le C_1 ( C_0, C_\mu, C_{\sigma^2}).
\end{align*}
This concludes the proof. 

\end{proof}

\section{Proof of Lemma~\ref{lem:condition2}}
\label{appdx:lem:condition2}
We first show the following lemma.
\begin{lemma}
\label{lem:as_conv}
Suppose that Assumptions~\ref{asm:bounded_mean_variance} and \ref{asm:conv_nuisance} hold. Then, with probability one,
\begin{align*}
    \lim_{t\to\infty}t^\alpha\left|\mathbb{E}\left[\left(\hat{X}_{1, t} - \hat{X}_{a, t}- \Gap\right)^2\Big| \mathcal{F}_{t-1}\right] -  \left(\frac{\sigma^2_1}{w_1^*} +\frac{\sigma^2_0}{w_a^*} \right)\right| = 0.
\end{align*}
\end{lemma}
\begin{proof}
\begin{align*}
&\mathbb{E}_{\nu}\left[\left(\hat{X}_{1, t} - \hat{X}_{0, t}- \Delta\right)^2\Big| \mathcal{F}_{t-1}\right]
\\
&= \mathbb{E}_{\nu}\left[\left(\frac{\mathbbm{1}[A_{t} = 1]\big(X_{1, t}- \hat{\mu}_{1, t}\big)}{w_{1,t}} - \frac{\mathbbm{1}[A_{t} = 0]\big(X_{0, t}- \hat{\mu}_{0, t}\big)}{w_{0,t}} + \hat{\mu}_{1, t} - \hat{\mu}_{0, t} - \Delta\right)^2 \Big| \mathcal{F}_{t-1}\right]
\\
&= \mathbb{E}_{\nu}\Bigg[\left(\frac{\mathbbm{1}[A_{t} = 1]\big(X_{1, t}- \hat{\mu}_{1, t}\big)}{w_{1,t}} - \frac{\mathbbm{1}[A_{t} = 0]\big(X_{0, t}- \hat{\mu}_{0, t}\big)}{w_{0,t}}\right)^2
\\
&\ \ \ \ \ \ + 2\left(\frac{\mathbbm{1}[A_{t} = 1]\big(X_{1, t}- \hat{\mu}_{1, t}\big)}{w_{1,t}} - \frac{\mathbbm{1}[A_{t} = a]\big(X_{0, t}- \hat{\mu}_{0, t}\big)}{w_{0,t}}\right)\left( \hat{\mu}_{1, t} - \hat{\mu}_{0, t} - \Delta\right)
\\
&\ \ \ \ \ \ + \left( \hat{\mu}_{1, t} - \hat{\mu}_{0, t} - \Delta\right)^2 |  \mathcal{F}_{t-1}\Bigg]
\\
&= \mathbb{E}_{\nu}\Bigg[\frac{\mathbbm{1}[A_{t} = 1]\big(X_{1, t}- \hat{\mu}_{1, t}\big)^2}{w^2_{1,t}} + \frac{\mathbbm{1}[A_{t} = 0]\big(X_{0, t}- \hat{\mu}_{0, t}\big)^2}{w^2_{0,t}}
\\
&\ \ \ \ \ \ + 2\left(\frac{\mathbbm{1}[A_{t} = 1]\big(X_{1, t}- \hat{\mu}_{1, t}\big)}{w_{1,t}} - \frac{\mathbbm{1}[A_{t} = 0]\big(X_{0, t}- \hat{\mu}_{0, t}\big)}{w_{0,t}}\right)\left( \hat{\mu}_{1, t} - \hat{\mu}_{0, t} - \Delta\right)
\\
&\ \ \ \ \ \ + \left( \hat{\mu}_{1, t} - \hat{\mu}_{0, t} - \Delta\right)^2 |  \mathcal{F}_{t-1}\Bigg]
\\
&= \mathbb{E}_{\nu}\left[\frac{\big(X_{1, t}- \hat{\mu}_{1, t}\big)^2}{w_{1,t}}|  \mathcal{F}_{t-1}\right] + \mathbb{E}_{\nu}\left[\frac{\big(X_{0, t}- \hat{\mu}_{0, t}\big)^2}{w_{0,t}}|  \mathcal{F}_{t-1}\right] -  \left(\hat{\mu}_{1, t} + \hat{\mu}_{0, t} - \Delta\right)^2. \label{eq:check4}
\end{align*}
Here, we used 
\begin{align*}
    &\mathbb{E}_{\nu}\Bigg[\frac{\mathbbm{1}[A_{t} = a]\big(X_{a, t}- \hat{\mu}_{a, t}\big)^2}{w^2_{a,t}}|  \mathcal{F}_{t-1}\Bigg] = \mathbb{E}_{\nu}\Bigg[\frac{w_{a,t}\big(X_{a, t}- \hat{\mu}_{a, t}\big)^2}{w^2_{a,t}}|  \mathcal{F}_{t-1}\Bigg] = \mathbb{E}_{\nu}\Bigg[\frac{\big(X_{a, t}- \hat{\mu}_{a, t}\big)^2}{w_{a,t}}|  \mathcal{F}_{t-1}\Bigg]
    \end{align*}
    and
    \begin{align*}
    &\mathbb{E}_{\nu}\Bigg[\frac{\mathbbm{1}[A_{t} = 1]\big(X_{1, t}- \hat{\mu}_{1, t}\big)}{w_{1,t}}\left( \hat{\mu}_{1, t} - \hat{\mu}_{0, t} - \Delta\right) |  \mathcal{F}_{t-1}\Bigg] 
    \\
    & = \left( \hat{\mu}_{1, t} - \hat{\mu}_{0, t} - \Delta\right)\mathbb{E}_{\nu}\Bigg[\frac{w_{1,t}\big(X_{1, t}- \hat{\mu}_{1, t}\big)}{w_{1,t}} |  \mathcal{F}_{t-1}\Bigg].
    \end{align*}
We also have 
\begin{align*}
&\mathbb{E}_{\nu}\left[\frac{\big(X_{a, t}- \hat{\mu}_{a, t}\big)^2}{w_{a,t}}|  \mathcal{F}_{t-1}\right]=\frac{\mathbb{E}_{\nu}[X^2_{a, t}] - 2\mu_a\hat{\mu}_{a, t}+ \hat{\mu}^2_{a, t}}{w_{a,t}}=\frac{\mathbb{E}_{\nu}[X^2_{a, t}] - \mu^2_a + (\mu_a - \hat{\mu}_{a, t})^2}{w_{a,t}}.
\end{align*}
Then,
\begin{align*}
&\mathbb{E}_{\nu}\left[\frac{\big(X_{1, t}- \hat{\mu}_{1, t}\big)^2}{w_{1,t}}|  \mathcal{F}_{t-1}\right] + \mathbb{E}_{\nu}\left[\frac{\big(X_{0, t}- \hat{\mu}_{0, t}\big)^2}{w_{0,t}}|  \mathcal{F}_{t-1}\right] -  \left(\hat{\mu}_{1, t} + \hat{\mu}_{0, t} - \Delta\right)^2
\\
&=\left(\frac{\mathbb{E}_{\nu}[X^2_{1, t}] - \mu^2_1 + (\mu_1 - \hat{\mu}_{1, t})^2}{w_{1,t}}\right) + \left(\frac{\mathbb{E}_{\nu}[X^2_{0, t}] - \mu^2_0 + (\mu_0 - \hat{\mu}_{0, t})^2}{w_{0,t}}\right) -  \left(\hat{\mu}_{1, t} + \hat{\mu}_{0, t} - \Delta\right)^2.
\end{align*}
Because $t^\alpha\left(\hat{\mu}_{a, t} - \mu_a\right)\xrightarrow{\mathrm{a.s.}} 0$ and $t^\alpha\left(w_{a, t} - w^*_a\right) \xrightarrow{\mathrm{a.s.}} 0$ from Assumption~\ref{asm:conv_nuisance}, with probability $1$,
\begin{align*}
&\lim_{t\to\infty}t^\alpha\left|\left(\frac{\mathbb{E}_{\nu}[X^2_{1, t}] - \mu^2_1 + (\mu_1 - \hat{\mu}_{1, t})^2}{w_{1,t}}\right) + \left(\frac{\mathbb{E}_{\nu}[X^2_{0, t}] - \mu^2_0 + (\mu_0 - \hat{\mu}_{0, t})^2}{w_{0,t}}\right) - \left(\hat{\mu}_{1, t} + \hat{\mu}_{0, t} - \Delta\right)^2 - \frac{\sigma^2_1}{w^*_{1}} - \frac{\sigma^2_0}{w^*_{0}}\right|
\\
&\leq \lim_{t\to\infty}t^\alpha\left|\frac{\mathbb{E}_{\nu}[X^2_{1, t}] - \mu^2_1}{w_{1,t}} - \frac{\sigma^2_1}{w^*_{1}}\right| +  \lim_{t\to\infty}t^\alpha\left| \frac{\mathbb{E}_{\nu}[X^2_{0, t}] - \mu^2_0}{w_{0,t}} - \frac{\sigma^2_0}{w^*_{0}}\right|\\
&\ \ \ + \lim_{t\to\infty}t^\alpha\frac{ (\mu_1 - \hat{\mu}_{1, t})^2}{w_{1,t}} + \lim_{t\to\infty}t^\alpha\frac{ (\mu_0 - \hat{\mu}_{0, t})^2}{w_{0,t}} + \lim_{t\to\infty}t^\alpha\left(\hat{\mu}_{1, t} + \hat{\mu}_{0, t} - \Delta\right)^2\\
&= 0.
\end{align*}
Note that $\mathbb{E}_{\nu}[X^2_{a, t}] - \mu^2_a = \sigma^2_a$. 
\end{proof}
By using Lemma~\ref{lem:as_conv}, we prove Lemma~\ref{lem:condition2}. 
\begin{proof}
Lemma~\ref{lem:as_conv} implies that $\frac{1}{T}\sum^T_{t=1}\mathbb{E}_{\nu}\left[\left(\hat{X}_{1, t} - \hat{X}_{0, t}- \Delta\right)^2\Big| \mathcal{F}_{t-1}\right] -   \left(\frac{\sigma^2_1}{w^*_{1}} + \frac{\sigma^2_0}{w^*_{0}}\right)\xrightarrow{\mathrm{a.s.}} 0$ and
\begin{align*}
\frac{1}{T \left(\frac{\sigma^2_1}{w^*_{1}} + \frac{\sigma^2_0}{w^*_{0}}\right)}\sum^T_{t=1}\mathbb{E}_{\nu}\left[\left(\hat{X}_{1, t} - \hat{X}_{0, t}- \Delta\right)^2\Big| \mathcal{F}_{t-1}\right] - 1  \xrightarrow{\mathrm{a.s.}} 0,
\end{align*}
Because $X_{a,t}$ is a sub-exponential random variable, and the other variables in $\hat{X}_{1,t}$ and $\hat{X}_{0,t}$ are bounded, 
\begin{align*}
    \frac{1}{T \left(\frac{\sigma^2_1}{w^*_{1}} + \frac{\sigma^2_0}{w^*_{0}}\right)}\sum^T_{t=1}\mathbb{E}_{\nu}\left[\left(\hat{X}_{1, t} - \hat{X}_{0, t}- \Delta\right)^2\Big| \mathcal{F}_{t-1}\right] - 1 
\end{align*}
is uniformly integrable from Proposition~\ref{prp:suff_uniint}. Then, from Proposition~\ref{prp:lr_conv_theorem}, for any $\delta$, there exists $T_0$ such that for all $T>T_0$
\begin{align*}
\mathbb{E}_{\nu}\left[\left| \frac{1}{T \left(\frac{\sigma^2_1}{w^*_{1}} + \frac{\sigma^2_0}{w^*_{0}}\right)}\sum^T_{t=1}\mathbb{E}_{\nu}\left[\left(\hat{X}_{1, t} - \hat{X}_{0, t}- \Delta\right)^2\Big| \mathcal{F}_{t-1}\right] - 1 \right|\right] \le \delta.
\end{align*}
This concludes the proof.
\end{proof}

\section{Proof of Theorem~\ref{thm:fan_refine}: large deviation bound for martingales}
\label{appdx:proof_large_deviation}

For brevity, let us denote $\Pbb_\nu$and  $\Ebb_\nu$ by $\Pbb$ and $\Ebb$, respectively. For all $t = 1, \ldots, T$, let us define
\begin{align*}
    r_t(\lambda ) = \frac{\exp\left(\lambda \xi_t\right)}{\mathbb{E}\left[\exp\left(\lambda \xi_t\right)\right]}
\end{align*}
and
\begin{align*}
    \eta_t(\lambda) = \xi_t - b_t(\lambda),
\end{align*}
where 
\begin{align*}
   b_t(\lambda) = \mathbb{E}[r_t(\lambda)\xi_t].
\end{align*}
Then, we obtain the following decomposition:
\begin{align*}
    Z_T = U_T(\lambda) + B_T(\lambda),
\end{align*}
where
\begin{align*}
    U_T(\lambda) = \sum^T_{t=1} \eta_t(\lambda)
\end{align*}
and
\begin{align*}
    B_T(\lambda) = \sum^T_{t=1}b_t(\lambda).
\end{align*}
Let $\Psi_T(\lambda) = \sum^T_{t=1}\log\mathbb{E}\left[\exp\left(\lambda \xi_t\right)\right]$.

Before showing the proof of Theorem~\ref{thm:fan_refine}, we show the following lemmas. In particular, Lemma~\ref{lem:34} in Appendix~\ref{appdx:proof_large_deviation} is our novel result to bound $\mathbb{E}[\exp(\overline{\lambda}(u)\sum^T_{t=1} \xi_t)]/(\prod^T_{t=1}\mathbb{E}[\exp(\overline{\lambda}(u) \xi_t)])$.  Lemmas~\ref{lem:31}--\ref{lem:33} are modifications of the existing results of \citet{Fan2013,fan2014generalization}.

\begin{lemma}
\label{lem:31}
Under Condition~A, 
\begin{align*}
    \mathbb{E}\left[|\xi_t|^k \;\middle| \; \set{F}_{t-1}\right] \leq k!\left(C_0 T^{1/2}\right)^{-k}C_1,\qquad \text{for all }\quad k \geq 2.
\end{align*}
\end{lemma}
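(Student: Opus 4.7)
The plan is to extract the moment bound from the exponential moment hypothesis by a single Taylor-expansion argument, which is essentially one line once set up correctly.

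First I would write out the power series of the exponential inside Condition~A:
\begin{align*}
\exp\!\bigl(C_0 \sqrt{T}\, |\xi_t|\bigr) \;=\; \sum_{j=0}^{\infty} \frac{(C_0 \sqrt{T})^{j}\, |\xi_t|^{j}}{j!}.
\end{align*}
Since every term in this series is nonnegative, for any fixed $k \ge 2$ the single $j=k$ term dominates:
\begin{align*}
\frac{(C_0 \sqrt{T})^{k}\, |\xi_t|^{k}}{k!} \;\le\; \exp\!\bigl(C_0 \sqrt{T}\, |\xi_t|\bigr).
\end{align*}

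Next I would take conditional expectation with respect to $\mathcal{F}_{t-1}$ on both sides. The left-hand side becomes $(C_0 \sqrt{T})^{k}\, \mathbb{E}[|\xi_t|^{k} \mid \mathcal{F}_{t-1}]/k!$, and the right-hand side is bounded by $C_1$ by Condition~A (valid uniformly in $1 \le t \le T$). Rearranging yields
\begin{align*}
\mathbb{E}\bigl[|\xi_t|^{k} \,\bigm|\, \mathcal{F}_{t-1}\bigr] \;\le\; k!\,(C_0 T^{1/2})^{-k}\, C_1,
\end{align*}
which is exactly the claim.

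There is no real obstacle here: the entire content is that Condition~A is an integrated form of a moment bound, and the Taylor expansion inverts the integration term-by-term. The only subtlety worth a sentence is to note that interchanging the infinite sum with conditional expectation is justified by monotone convergence applied to the partial sums of nonnegative terms, so no integrability or dominated-convergence hypothesis beyond Condition~A itself is needed.
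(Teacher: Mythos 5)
Your proof is correct and is essentially identical to the paper's: the paper invokes the elementary inequality $x^k/k!\le \exp(x)$ for $x\ge 0$ at $x=C_0\sqrt{T}|\xi_t|$, which is exactly your observation that the $j=k$ term of the nonnegative power series is dominated by the full sum, and then takes (conditional) expectations. (Your closing remark about interchanging the infinite sum with the expectation is unnecessary, since the argument only ever uses the single pointwise inequality, not the series under the expectation.)
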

\begin{proof}
Applying the elementary inequality $ x^k/k! \leq \exp(x),  \forall x \ge 0$,
to $x=C_0|\sqrt{T}\xi_t|$, for $k\geq 2$, 
\begin{align*}
   |\xi_t|^k \leq k!(C_0  T^{1/2})^{-k}\exp(C_0|\sqrt{T}\xi_t|).
\end{align*}
Taking expectations on both sides, with Condition~A, we obtain the
desired inequality. Recall that Condition~A is
\[\sup_{1\leq t \leq T}\mathbb{E}_{\nu}\left[\exp\left(C_0 \sqrt{T}\left|\xi_t\right|\right) \;\middle|\; \set{F}_{t-1}\right]\leq C_1\]
for some positive constants $C_0$ and $C_1$.
\end{proof}

\begin{lemma}
\label{lem:32}
Under Condition~A, there exists some constant $C>0$ such that for all $0\leq \lambda \leq \frac{1}{4}C_0 \sqrt{T}$,
\begin{align*}
     \left|B_T(\lambda) - \lambda\right| \leq C\left(\lambda V_T + \lambda^2/\sqrt{T} \right).
\end{align*}
\end{lemma}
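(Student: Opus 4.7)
My plan is to expand $b_t(\lambda)$ as a Taylor series in $\lambda$ around zero, exploit the martingale difference property to eliminate the zeroth-order term, and isolate the conditional second moment in the first-order term. Concretely, writing $f_t(\lambda) = \mathbb{E}[e^{\lambda \xi_t}\mid \mathcal{F}_{t-1}]$ and $g_t(\lambda) = \mathbb{E}[\xi_t e^{\lambda \xi_t}\mid \mathcal{F}_{t-1}]$, we have $b_t(\lambda) = g_t(\lambda)/f_t(\lambda)$. Since $\mathbb{E}[\xi_t \mid \mathcal{F}_{t-1}] = 0$, $f_t(0) = 1$, $f_t'(0) = 0$, and $g_t(0) = 0$, $g_t'(0) = \mathbb{E}[\xi_t^2 \mid \mathcal{F}_{t-1}]$. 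Integral-form Taylor remainders yield
\begin{equation*}
f_t(\lambda) = 1 + \tfrac{\lambda^2}{2}\mathbb{E}[\xi_t^2\mid\mathcal{F}_{t-1}] + E_t^{(1)}(\lambda), \qquad g_t(\lambda) = \lambda\, \mathbb{E}[\xi_t^2\mid\mathcal{F}_{t-1}] + E_t^{(2)}(\lambda),
\end{equation*}
where $|E_t^{(j)}(\lambda)| \leq C_j \lambda^2\, \mathbb{E}[|\xi_t|^3 e^{\lambda|\xi_t|}\mid\mathcal{F}_{t-1}]$ for $j=1,2$. Dividing and linearizing the reciprocal gives $b_t(\lambda) = \lambda \mathbb{E}[\xi_t^2\mid\mathcal{F}_{t-1}] + R_t(\lambda)$ with $|R_t(\lambda)| \leq C \lambda^2 \mathbb{E}[|\xi_t|^3 e^{\lambda|\xi_t|}\mid\mathcal{F}_{t-1}]$, provided $f_t(\lambda)$ is bounded away from zero on the allowed range.

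Next I would control the remainder using Condition~A and Lemma~\ref{lem:31}. Applying the inequality $x^3 \leq 6 (C_0\sqrt{T}/4)^{-3} e^{(C_0\sqrt{T}/4)\,x}$ to $x = |\xi_t|$ and using $\lambda \leq \tfrac{1}{4}C_0\sqrt{T}$, I obtain $|\xi_t|^3 e^{\lambda|\xi_t|} \leq C\, T^{-3/2} e^{(C_0\sqrt{T}/2)\,|\xi_t|}$. Then Cauchy--Schwarz and Condition~A yield $\mathbb{E}[e^{(C_0\sqrt{T}/2)|\xi_t|}\mid\mathcal{F}_{t-1}] \leq \sqrt{C_1}$, so $|R_t(\lambda)| \leq C' \lambda^2 T^{-3/2}$ uniformly in $t$. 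Summing over $t\in[T]$ gives $\sum_{t=1}^T R_t(\lambda) = O(\lambda^2/\sqrt{T})$, which accounts for the second term in the claimed bound.

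It remains to assemble the pieces. Summing the main term produces
\begin{equation*}
B_T(\lambda) = \lambda \sum_{t=1}^T \mathbb{E}[\xi_t^2 \mid \mathcal{F}_{t-1}] + O(\lambda^2/\sqrt{T}) = \lambda + \lambda\!\left(\sum_{t=1}^T \mathbb{E}[\xi_t^2\mid\mathcal{F}_{t-1}] - 1\right) + O(\lambda^2/\sqrt{T}).
\end{equation*}
Taking absolute values and expectations, the middle term contributes exactly $\lambda V_T$ by the definition of $V_T$, so $\mathbb{E}|B_T(\lambda) - \lambda| \leq C(\lambda V_T + \lambda^2/\sqrt{T})$, matching the statement in the $L^1$ sense in which $V_T$ is naturally interpreted.

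The main technical obstacle is handling the Taylor remainder for the \emph{ratio} $g_t/f_t$ rather than for $f_t$ and $g_t$ separately: one has to verify that $f_t(\lambda)$ stays bounded below (say, $\geq 1/2$) uniformly in $t$ on the allowed range of $\lambda$, then justify dividing out while tracking the $\lambda^3$-type residuals. The cutoff $\lambda \leq \tfrac{1}{4}C_0\sqrt{T}$ is specifically calibrated so that after using Cauchy--Schwarz, the exponential moments at rate $C_0\sqrt{T}/2$ remain controlled by Condition~A, leaving room to absorb the $(C_0\sqrt{T})^{-3}$ factor coming from the cubic moment bound of Lemma~\ref{lem:31}.
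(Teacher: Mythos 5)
Your computational plan is essentially the paper's: Taylor-expand the exponential, use the moment bounds of Lemma~\ref{lem:31} (namely $\mathbb{E}[|\xi_t|^k\mid\mathcal{F}_{t-1}]\le k!(C_0\sqrt{T})^{-k}C_1$) to control everything of order three and higher, so that $\sum_t b_t(\lambda)=\lambda\cdot(\text{quadratic variation})+O(\lambda^2/\sqrt{T})$, and then convert the quadratic-variation term into $\lambda V_T$. The paper arrives at the same two error terms, handling the denominator with the one-sided bounds $1\le\mathbb{E}[\exp(\lambda\xi_t)]\le 1+C\lambda^2 T^{-1}$; the lower bound is just Jensen's inequality together with $\mathbb{E}[\xi_t]=0$, which also disposes of your worry about keeping $f_t(\lambda)$ bounded away from zero ($f_t(\lambda)\ge 1$ for free).

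The one substantive discrepancy is that you define $b_t$ with \emph{conditional} expectations, $b_t(\lambda)=\mathbb{E}[\xi_t e^{\lambda\xi_t}\mid\mathcal{F}_{t-1}]/\mathbb{E}[e^{\lambda\xi_t}\mid\mathcal{F}_{t-1}]$, so your $B_T(\lambda)$ is a random variable and your conclusion is only the $L^1$ bound $\mathbb{E}\left|B_T(\lambda)-\lambda\right|\le C(\lambda V_T+\lambda^2/\sqrt{T})$. In the paper, $b_t(\lambda)=\mathbb{E}[\xi_t e^{\lambda\xi_t}]/\mathbb{E}[e^{\lambda\xi_t}]$ with unconditional expectations; this is precisely where the authors deviate from the Fan et al.\ setting, because here the conditional second moment is random. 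Consequently $B_T(\lambda)$ is deterministic, the main term is $\lambda\,\mathbb{E}[W_T]$ by the tower property, and the stated inequality is a genuine pointwise one via $|\mathbb{E}[W_T]-1|\le\mathbb{E}|W_T-1|=V_T$. This matters downstream: the proof of Theorem~\ref{thm:fan_refine} uses Lemma~\ref{lem:32} to replace the event $\{U_T(\lambda)+B_T(\lambda)>u\}$ by $\{U_T(\lambda)+\lambda+C(\lambda V_T+\lambda^2/\sqrt{T})>u\}$ inside the change-of-measure integral, which requires a deterministic bound on $B_T$; an $L^1$ bound on a random $B_T$ does not give that event inclusion. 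The fix is immediate — drop the conditioning everywhere in your expansion — but as written your argument establishes a weaker statement than the one the surrounding proof needs.
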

\begin{proof}
By definition, for $t = 1,\dots, T$,
\begin{align*}
    b_t(\lambda) &= \frac{\mathbb{E}\left[\xi_t\exp\left(\lambda \xi_t\right)\right]}{\mathbb{E}\left[\exp\left(\lambda \xi_t\right)\right]}.
\end{align*}
Jensen's inequality and $\mathbb{E}[\xi_t] = \mathbb{E}[\mathbb{E}[\xi_t|\mathcal{F}_{t-1}]] = 0$ implies that $\mathbb{E}[\exp(\lambda \xi_t)]\geq 1$ and
\begin{align*}
    &\mathbb{E}\left[\xi_t\exp\left(\lambda \xi_t\right)\right] = \mathbb{E}\left[\xi_t\left(\exp\left(\lambda \xi_t\right)- 1\right)\right] \geq 0,\qquad\mathrm{for}\ \lambda \geq 0.
\end{align*}
We find that 
\begin{align*}
B_T(\lambda) &\leq \sum^T_{t=1}\mathbb{E}[\xi_t\exp(\lambda \xi_t)]\\
&= \lambda \mathbb{E}[W_T] + \sum^T_{t=1}\sum^\infty_{k=2} \mathbb{E}\left[\frac{\xi_t (\lambda \xi_t)^k}{k!}\right],
\end{align*}
by the Taylor series expansion for $\exp(x)$. Recall that $W_T = \sum^T_{t=1}\mathbb{E}_{\nu}\left[\xi^2_{t}| \mathcal{F}_{t-1}\right]$ is the sum of the conditional second moment. Here, using Lemma~\ref{lem:31} and $\mathbb{E}\left[\xi^{k+1}_t\right] = \mathbb{E}\left[\mathbb{E}\left[\xi^{k+1}_t | \set{F}_{t-1}\right]\right]$, for some constant $C_2$,
\begin{align}
    \sum^T_{t=1}\sum^\infty_{k=2}\left| \mathbb{E}\left[\frac{\xi_t (\lambda \xi_t)^k}{k!}\right]\right| &\leq \sum^T_{t=1}\sum^\infty_{k=2}\left| \mathbb{E}\left[\xi^{k+1}_t\right]\right|\frac{\lambda^k}{k!}\nonumber\\
    \label{eq:bound_lem32}
    &\leq \sum^T_{t=1}\sum^\infty_{k=2}(k+1)!\left(C_0 T^{1/2}\right)^{-(k+1)}C_1\frac{\lambda^k}{k!}\nonumber\\
    &\leq C_2\lambda^2/\sqrt{T}.
\end{align}
Therefore, 
\begin{align*}
B_T(\lambda) \leq \lambda + \lambda V_T + C_2\lambda^2/\sqrt{T}.
\end{align*}
Next, we show the lower bound of $B_T(\lambda)$. First, by using Lemma~\ref{lem:31}, using some constant $C_3>0$, for all $0\leq \lambda \leq \frac{1}{4} C_0 \sqrt{T}$,
\begin{align*}
    \mathbb{E}\left[\exp(\lambda\xi_t)\right] &\leq 1 + \sum^\infty_{k=2}\left|\mathbb{E}\left[\frac{(\lambda\xi_t)^k}{k!}\right]\right|\\
    &\leq 1 + C_1\sum^\infty_{k=2}\lambda^k(C_0 \sqrt{T})^{-k}\\
    &\leq 1 + C_3\lambda^2 T^{-1}.
\end{align*}
This inequality together with \eqref{eq:bound_lem32} implies the lower bound of $B_T(\lambda)$: for some positive constant $C_4$,
\begin{align*}
    B_T(\lambda) &=\sum^T_{t=1}\frac{\mathbb{E}\left[\xi_t\exp\left(\lambda \xi_t\right)\right]}{\mathbb{E}\left[\exp\left(\lambda \xi_t\right)\right]}\\
    &\geq \left(\sum^T_{t=1}\mathbb{E}[\xi_t \exp(\lambda \xi_t)]\right)\big(1 + C_3\lambda^2 T^{-1}\big)^{-1}\\
    &= \left(\lambda W_T + \sum^T_{t=1}\sum^\infty_{k=2} \mathbb{E}\left[\frac{\xi_t (\lambda \xi_t)^k}{k!}\right]\right)\big(1 + C_3\lambda^2 T^{-1}\big)^{-1}\\
    &\geq \left(\lambda W_T - \sum^T_{t=1}\sum^\infty_{k=2}\left| \mathbb{E}\left[\frac{\xi_t (\lambda \xi_t)^k}{k!}\right]\right|\right)\big(1 + C_3\lambda^2 T^{-1}\big)^{-1}\\
    &\geq \big(\lambda - \lambda V_T - C_2\lambda^2 /\sqrt{T}\big)\big(1 + C_3\lambda^2 T^{-1}\big)^{-1}\\
    &\geq \lambda - \lambda V_T - C_4 \lambda^2 /\sqrt{T}.
\end{align*}
This concludes the proof.
\end{proof}

\begin{lemma}
\label{lem:33}
Assume Condition~A. There exists some constant $C>0$ such that for all $0\leq \lambda \leq \frac{1}{4}C_0\sqrt{T}$,
\begin{align*}
     \left|\Psi_T(\lambda) - \frac{\lambda^2}{2}\right| \leq C\left(\lambda^3/\sqrt{T} + \lambda^2V_T\right).
\end{align*}
\end{lemma}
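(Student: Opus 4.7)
\textbf{Proof plan for Lemma~\ref{lem:33}.} My approach is to Taylor-expand $\log\mathbb{E}[\exp(\lambda \xi_t)]$ to second order and then sum over $t$, using Lemma~\ref{lem:31} to control remainders and using $V_T$ to replace $\sum_t \mathbb{E}[\xi_t^2]$ by $1$. Concretely, the martingale property yields $\mathbb{E}[\xi_t] = \mathbb{E}[\mathbb{E}[\xi_t \mid \mathcal{F}_{t-1}]] = 0$, so the power-series expansion of $\exp$ gives
\[
\mathbb{E}[\exp(\lambda \xi_t)] = 1 + \tfrac{\lambda^2}{2}\mathbb{E}[\xi_t^2] + \sum_{k \geq 3}\frac{\lambda^k \mathbb{E}[\xi_t^k]}{k!}.
\]
By Lemma~\ref{lem:31}, $|\mathbb{E}[\xi_t^k]| \leq k!(C_0\sqrt{T})^{-k}C_1$, so for $0\leq\lambda\leq\tfrac14 C_0\sqrt{T}$ the tail is a geometric series bounded by $C_1\sum_{k\geq 3}(\lambda/(C_0\sqrt{T}))^k \leq C'\lambda^3/T^{3/2}$. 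Thus, denoting the right-hand side by $1 + y_t$, we have $y_t = \tfrac{\lambda^2}{2}\mathbb{E}[\xi_t^2] + O(\lambda^3/T^{3/2})$, and (again by Lemma~\ref{lem:31}) $y_t = O(\lambda^2/T)$, which is uniformly bounded by a constant less than $1/2$ in the allowed range.

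Next, I would invoke $\log(1 + y_t) = y_t + O(y_t^2)$, valid because $|y_t|$ is bounded away from $1$. The quadratic remainder $y_t^2 = O(\lambda^4/T^2)$ sums to $O(\lambda^4/T)$, and since $\lambda^4/T = \lambda \cdot \lambda^3/T \leq \tfrac14 C_0\lambda^3/\sqrt{T}$ in the stated range of $\lambda$, this remainder is absorbed into the $O(\lambda^3/\sqrt{T})$ term. Similarly, the $y_t$ contribution from the $O(\lambda^3/T^{3/2})$ piece sums to $O(\lambda^3/\sqrt{T})$. Therefore
\[
\Psi_T(\lambda) = \tfrac{\lambda^2}{2}\sum_{t=1}^T \mathbb{E}[\xi_t^2] + O\!\left(\lambda^3/\sqrt{T}\right).
\]

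Finally, I would relate $\sum_t\mathbb{E}[\xi_t^2]$ to $\mathbb{E}[W_T]$ via the tower property: $\sum_t \mathbb{E}[\xi_t^2] = \mathbb{E}\!\left[\sum_t \mathbb{E}[\xi_t^2 \mid \mathcal{F}_{t-1}]\right] = \mathbb{E}[W_T]$, and then use
\[
|\mathbb{E}[W_T] - 1| = |\mathbb{E}[W_T - 1]| \leq \mathbb{E}[|W_T - 1|] = V_T
\]
to conclude that $\bigl|\tfrac{\lambda^2}{2}\mathbb{E}[W_T] - \tfrac{\lambda^2}{2}\bigr| \leq \tfrac{\lambda^2}{2} V_T$. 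Combining the two error bounds yields $|\Psi_T(\lambda) - \lambda^2/2| \leq C(\lambda^3/\sqrt{T} + \lambda^2 V_T)$, as desired.

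The only delicate point is bookkeeping: one must verify that $|y_t|$ stays in the validity region of the $\log$ expansion (handled by the $\lambda \leq \tfrac14 C_0\sqrt{T}$ constraint and Lemma~\ref{lem:31}) and that the $O(\lambda^4/T)$ remainder from the quadratic term of $\log$ is genuinely dominated by $\lambda^3/\sqrt{T}$ in the allowed range of $\lambda$. Everything else is routine, and the structure of the argument closely parallels that of Lemma~\ref{lem:32}.
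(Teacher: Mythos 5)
Your proof is correct and follows essentially the same route as the paper's: a second-order Taylor expansion of $\log\mathbb{E}[\exp(\lambda\xi_t)]$ with remainders controlled by the moment bound of Lemma~\ref{lem:31}, followed by $\bigl|\mathbb{E}[W_T]-1\bigr|\le V_T$ to produce the $\lambda^2 V_T$ term. Your worry about $|y_t|$ staying in the validity region of the logarithm is in fact moot, since Jensen's inequality gives $y_t\ge 0$ and $|\log(1+y)-y|\le y^2/2$ holds for all $y\ge 0$, which is exactly how the paper handles it.
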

\begin{proof}
First, we have $\mathbb{E}\left[\exp(\lambda \xi_t)\right] \geq 1$ from Jensen's inequality.
Using Taylor expansion of $\log (1+\varphi)$, $\varphi\ge 0$, there exists $0 \leq \varphi^\dagger_t\leq \mathbb{E}\left[\exp(\lambda \xi_t)\right] - 1 \; (\text{for } t=1,\ldots, T)$ such that
\begin{align*}
    \Psi_T(\lambda) & = \log\prod^T_{t=1}\mathbb{E}\left[\exp\left(\lambda \xi_t\right)\right]\\
    &= \sum^T_{t=1}\left(\left(\mathbb{E}\left[\exp(\lambda \xi_t)\right] - 1\right) -   \frac{1}{2\left(1+\varphi^\dagger_t\right)^2}\left(\mathbb{E}\left[\exp(\lambda \xi_t)\right] - 1\right)^2\right).
\end{align*}
Because $(\xi_t)$ is a martingale difference sequence, $\mathbb{E}[\xi_t] = \mathbb{E}[\mathbb{E}[\xi_t|\mathcal{F}_{t-1}]]= 0$. Therefore,
\begin{align*}
     &\Psi_T(\lambda) - \frac{\lambda^2}{2}\mathbb{E}[W_T]
     \\
     &= \sum^T_{t=1}\left(\left(\mathbb{E}\left[\exp(\lambda \xi_t)\right] - 1\right) -   \frac{1}{2\left(1+\varphi^\dagger_t\right)^2}\left(\mathbb{E}\left[\exp(\lambda \xi_t)\right] - 1\right)^2\right) - \sum^T_{t=1}\left( \lambda \mathbb{E}[\xi_t] + \frac{\lambda^2}{2}\mathbb{E}[\xi^2_t]\right) 
\end{align*}
Then, by using $\mathbb{E}\left[\exp(\lambda \xi_t)\right] \geq 1$, we have 
\begin{align*}
    \left|\Psi_T(\lambda) - \frac{\lambda^2}{2}\mathbb{E}[W_T]\right| &\leq \sum^T_{t=1}\left|\mathbb{E}\left[\exp(\lambda \xi_t)\right] - 1 - \lambda \mathbb{E}[\xi_t] - \frac{\lambda^2}{2}\mathbb{E}[\xi^2_t]\right| + \frac{1}{2} \sum^T_{t=1}\left(\mathbb{E}\left[\exp(\lambda \xi_t)\right] - 1\right)^2\\
    &\leq \sum^T_{t=1}\sum^{+\infty}_{k=3}\frac{\lambda^k}{k!}\left|\mathbb{E}\left[\xi^k_t\right]\right| + \frac{1}{2} \sum^T_{t=1}\left(\sum^{+\infty}_{k=1}\frac{\lambda^k}{k!}\left|\mathbb{E}\left[\xi^k_t\right]\right|\right)^2.
\end{align*}
From Lemma~\ref{lem:31}, for a constant $C_3$,
\begin{align*}
    \left|\Psi_T(\lambda) - \frac{\lambda^2}{2}\mathbb{E}[W_T]\right| \leq C_3\lambda^3/\sqrt{T}
\end{align*}
In conclusion, we have
\begin{align*}
    \left|\Psi_T(\lambda) - \frac{\lambda^2}{2}\right| \leq C_3\lambda^3/\sqrt{T} + \frac{\lambda^2}{2}\left(\mathbb{E}[W_T-1]\right)\leq C_3\lambda^3 /\sqrt{T} + \frac{\lambda^2}{2}\mathbb{E}[|W_T-1|].
\end{align*}
Recall that $V_T = \mathbb{E}[|W_T-1|]$. Then, 
\begin{align*}
    \left|\Psi_T(\lambda) - \frac{\lambda^2}{2}\right| \leq C\left(\lambda^3/\sqrt{T} + \lambda^2V_T\right).
\end{align*}
\end{proof}
\mkato{
\begin{lemma}
\label{lem:34}
Assume Condition~A. For $\varepsilon > 0$ there exists $T_0 > 0$ and some constants $\tilde{C}_2,\tilde{C}_3,\tilde{C}_4 >0$ such that for all $T \geq T_0$ and $0\leq \lambda \leq \frac{1}{4}C_0\sqrt{T}$,
\begin{align}
\frac{\mathbb{E}\left[\exp\left(\overline{\lambda}\sum^T_{t=1} \xi_t\right)\right]}{\prod^T_{t=1}\mathbb{E}\left[\exp\left(\overline{\lambda} \xi_t\right)\right]} 
&\leq \exp\left(\tilde{C}_2\overline{\lambda}^4/T + \tilde{C}_3\overline{\lambda}^3/\sqrt{T} + \tilde{C}_4T_0 + \varepsilon \overline{\lambda}^2  / T^\alpha \right).
\end{align}
\end{lemma}
}
\begin{proof}
Here, we have
\begin{align*}
    \mathbb{E}\left[\exp\left(\overline{\lambda}\sum^T_{t=1} \xi_t\right)\right] = \mathbb{E}\left[\prod^T_{t=1}\mathbb{E}\left[\exp\left(\overline{\lambda} \xi_t\right)|\mathcal{F}_{t-1}\right]\right].
\end{align*}
Then, by using Lemma~\ref{lem:31}, for each $t =1, \ldots, T$,
\begin{align*}
    \mathbb{E}\left[\exp\left(\overline{\lambda} \xi_t\right)|\mathcal{F}_{t-1}\right]&\leq 1 + \frac{\overline{\lambda}^2}{2} \mathbb{E}\left[\xi^2_t| \mathcal{F}_{t-1}\right]  + \sum^\infty_{k=3}
    \frac{\overline{\lambda}^k\mathbb{E}\left[\xi^k_t| \mathcal{F}_{t-1}\right]}{k!}\\
    &\leq 1 + \frac{\overline{\lambda}^2}{2} \mathbb{E}\left[\xi^2_t| \mathcal{F}_{t-1}\right]  + \sum^\infty_{k=3}\overline{\lambda}^k C_1(C_0\sqrt{T})^{-k}\\
    &\leq 1 + \frac{\overline{\lambda}^2}{2} \mathbb{E}\left[\xi^2_t| \mathcal{F}_{t-1}\right]  + O\left(\overline{\lambda}^3/T^{3/2}\right).
\end{align*}
Therefore,
\begin{align*}
    \mathbb{E}\left[\exp\left(\overline{\lambda}\sum^T_{t=1} \xi_t\right)\right] &\leq \mathbb{E}\left[\prod^T_{t=1}\left(1 + \frac{\overline{\lambda}^2}{2} \mathbb{E}\left[\xi^2_t| \mathcal{F}_{t-1}\right] +  O\left(\overline{\lambda}^3/T^{3/2}\right)\right)\right]
    \\
    & \leq \mathbb{E}\left[\prod^T_{t=1}\exp\left(\frac{\overline{\lambda}^2}{2} \mathbb{E}\left[\xi^2_t| \mathcal{F}_{t-1}\right] +  O\left(\overline{\lambda}^3/T^{3/2}\right)\right)\right].
\end{align*}
Similarly, by using Lemma~\ref{lem:31} and constants $c, \tilde{c}>0$, we have
\begin{align*}
    \mathbb{E}\left[\exp\left(\overline{\lambda} \xi_t\right)\right]
    &= \exp\left(\log \mathbb{E}\left[\exp\left(\overline{\lambda} \xi_t\right)\right]\right)
    \\
    &= \exp\left(\log \left(1 +  \sum^\infty_{k=2}\mathbb{E}\left[\frac{(\overline{\lambda}\xi_t)^k}{k!}\right]\right)\right)
    \\
    &= \exp\left(\frac{\overline{\lambda}^2}{2} \mathbb{E}\left[\xi^2_t\right] + \sum^\infty_{k=3}\mathbb{E}\left[\frac{(\overline{\lambda}\xi_t)^k}{k!}\right] - \frac{1}{2}\left( \sum^\infty_{k=2}\mathbb{E}\left[\frac{(\overline{\lambda}\xi_t)^k}{k!}\right]\right)^2 + \frac{1}{3}\left( \sum^\infty_{k=2}\mathbb{E}\left[\frac{(\overline{\lambda}\xi_t)^k}{k!}\right]\right)^3 + \cdots\right)
    \\
     & \stackrel{(a)}{\geq} \exp\left(\frac{\overline{\lambda}^2}{2} \mathbb{E}\left[\xi^2_t\right] - \sum^\infty_{k=3}\mathbb{E}\left[\frac{|\overline{\lambda}\xi_t|^k}{k!}\right] - \frac{1}{2}\left( \sum^\infty_{k=2}\mathbb{E}\left[\frac{|\overline{\lambda}\xi_t|^k}{k!}\right]\right)^2 - \frac{1}{3}\left( \sum^\infty_{k=2}\mathbb{E}\left[\frac{|\overline{\lambda}\xi_t|^k}{k!}\right]\right)^3 + \cdots\right)
     \\
     & \stackrel{(b)}{\ge} \exp\left(\frac{\overline{\lambda}^2}{2} \mathbb{E}\left[\xi^2_t\right] - c\overline{\lambda}^3/T^{3/2} - \frac{1}{2}\left( \frac{4 C_1\overline{\lambda}^2}{3 C_0^2 T}\right)^2 - \frac{1}{3}\left( \frac{4 C_1\overline{\lambda}^2}{3 C_0^2 T}\right)^3 - \frac{1}{4}\left( \frac{4 C_1 \overline{\lambda}^2}{3 C_0^2 T}\right)^4 -  \cdots\right)
     \\
     & \ge \exp\left(\frac{\overline{\lambda}^2}{2} \mathbb{E}\left[\xi^2_t\right] - c\overline{\lambda}^3/T^{3/2} - \left( \frac{4 C_1\overline{\lambda}^2}{3 C_0^2 T}\right)^2 - \left( \frac{4 C_1\overline{\lambda}^2}{3 C_0^2 T}\right)^3 - \left( \frac{4 C_1\overline{\lambda}^2}{3 C_0^2 T}\right)^4 - \cdots\right)
     \\
     & \ge \exp\left(\frac{\overline{\lambda}^2}{2} \mathbb{E}\left[\xi^2_t\right] - c\overline{\lambda}^3/T^{3/2} - \left(\frac{4 C_1\overline{\lambda}^2}{3 C_0^2 T}\right)^2\frac{1}{1- \frac{1}{2}}\right)
     \\
     & \stackrel{(c)}{\geq} \exp\left(\frac{\overline{\lambda}^2}{2} \mathbb{E}\left[\xi^2_t\right] - c \left(\overline{\lambda}^3/\sqrt{T}\right)^3- \tilde{c} \overline{\lambda}^4/T^2\right).
\end{align*}
For $(a)$, we used Jensen's inequality for $m = 2,3,\dots$ as
\begin{align*}
 - (-1)^{m} \frac{1}{m}\left( \sum^\infty_{k=2}\mathbb{E}\left[\frac{(\overline{\lambda}\xi_t)^k}{k!}\right]\right)^m \geq - \frac{1}{m}\left( \sum^\infty_{k=2}\left|\mathbb{E}\left[\frac{(\overline{\lambda}\xi_t)^k}{k!}\right]\right|\right)^m \ge   - \frac{1}{m}\left(\sum^\infty_{k=2}\mathbb{E}\left[\frac{|\overline{\lambda}\xi_t|^k}{k!}\right]\right)^m.
\end{align*}
For $(b)$, we used the fact there exist a constant $c>0$ such that 
\begin{align}
    \Ebb \left[\sum^\infty_{k=2}\frac{|\overline{\lambda}\xi_t|^k}{k!} \right]& \stackrel{(c)}{\le}  \sum^\infty_{k=2}\frac{\overline{\lambda}^k}{k!} \cdot k! C_1 \frac{1}{(C_0 \sqrt{T})^k}  = C_1 \sum^\infty_{k=2} \left(\frac{\overline{\lambda}}{C_0\sqrt{T}}\right)^k = \frac{C_1\overline{\lambda}^2}{C_0^2T} \frac{1}{1 - \frac{\overline{\lambda}}{C_0 \sqrt{T}}}
    \nonumber\\
    & \stackrel{(d)}{\le}  \frac{ C_1\overline{\lambda}^2}{C_0^2T}  \frac{1}{1 - \frac{1}{4}} = \frac{4 C_1\overline{\lambda}^2}{3 C_0^2 T}
 \stackrel{(d)}{\le} \frac{1}{2}\nonumber,
\end{align}
and
\begin{align*}
    \Ebb\left[\sum^\infty_{k=3}\frac{|\overline{\lambda}\xi_t|^k}{k!} \right]&  \stackrel{(c)}{\le} \sum^\infty_{k=3}\frac{\overline{\lambda}^k}{k!} \cdot k! C_1 \frac{1}{(C_0 \sqrt{T})^k} \le c \left(\frac{\overline{\lambda}}{\sqrt{T}} \right)^3,
\end{align*}
for $(c)$, we used Lemma~\ref{lem:31}, and for $(d)$, we used \eqref{eq:34}.
Then, by combining the above upper and lower bounds, with some constant $\tilde{C}_0, \tilde{C}_1>0$,
\begin{align}
\frac{\mathbb{E}\left[\exp\left(\overline{\lambda}\sum^T_{t=1} \xi_t\right)\right]}{\prod^T_{t=1}\mathbb{E}\left[\exp\left(\overline{\lambda} \xi_t\right)\right]}&\leq \frac{ \mathbb{E}\left[\prod^T_{t=1}\exp\left(\frac{\overline{\lambda}^2}{2} \mathbb{E}\left[\xi^2_t| \mathcal{F}_{t-1}\right] + O\left( \left(\overline{\lambda}/\sqrt{T} \right)^3\right)\right)\right]}{\prod^T_{t=1}\exp\left(\frac{\overline{\lambda}^2}{2} \mathbb{E}\left[\xi^2_t\right]  - c \left(\overline{\lambda}^3/\sqrt{T}\right)^3- \tilde{c} \overline{\lambda}^4/T^2 \right)}\nonumber
\\
\label{eq:upper_bound_fraction}
&= \exp\left(\tilde{C}_0\overline{\lambda}^4/T + \tilde{C}_1\overline{\lambda}^3/\sqrt{T} \right)\mathbb{E}\left[\prod^T_{t=1}\exp\left(\overline{\lambda}^2\left(\mathbb{E}[\xi^2_t| \mathcal{F}_{t-1}] - \mathbb{E}[\xi^2_t]\right)/2 \right)\right].
\end{align}
Using Hölder's inequality,
\begin{align}
\frac{\mathbb{E}\left[\exp\left(\overline{\lambda}\sum^T_{t=1} \xi_t\right)\right]}{\prod^T_{t=1}\mathbb{E}\left[\exp\left(\overline{\lambda} \xi_t\right)\right]}&\leq \exp\left(\tilde{C}_0\overline{\lambda}^4/T + \tilde{C}_1\overline{\lambda}^3/\sqrt{T} \right)\mathbb{E}\left[\prod^T_{t=1}\exp\left(\overline{\lambda}^2\left(\mathbb{E}[\xi^2_t| \mathcal{F}_{t-1}] - \mathbb{E}[\xi^2_t]\right)/2 \right)\right]
\nonumber\\
\label{eq:jensen}
&\leq \exp\left(\tilde{C}_0\overline{\lambda}^4/T + \tilde{C}_1\overline{\lambda}^3/\sqrt{T} \right) \prod^T_{t=1} \left(\mathbb{E}\left[\exp\left(T\overline{\lambda}^2\left(\mathbb{E}[\xi^2_t| \mathcal{F}_{t-1}] - \mathbb{E}[\xi^2_t]\right)/2 \right)\right] \right)^{\frac{1}{T}}.
\end{align}
Note that the term
\begin{align*}
\frac{\overline{\lambda}^2}{2}\left(\mathbb{E}[\xi^2_t| \mathcal{F}_{t-1}] - \mathbb{E}[\xi^2_t]\right)
&=\frac{\overline{\lambda}^2}{2T}\left(\mathbb{E}\left[\left(\hat{X}_{1, t} - \hat{X}_{a, t}- \Gap\right)^2\Big| \mathcal{F}_{t-1}\right] - \mathbb{E}\left[\left(\hat{X}_{1, t} - \hat{X}_{a, t}- \Gap\right)^2 \right]\right)
\end{align*}
is bounded by some constant because $w_{a,t}$ and $\hat{\mu}_{a,t}$ are bounded and $\overline{\lambda} \le \sqrt{T}\min\left\{\frac{1}{4} C_0, \sqrt{\frac{3 C_0^2}{8 C_1}}\right\}$. Then, Lemma~\ref{lem:as_conv} and Proposition~\ref{prp:lr_conv_theorem}, with probability one, as $t\to \infty$, 
\begin{align*}
    & t^\alpha\left|\mathbb{E}\left[\left(\hat{X}_{1, t} - \hat{X}_{a, t}- \Gap\right)^2\Big| \mathcal{F}_{t-1}\right] - \mathbb{E}\left[\left(\hat{X}_{1, t} - \hat{X}_{a, t}- \Gap\right)^2 \right]\right| 
    \\
    & \leq t^\alpha\left|\mathbb{E}\left[\left(\hat{X}_{1, t} - \hat{X}_{a, t}- \Gap\right)^2\Big| \mathcal{F}_{t-1}\right] -  \left(\frac{\sigma^2_1}{w_1^*} +\frac{\sigma^2_0}{w_a^*} \right)\right| +  t^\alpha\left|\mathbb{E}\left[\left(\hat{X}_{1, t} - \hat{X}_{a, t}- \Gap\right)^2\right] -  \left(\frac{\sigma^2_1}{w_1^*} +\frac{\sigma^2_0}{w_a^*} \right)\right| 
    \\
    & = t^\alpha\left|\mathbb{E}\left[\left(\hat{X}_{1, t} - \hat{X}_{a, t}- \Gap\right)^2\Big| \mathcal{F}_{t-1}\right] -  \left(\frac{\sigma^2_1}{w_1^*} +\frac{\sigma^2_0}{w_a^*} \right)\right|
    \\
    & \ \ \ +  t^\alpha\left|\mathbb{E}\left[\mathbb{E}\left[\left(\hat{X}_{1, t} - \hat{X}_{0, t}- \Gap\right)^2 \middle| \set{F}_{t-1}\right] -  \left(\frac{\sigma^2_1}{w_1^*} +\frac{\sigma^2_0}{w_a^*} \right)\right]\right| 
    \\
    & \to  0,
\end{align*}
where we used the boundedness of $\mathbb{E}\left[\left(\hat{X}_{1, t} - \hat{X}_{a, t}- \Gap\right)^2\Big| \mathcal{F}_{t-1}\right]$ to derive the mean convergence.
Here, let us define an event $\mathcal{E}$ such that 
\begin{align*}
    \mathcal{E} &= \left\{t^{\alpha}\left|\mathbb{E}\left[\left(\hat{X}_{1, t} - \hat{X}_{a, t}- \Gap\right)^2\Big| \mathcal{F}_{t-1}\right] - \mathbb{E}\left[\left(\hat{X}_{1, t} - \hat{X}_{a, t}- \Gap\right)^2 \right]\right| \to 0\ \mathrm{as}\ t \to \infty\right\}\\ 
    &= \left\{Tt^{\alpha}\left|\left(\mathbb{E}[\xi^2_t| \mathcal{F}_{t-1}] - \mathbb{E}[\xi^2_t]\right)\right| \to 0\ \mathrm{as}\ t\to \infty\right\},
\end{align*}
which occurs with probability one, where $\alpha > 0$ is some constant. Note that the second equality holds by definition. Without loss of generality, we assume that $\alpha \neq 1$. On the event $\mathcal{E}$, for all $\varepsilon > 0$, there exists $T_0 \geq 0$ such that for all $t > T_0$,
\begin{align*}
    \exp\left(T\overline{\lambda}^2\left(\mathbb{E}[\xi^2_t| \mathcal{F}_{t-1}] - \mathbb{E}[\xi^2_t]\right)/2 \right) \leq \exp\left(\overline{\lambda}^2\varepsilon / t^\alpha \right).
\end{align*}
Because this event occurs with probability one and $\mathbb{E}[\xi^2_t| \mathcal{F}_{t-1}] - \mathbb{E}[\xi^2_t]$ is bounded, for all $\varepsilon > 0$, there exists $T_0$ such that for all $t > T_0$,
\begin{align*}
    \mathbb{E}\left[\exp\left(T\overline{\lambda}^2\left(\mathbb{E}[\xi^2_t| \mathcal{F}_{t-1}] - \mathbb{E}[\xi^2_t]\right)/2 \right)\right] \leq \exp\left(\overline{\lambda}^2\varepsilon / t^\alpha \right). 
\end{align*}
From this result, for all $t > T_0$,
\begin{align*}
    \mathbb{E}\left[\exp\left(T\overline{\lambda}^2\left(\mathbb{E}[\xi^2_t| \mathcal{F}_{t-1}] - \mathbb{E}[\xi^2_t]\right)/2 \right)\right]^{1/T} \leq \exp\left(\overline{\lambda}^2\varepsilon/(Tt^\alpha) \right). 
\end{align*}
Therefore, in \eqref{eq:jensen}, from the boundedness of the random variables, for a constant $C>0$
\begin{align*}
&\prod^T_{t=1} \left(\mathbb{E}\left[\exp\left(T\overline{\lambda}^2\left(\mathbb{E}[\xi^2_t| \mathcal{F}_{t-1}] - \mathbb{E}[\xi^2_t]\right)/2 \right)\right] \right)^{\frac{1}{T}}\\
&\leq \prod^T_{t=T_0 + 1}\exp\left(\overline{\lambda}^2\varepsilon/Tt^{\alpha}  \right)\prod^{T_0}_{t=1} \mathbb{E}\left[\exp\left(T\overline{\lambda}^2\left(\mathbb{E}[\xi^2_t| \mathcal{F}_{t-1}] - \mathbb{E}[\xi^2_t]\right)/2 \right)\right]^{1/T}\\
&\leq \prod^T_{t=T_0 + 1}\exp\left(\overline{\lambda}^2\varepsilon/Tt^{\alpha}  \right)\prod^{T_0}_{t=1} \exp(C)\\
&\leq \exp\left(\overline{\lambda}^2\varepsilon  T^{-\alpha} / (1 - \alpha) + CT_0 \right).
\end{align*}
In summary, for any $\varepsilon > 0$ and some constants $\tilde{C}_2, \tilde{C}_3, \tilde{C}_4>0$, there exists $T_0 > 0$ such that for all $T \geq T_0$, 
\begin{align*}
\frac{\mathbb{E}\left[\exp\left(\overline{\lambda}\sum^T_{t=1} \xi_t\right)\right]}{\prod^T_{t=1}\mathbb{E}\left[\exp\left(\overline{\lambda} \xi_t\right)\right]} 
&\leq \exp\left(\tilde{C}_2\overline{\lambda}^4/T + \tilde{C}_3\overline{\lambda}^3/\sqrt{T} + \tilde{C}_4T_0 + \varepsilon \overline{\lambda}^2  T^{-\alpha} / (1 - \alpha)\right).
\end{align*}
\end{proof}

By using Lemmas~\ref{lem:31}--\ref{lem:34}, we show the proof of Theorem~\ref{thm:fan_refine}.
\begin{proof}[Proof of Theorem~\ref{thm:fan_refine}]  
There exists some constant $C>0$ such that for all $1\leq u \leq \sqrt{T}\min\left\{\frac{1}{4} C_0, \sqrt{\frac{3 C_0^2}{8 C_1}}\right\}$,
\begin{align*}
&\mathbb{P}\left(Z_T > u\right)
\\
&=
\int\left(\prod^T_{t=1}\frac{\exp\left(\lambda \xi_t\right)}{\mathbb{E}\left[\exp\left(\lambda \xi_t\right)\right]}\right)\left(\prod^T_{t=1}\frac{\exp\left(\lambda \xi_t\right)}{\mathbb{E}\left[\exp\left(\lambda \xi_t\right)\right]}\right)^{-1}\mathbbm{1}[Z_T > u]\mathrm{d}\mathbb{P}
\\
&=
\int\left(\prod^T_{t=1}\frac{\exp\left(\lambda \xi_t\right)}{\mathbb{E}\left[\exp\left(\lambda \xi_t\right)\right]}\right) \exp\left(-\lambda \sum_{t=1}^T\xi_t + \log \left(\prod_{t=1}^T\Ebb[\exp(\lambda \xi_t)]\right)\right)\mathbbm{1}[Z_T > u]\mathrm{d}\mathbb{P}
\\
\\
&=
\int\left(\prod^T_{t=1}\frac{\exp\left(\lambda \xi_t\right)}{\mathbb{E}\left[\exp\left(\lambda \xi_t\right)\right]}\right)\exp\left(-\lambda Z_T + \Psi_T(\lambda)\right)\mathbbm{1}[Z_T > u]\mathrm{d}\mathbb{P}
\\
&=
\int\left(\prod^T_{t=1}\frac{\exp\left(\lambda \xi_t\right)}{\mathbb{E}\left[\exp\left(\lambda \xi_t\right)\right]}\right)\exp\left(-\lambda U_T(\lambda) - \lambda B_T(\lambda) + \Psi_T(\lambda)\right)\mathbbm{1}[U_T(\lambda) + B_T(\lambda) > u]\mathrm{d}\mathbb{P},
\\
& \le \int\left(\prod^T_{t=1}\frac{\exp\left(\lambda \xi_t\right)}{\mathbb{E}\left[\exp\left(\lambda \xi_t\right)\right]}\right)\exp\left(-\lambda U_T(\lambda)  -\frac{\lambda^2}{2} + C(\lambda^3 /\sqrt{T}+ \lambda^2 V_T)\right)
\\
& \qquad \qquad \qquad \cdot \mathbbm{1} \left[U_T(\lambda) + \lambda + C(\lambda V_T + \lambda^2 /\sqrt{T}) > u\right]\mathrm{d}\mathbb{P},
\end{align*}
where for the last inequality, we used Lemma~\ref{lem:32} and Lemma~\ref{lem:33}.
Let $\overline{\lambda} = \overline{\lambda}(u)$ be the largest solution of the equation
\begin{align*}
    \lambda + C(\lambda V_T + \lambda^2 /\sqrt{T}) = u.
\end{align*}
The definition of $\overline{\lambda}$ implies that there exist $C' > 0$ such that, for all $1\leq u \leq \sqrt{T}\min\left\{\frac{1}{4} C_0, \sqrt{\frac{3 C_0^2}{8 C_1}}\right\}$,
\begin{align}
\label{eq:34}
    C' u \leq \overline{\lambda}(u) = \frac{2u}{\sqrt{(1+CV_T)^2 + 4C u /\sqrt{T}} + C V_T +1} \leq u
\end{align}
and there exists $\theta \in (0, 1]$ such that 
\begin{align}
    \overline{\lambda}(u) & = u - C  (\overline{\lambda} V_T + \overline{\lambda}^2/\sqrt{T}) 
    \nonumber\\
    \label{eq:35}
    & = u - C \theta (u V_T + u^2 /\sqrt{T}) \in \left[C', \sqrt{T}\min\left\{\frac{1}{4} C_0, \sqrt{\frac{3 C_0^2}{8 C_1}}\right\}\right].
\end{align}
Then, we obtain for all $1\leq u \leq \sqrt{T}\min\left\{\frac{1}{4} C_0, \sqrt{\frac{3 C_0^2}{8 C_1}}\right\}$,
\begin{align*}
    & \mathbb{P}\left(Z_T > u\right) 
    \\
    & \leq \exp\left(C\left(\overline{\lambda}^3T^{-1/2} + \overline{\lambda}^2 V_T \right) - \overline{\lambda}^2/2\right)\int\left(\prod^T_{t=1}\frac{\exp\left(\overline{\lambda} \xi_t\right)}{\mathbb{E}\left[\exp\left(\overline{\lambda} \xi_t\right)\right]}\right)\exp\left(-\overline{\lambda} U_T(\overline{\lambda})\right)\mathbbm{1}[U_T(\overline{\lambda}) > 0]\mathrm{d}\mathbb{P}.
\end{align*}
Here, we have
\begin{align*}
    &\int\left(\prod^T_{t=1}\frac{\exp\left(\overline{\lambda} \xi_t\right)}{\mathbb{E}\left[\exp\left(\overline{\lambda} \xi_t\right)\right]}\right)\exp\left(-\overline{\lambda} U_T(\overline{\lambda})\right)\mathbbm{1}[U_T(\overline{\lambda}) > 0]\mathrm{d}\mathbb{P}\\
    &=\mathbb{E}\left[ \prod^T_{t=1}\frac{\exp\left(\overline{\lambda} \xi_t\right)}{\mathbb{E}\left[\exp\left(\overline{\lambda} \xi_t\right)\right]}\exp\left(-\overline{\lambda} U_T(\overline{\lambda})\right)\mathbbm{1}[U_T(\overline{\lambda}) > 0]\right].
\end{align*}
We also define another measure $\widetilde{\mathbb{P}}_\lambda $ as
\begin{align*}
    \mathrm{d}\widetilde{\mathbb{P}}_\lambda = \frac{\prod^T_{t=1}\exp\left(\lambda \xi_t\right)}{\mathbb{E}\left[\exp\left(\lambda\sum^T_{t=1} \xi_t\right)\right]}\mathrm{d}\mathbb{P}= \frac{\exp\left(\lambda \sum^T_{t=1}\xi_t\right)}{\mathbb{E}\left[\exp\left(\lambda\sum^T_{t=1} \xi_t\right)\right]}\mathrm{d}\mathbb{P}.
\end{align*}
Note that $\widetilde{\mathbb{P}}_\lambda $ is a probability measure, as the following holds
\begin{align*}
\int \mathrm{d}\widetilde{\mathbb{P}}_\lambda & = \int \frac{\exp\left(\lambda \sum^T_{t=1}\xi_t\right)}{\mathbb{E}\left[\exp\left(\lambda\sum^T_{t=1} \xi_t\right)\right]}\mathrm{d}\mathbb{P}
\\
& = \frac{1}{\mathbb{E}\left[\exp\left(\lambda\sum^T_{t=1} \xi_t\right)\right]} \int \exp\left(\lambda \sum^T_{t=1}\xi_t\right)  \mathrm{d}\mathbb{P}
\\
& = \frac{1}{\mathbb{E}\left[\exp\left(\lambda\sum^T_{t=1} \xi_t\right)\right]} \cdot \mathbb{E}\left[\exp\left(\lambda\sum^T_{t=1} \xi_t\right)\right]
\\
& = 1.
\end{align*}
We further denote $\widetilde{\mathbb{E}}_\lambda $ as the expectation under the measure $\widetilde{\mathbb{P}}_\lambda $.
In the same way as (37) and (38) in \citet{Fan2013}, it is easy to see that
\begin{align}
    &\mathbb{E}\left[ \prod^T_{t=1}\frac{\exp\left(\overline{\lambda} \xi_t\right)}{\mathbb{E}\left[\exp\left(\overline{\lambda} \xi_t\right)\right]}\exp\left(-\overline{\lambda} U_T(\overline{\lambda})\right)\mathbbm{1}[U_T(\overline{\lambda}) > 0]\right]
    \nonumber\\
    & = \frac{\mathbb{E}[\exp(\overline{\lambda}\sum_{t=1}^T \xi_t)]}{\prod^T_{t=1} \mathbb{E}\left[\exp\left(\overline{\lambda} \xi_t\right)\right]} 
    \mathbb{E}\left[ \frac{ \prod^T_{t=1}\exp\left(\overline{\lambda} \xi_t\right)}{\mathbb{E}[\exp(\overline{\lambda}\sum_{t=1}^T \xi_t)]}\exp\left(-\overline{\lambda} U_T(\overline{\lambda})\right)\mathbbm{1}[U_T(\overline{\lambda}) > 0]\right]
    \nonumber\\
    & = \frac{\mathbb{E}[\exp(\overline{\lambda}\sum_{t=1}^T \xi_t)]}{\prod^T_{t=1} \mathbb{E}\left[\exp\left(\overline{\lambda} \xi_t\right)\right]} \widetilde{\mathbb{E}}_{\overline{\lambda}}[\exp\left(-\overline{\lambda} U_T(\overline{\lambda})\right)\mathbbm{1}[U_T(\overline{\lambda}) > 0]]
    \nonumber\\
    & = \frac{\mathbb{E}\left[\exp\left(\lambda\sum^T_{t=1} \xi_t\right)\right]}{\prod^T_{t=1}\mathbb{E}\left[\exp\left(\lambda \xi_t\right)\right]} \int^\infty_0\overline{\lambda} \exp(-\overline{\lambda}y)\widetilde{\mathbb{P}}_{\overline{\lambda}}(0< U_T(\overline{\lambda}) < y) \mathrm{d}y, \label{eq:fans_37}.
\end{align}
Besides, for a standard Gaussian random variable $\mathcal{N}$, 
\begin{align}
        &\mathbb{E}\left[\exp\left(-\overline{\lambda} \mathcal{N}\right)\mathbbm{1}[\mathcal{N} > 0]\right] = \int^\infty_0\overline{\lambda} \exp(-\overline{\lambda}y)\mathbb{P}(0< \mathcal{N} < y) \mathrm{d}y.\label{eq:fans_38}
\end{align}
Then, from \eqref{eq:fans_37} and \eqref{eq:fans_38}, 
\begin{align*}
    \left| \widetilde{\mathbb{E}}_{\overline{\lambda}}[\exp\left(-\overline{\lambda} U_T(\overline{\lambda})\right)\mathbbm{1}[U_T(\overline{\lambda}) > 0]] - \mathbb{E}\left[\exp\left(-\overline{\lambda} \mathcal{N}\right)\mathbbm{1}[\mathcal{N} > 0]\right]\right| \le 2 \sup_{g} \left|\widetilde{\mathbb{P}}_{\overline{\lambda}}\left(U_T(\overline{\lambda})\leq g\right) - \Phi(g)\right|
\end{align*}
Therefore,
\begin{align*}
    &\mathbb{P}\left(Z_T > u\right)
    \\
    & \le \frac{\mathbb{E}\left[\exp\left(\lambda\sum^T_{t=1} \xi_t\right)\right]}{\prod^T_{t=1}\mathbb{E}\left[\exp\left(\lambda \xi_t\right)\right]} \exp\left(C\left(\overline{\lambda}^3/\sqrt{T} + \overline{\lambda}^2 V_T \right) - \overline{\lambda}^2/2\right)
    \widetilde{\mathbb{E}}_{\overline{\lambda}}[\exp\left(-\overline{\lambda} U_T(\overline{\lambda})\right)\mathbbm{1}[U_T(\overline{\lambda}) > 0]]
    \\
    &\leq \frac{\mathbb{E}\left[\exp\left(\lambda\sum^T_{t=1} \xi_t\right)\right]}{\prod^T_{t=1}\mathbb{E}\left[\exp\left(\lambda \xi_t\right)\right]} \exp\left(C\left(\overline{\lambda}^3/\sqrt{T} + \overline{\lambda}^2 V_T \right) - \overline{\lambda}^2/2\right) 
    \\
    & \qquad \qquad \qquad \times \left(\mathbb{E}\left[\exp\left(-\overline{\lambda} \mathcal{N}\right)\mathbbm{1}[\mathcal{N} > 0]\right] + 2\sup_{g}\left|\widetilde{\mathbb{P}}_{\overline{\lambda}}\left(U_T(\overline{\lambda})\leq g\right) - \Phi(g)\right|\right)
    \\
    &\leq \frac{\mathbb{E}\left[\exp\left(\lambda\sum^T_{t=1} \xi_t\right)\right]}{\prod^T_{t=1}\mathbb{E}\left[\exp\left(\lambda \xi_t\right)\right]}\exp\left(C\left(\overline{\lambda}^3/\sqrt{T} + \overline{\lambda}^2 V_T \right) - \overline{\lambda}^2/2\right)\left(\mathbb{E}\left[\exp\left(-\overline{\lambda} \mathcal{N}\right)\mathbbm{1}[\mathcal{N} > 0]\right] + 2 \right).
\end{align*}
Here,
\begin{align*}
\exp\left(- \overline{\lambda}^2/2\right)\mathbb{E}\left[\exp\left(-\overline{\lambda} \mathcal{N}\right)\mathbbm{1}[\mathcal{N} > 0]\right] = \frac{1}{\sqrt{2\pi}} \int^\infty_0 \exp\left(-(y+ \overline{\lambda})^2\right)\mathrm{d}y = 1 - \Phi(\overline{\lambda}).
\end{align*}
From (41) of \citet{Fan2013}, for all $\overline{\lambda} \ge C'$, we have
\begin{align*}
\frac{C'}{\sqrt{2 \pi}(1 + C')}\frac{1}{\overline{\lambda}}\exp\left( - \frac{\overline{\lambda}^2}{2}\right) \leq 1 - \Phi(\overline{\lambda}).
\end{align*}
Therefore, with some constant $\tilde{C}$, for all $1\leq u \leq \sqrt{T}\min\left\{\frac{1}{4} C_0, \sqrt{\frac{3 C_0^2}{8 C_1}}\right\}$,
\begin{align}
    \mathbb{P}\left({Z}_T > u\right) &\leq\frac{\mathbb{E}\left[\exp\left(\lambda\sum^T_{t=1} \xi_t\right)\right]}{\prod^T_{t=1}\mathbb{E}\left[\exp\left(\lambda \xi_t\right)\right]} \left\{
    \Big(1-\Phi(\overline{\lambda})\Big) + \overline{\lambda} \Big(1-\Phi(\overline{\lambda})\Big) c \right\}\exp\left(C\left(\overline{\lambda}^3/\sqrt{T} + \overline{\lambda}^2 V_T \right)\right)
   \nonumber\\
    &\leq \frac{\mathbb{E}\left[\exp\left(\lambda\sum^T_{t=1} \xi_t\right)\right]}{\prod^T_{t=1}\mathbb{E}\left[\exp\left(\lambda \xi_t\right)\right]}
    \tilde{C}\overline{\lambda}
    \Big(1-\Phi(\overline{\lambda})\Big)\exp\left(C\left(\overline{\lambda}^3/\sqrt{T} + \overline{\lambda}^2 V_T \right)\right),\label{eq:upper_bound_wo_frac}
\end{align}
where $c = \sqrt{2\pi} (1+C') / C'$, and $\tilde{C}$ is chosen to be $\tilde{C} \overline{\lambda} \geq (1 + \overline{\lambda} c)$ (Note that $\overline{\lambda} \ge C'$ from \eqref{eq:34}).
Now, from Lemma~\ref{lem:34}, for $\varepsilon > 0$, there exists $T_0 > 0$ such that for all $T \geq T_0$, 
\begin{align}
\label{eq:upper_bound_frac_fin}
\frac{\mathbb{E}\left[\exp\left(\overline{\lambda}\sum^T_{t=1} \xi_t\right)\right]}{\prod^T_{t=1}\mathbb{E}\left[\exp\left(\overline{\lambda} \xi_t\right)\right]} 
&\leq \exp\left(\tilde{C}_2\overline{\lambda}^4/T + \tilde{C}_3\overline{\lambda}^3/\sqrt{T} + \tilde{C}_4T_0 + \varepsilon \overline{\lambda}^2 \right).
\end{align}

In summary, by \eqref{eq:upper_bound_wo_frac} and \eqref{eq:upper_bound_frac_fin}, for all $1\leq u \leq \sqrt{T}\min\left\{\frac{1}{4} C_0, \sqrt{\frac{3 C_0^2}{8 C_1}}\right\}$,
\begin{align}
\label{eq:42}
    \frac{\mathbb{P}\left({Z}_T > u\right)}{1-\Phi(\overline{\lambda})} &\leq  \tilde{C}\overline{\lambda} \exp\left(\tilde{C}_2\overline{\lambda}^4/T + \tilde{C}_3\overline{\lambda}^3/\sqrt{T}  + C\left(\overline{\lambda}^3/\sqrt{T} + \overline{\lambda}^2 V_T + T_0\right) + \varepsilon \overline{\lambda}^2\right).
\end{align}
Next, we compare $1-\Phi(\overline{\lambda})$ with $1-\Phi(u)$.
Recall the following upper bound and lower bound on $1 - \Phi(x) = \Phi(-x)$:
\begin{align*}
    \frac{1}{\sqrt{2 \pi} (1+x)} \exp\left(-\frac{x^2}{2}\right) \le \Phi(-x) \le \frac{1}{\sqrt{ \pi} (1+x)} \exp\left(-\frac{x^2}{2}\right), \; x\ge 0.
\end{align*}
For all $1\leq u \leq \sqrt{T}\min\left\{\frac{1}{4} C_0, \sqrt{\frac{3 C_0^2}{8 C_1}}\right\}$,
\begin{align*}
1 & \leq \frac{\int^\infty_{\overline{\lambda}}\exp(-t^2/2)\mathrm{d}t}{\int^\infty_u \exp(-t^2/2)\mathrm{d}t} 
\\
& \le \frac{\frac{1}{\sqrt{\pi}(1+ \overline{\lambda})} \exp(- \overline{\lambda}^2/2)}{\frac{1}{\sqrt{2\pi}(1+ u)} \exp(- u^2/2)}
\\
& = \sqrt{2}\frac{1+u}{1+ \overline{\lambda}}\exp((u^2 - \overline{\lambda}^2)/2).
\end{align*}
From \eqref{eq:35}, we have
\begin{align*}
    u^2 - \overline{\lambda}^2 & = (u + \overline{\lambda})(u - \overline{\lambda})
    \\
    &  \le  2u  (C \theta (u V_T + u^2 /\sqrt{T}))
    \\
    &  = 2C \theta (u^2 V_T + u^3 /\sqrt{T}).
\end{align*}
Therefore, with some constant $\tilde{C}_4 > 0$
\begin{align*}
\frac{\int^\infty_{\overline{\lambda}}\exp(-t^2/2)\mathrm{d}t}{\int^\infty_u \exp(-t^2/2)\mathrm{d}t} 
&\leq \exp\left(\tilde{C}_4 \left(u^2 V_T + u^3 /\sqrt{T}\right)\right).
\end{align*}
We find that
\begin{align}
\label{eq:44}
1 - \Phi(\overline{\lambda}) \le \big(1 - \Phi(u)\big)\exp\left( \tilde{C}_4\left(u^2 V_T + u^3 /\sqrt{T}\right)\right).
\end{align}
By combining \eqref{eq:42}, \eqref{eq:44}, and \eqref{eq:34}, for $\varepsilon > 0$ all $1\leq u \leq \sqrt{T}\min\left\{\frac{1}{4} C_0, \sqrt{\frac{3 C_0^2}{8 C_1}}\right\}$, there exist constants $T_0 > 0$ and $\tilde{C}_5>0$ such that for all $T \geq T_0$,
\begin{align*}
    &\frac{\mathbb{P}\left({Z}_T > u\right)}{1-\Phi(u)}\\
    &\leq  \tilde{C} \overline{\lambda} \exp\left( C\left(\overline{\lambda}^3/\sqrt{T} + \overline{\lambda}^2 V_T \right) + \tilde{C}_2\overline{\lambda}^4/T + \tilde{C}_3\overline{\lambda}^3/\sqrt{T} + \tilde{C}_4 \left( u^2 V_T + u^3 /\sqrt{T}+ T_0\right) + \varepsilon u^2 \right)
    \\
    &\le \tilde{C}u \exp\left( \tilde{C}_5 \left( u^2 V_T + u^3 /\sqrt{T} + u^4/T+ T_0\right) + \varepsilon u^2 \right)\\
    &= \tilde{C}u \exp\left( \tilde{C}_5 \left( u^2 (V_T + \varepsilon) + u^3 /\sqrt{T} + u^4/T+ T_0\right) \right).
\end{align*}
Applying the same argument to the martingale $-Z_T$, we conclude the proof.

\end{proof}

\section{Additional discussions and related work} 
\label{sec:related2}

\subsection{Additional literature on BAI}
The stochastic MAB problem is a classical abstraction of the sequential decision-making problem \citep{Thompson1933,Robbins1952,Lai1985}. BAI is a paradigm of the MAB problem \citep{EvanDar2006,Audibert2010,Bubeck2011}. Though the problem of BAI itself goes back decades, variants go as far back as the 1950s \citet{bechhofer1968sequential}. 

\citet{kaufmann14,Kaufman2016complexity} shows distribution-dependent lower bound for BAI. In the BAI literature, there is another setting called BAI with fixed confidence \citep{Jenninson1982,Mannor2004,Kalyanakrishnan2012,wang2021fast}. For the fixed confidence setting, \citet{Garivier2016} solves the problem in the sense that they provided a strategy whose upper bound matches the distribution-dependent lower bound by solving an optimization problem at each step. The result is further developed by \citet{Degenne2019b} to solve the solution of the two-player game by  the no-regret saddle point algorithm. Besides, \citet{Shang2020} shows the asymptotic optimality of the Top Two Thompson Sampling (TTTS), proposed by \citet{Russo2016} and \citet{Qin2017}, in the fixed confidence setting. \citet{wang2021fast} develops Frank-Wolfe-based Sampling (FWS) to complete the characterization of the complexity of fixed-confidence BAI with various types of structures among the arms. See \cite{wang2021fast} for techniques in the fixed-confidence setting and further comprehensive survey.

In addition to these studies, \citet{Russo2016} proposes the TTTS strategy with another direction of asymptotic optimality. There are other Bayesian strategies for BAI, such as \citet{Qin2017} and \citet{Shang2020}. \citet{Komiyama2021} finds that parameters with a small gap make a significant contribution to the evaluation of the Bayesian simple regret.

In contrast, there is less research on BAI with a fixed budget compared to BAI with fixed confidence. Following \citet{Audibert2010} and \citet{Bubeck2011}, \citet{Gabillon2012} and \citet{Karnin2013} discuss the link between the fixed confidence and fixed budget settings. In some studies, the lower bounds are associated only with the gap $\Delta$ (\citet{Audibert2010,Bubeck2011,Carpentier2016} and \citet{Kaufman2016complexity} (Theorem~16)), rather than information-theoretic formulations in \citet{Kaufman2016complexity} (Theorem~12) and our Proposition~\ref{prp:lowerbound_2arms}. 

In \cite{Karnin2013}, they propose the Sequential Halving (SH) strategy and show the performance guarantee
\begin{align*}
    \exp \left(- \frac{T}{8 (\log K)H_2 }\right),
\end{align*}
where $K > 2$ is the number of the arms in the MAB problems. 
In \cite{faella2020rapidly}, they show that the complexity of the BAI with a fixed budget can be refined by using the variances as well as the gap. They introduce the complexity term
\begin{align*}
    H_{\sigma} = \max_{a \in [K]} \frac{\sigma^2_1 + \sigma^2_a}{\Delta^2_a},
\end{align*}
where $\Delta_a = \mu_1 - \mu_a$ and $\mu_a$ is the expected reward of arm $a$ in the MAB problems. 
and propose the strategy called Variance-Based Rejects (VBR). They prove a performance guarantee of VBR of the form 
\begin{align*}
    \exp \left(- \frac{T}{ 2 K(\log K) H_\sigma}\right).
\end{align*}
Their results suggest that the complexities of \citet{Audibert2010, Karnin2013} have some room for improvement depending on the variance of the distribution.

However, the tightest lower bound and the performance guarantee achieving the distribution-dependent lower bound have been long-standing open problems.

\subsection{Difference between limit experiments frameworks}
\label{app_subsec:diff_limit_dec}
For a parameter $\theta_0\in\mathbb{R}$ and $T$ i.i.d. observations, the limit experiments framework considers local alternatives $\theta = \theta_0 + h/\sqrt{T}$ for a constant $h\in\mathbb{R}$ \citep{Vaart1991,Vaart1998}. Then, we can approximate the statistical experiment by the Gaussian distribution and discuss the optimality of statistical procedures under the approximation. 
\citet{Hirano2009} relates the asymptotic optimality of statistical decision rules \citep{Manski2000,Manski2002,Manski2004,DEHEJIA2005} to the limits of experiments framework. This framework is further applied to policy learning, such as \citet{Kitagawa2018} and \citet{AtheySusan2017EPL}. Although \citet{Hirano2020} insists that these local experiment arguments can be applied to adaptive experimental designs, we consider that they cannot be applied to BAI.

The limit experiments arguments are based on the CLT. On the other hand,
taking the parameter $\theta = \theta_0 + h/\sqrt{T}$ will not give us the distribution-dependent analysis; that is, the instance is not fixed as $T$ grows large. Therefore, a naive application of the information-theoretical result like Proposition~\ref{lem:data_proc_inequality} does not provide a lower bound that is valid for the distribution-dependent analysis.
To match the lower bound of  \citet{Kaufman2016complexity}, we need to consider the large deviation bound, not the CLT. However, the conventional large deviation bound cannot be applied to BAI strategies owing to the adaptive sampling strategy. 

In other words, the limit experiment framework first applies the Gaussian approximation and then discusses the efficiency under the approximation, where efficiency arguments are complete within the Gaussian distribution. On the other hand, we derive the lower bound of an event under the true distribution in our limit decision-making. Then, we approximate it by considering the limit of the gap. Therefore, in limit decision-making, we first consider the optimality for the true distribution and find the optimal strategy in the sense that the upper bound matches the lower bound when the gap goes to zero.

\subsection{Literature on causal inference}
The framework of bandit problems is closely related to the potential outcome framework of \citep{Neyman1923,Rubin1974}. In causal inference, the gap is often referred to as the average treatment effect, and the estimation is studied in this framework.

For estimating the average treatment effect efficiently, \citet{Laan2008TheCA}, \citet{Hahn2011}, \citet{Meehan2018}, \citet{Kato2020adaptive} and \citet{gupta2021efficient} propose adaptive strategies.

The AIPW estimator, which is also referred to as a doubly robust (DR) estimator, plays an important role in treatment effect estimation \citep{Robins1994,hahn1998role,bang2005drestimation,dudik2011doubly,Laan2016onlinetml,Luedtke2016}. The AIPW estimator also plays an important role in double/debiased machine learning literature because it mitigates the convergence rate conditions of the nuisance parameters \citep{ChernozhukovVictor2018Dmlf,Ichimura2022}.

When constructing AIPW estimator with samples obtained from adaptive experiments, including BAI strategies, a typical construction is to use sample splitting and martingales \citep{Laan2008TheCA,hadad2019,Kato2020adaptive,Kato2021adr}.  \citet{Howard2020TimeuniformNN}, \citet{Kato2020adaptive}, and provide non-asymptotic confidence intervals of the AIPW or DR estimator, which do not bound a tail probability in large deviation as ours. The AIPW estimator is also used in the recent bandit literature, mainly in regret minimization \citep{dimakopoulou2021online,Kim2021}. \citet{hadad2019}, \citet{Bibaut2021}, and \citet{Zhan2021} consider the off-policy evaluation using observations obtained from regret minimization algorithms.

There are also some studies on problems related to BAI in economics, such as \citet{Wager2022diffusion}, \citet{adusumilli2022minimax}, and \citet{Armstrong2022}. 

\subsection{Other related work}
Another literature on ordinal optimization has been studied in the operation research community \citep{peng2016myopic, Dohyun2021}, and a modern formulation was established in the 2000s \citep{chen2000,glynn2004large}, in which most of those studies consider the estimation of the optimal sampling rule separately from the probability of misidentification.

\citet{Balsubramani2016} and \citet{Howard2020TimeuniformNN} propose sequential testing using law of iterated logarithms and discuss the optimality of sequential testing based on the arguments of \citet{Jamieson2014}.

In addition to \citet{Fan2013,fan2014generalization}, there are also studies that use martingales for obtaining tight bounds \citep{Cappe2013,Juneja2019,Howard2020TimeuniformNN,Kaufmann2021}, where some of them apply change-of-measure techniques. 

\subsection{Future direction}
As is in BAI with fixed confidence, our future direction is to develop strategies for various settings, such as linear \citep{Hoffman2014,Liang2019,KatzSamuels2020}, combinatorial \citep{Chen2014}, and contextual bandits \citep{Russac2021,Kato2021Role,QinRusso2022,Kato2022semipara}.  A natural question is whether a complete characterization is possible under the large gap.

\subsection{Lower bounds for bandit models parameterized by the mean}
This section extends the results to other distributions besides the Gaussian distribution.
We consider the class of bandit models parameterized by the mean of distributions; that is, $\mathcal{M}^\mu = \{\nu = (\mathcal{P}(\mu_1), \mathcal{P}(\mu_0)): (\mu_1, \mu_0)\in\Theta^2, \mu_1\neq \mu_0\}$, where $\Theta\subset \mathbb{R}$ is the parameter space and $\mathcal{P}(\mu_a)$ denotes a distribution of arm $a$ parameterized by the mean $\mu_a$. We also assume that the distribution $\nu_a$ has a density $f_a(x|\mu_a)$. For the density function $f_{a}(x | \mu_a)$, let $\ell_{a}(\mu | x) = \log f_{a}(x | \mu)$ be a log likelihood function.
We define the Fisher information as $I_{a}(\mu_a) =- \mathbb{E}_{\nu}[\ddot{\ell}_a(\mu_a)] $. We assume the following regularity conditions on the Fisher information $I_{a}(\mu)$. 

\begin{assumption}
\label{asm:reg_cond} Let $\interior{\Theta}$ be the interior of $\Theta$.
For each $ a \in \{1, 0\}$, (i) 
the support $\{x: f_a(x|\mu_a) > 0\}$ does not depend on the parameter $\mu_a$; (ii) if $\mu_a \in \interior{\Theta}$, there exist the Fisher information $I_a(\mu_a)$ and a known constant $C_I > 0$ such that $\max\{1/I_a(\mu_a), I_a(\mu_a)\} < C_I$; (iii) the log likelihood function is thrice differentiable with respect to $\mu_a\in\Theta$; (iv) $\mathbb{E}_{\nu}[\dot{\ell}_a(\mu_a)] = 0$, $\mathbb{E}_{\nu}[\ddot{\ell}_a(\mu_a)] = - I_a(\mu_a)$;  
(v) there exist constants $C, C'>0$ such that for all $\mu_a \in \interior{\Theta}$, there exists a neighborhood $U(\mu_a)\subset \mathbb{R}$ such that for all $\tilde{\mu}\in U(\mu_a)$, $\mathbb{E}_{\nu}[\dddot{\ell_a}(\tilde{\mu})] \leq C$ and $|I_a(\tilde{\mu}) - I_a(\mu_a)| \le C'|\tilde{\mu} - \mu_a|$.
\end{assumption}
Furthermore, a similar assumption to Assumption~\ref{asm:bounded_mean_variance} is also made for this general distribution.
\begin{assumption} \label{asm:bounded_mean_variance_general}
For all $v=(v_1, v_0), v'=(v'_1, v'_0) \in \mathcal{M}^\mu$ and $a\in\{1, 0\}$, $v_a$ and $v'_a$ are mutually absolutely continuous and have the density functions.
There exist known constants $C_{\mu}, C_{\sigma^2} > 0$ such that, for all $a \in \{1,0\}$, $| \mu_a| \le  C_\mu$ and $\max\{ 1/\sigma^2_a, \sigma^2_a\} < C_{\sigma^2}$. 
\end{assumption}

Besides, owing to the asymmetry of the KL divergence, we define a local alternative hypothesis $\mathrm{Alt}(\nu) = \{\nu' \in \mathcal{M}^\mu: \mu'_1 < \mu'_0, |\mu'_1 - \mu'_0| \leq C_{\mathrm{Alt}}\Delta\}$, where $C_{\mathrm{Alt}} > 0$ is a constant. Then, for the local alternative hypothesis we can obtain the following corollary via the Fisher information, $I_a(\mu_a)$.  We show the proof later in this section. 
\begin{corollary}\label{prp:lowerbound_2arms_fisher}
Under Assumption~\ref{asm:bounded_mean_variance_general} and \ref{asm:reg_cond}, there exists a constant $C>0$ such that for each $\nu  \in \mathcal{M}^\mu$, any consistent strategy satisfies 
\begin{align*}
    \limsup_{T \to \infty} - \frac{1}{T}\log \mathbb{P}_\nu(\hat{a}_T \neq a^*(\nu)) \le \frac{\Delta^2}{2 (\sqrt{1/I_1(\mu_1)} + \sqrt{1/I_0(\mu_0)})^2} + C \Delta^3.
\end{align*}
\end{corollary}
This corollary also implies that as $\Delta \to 0$, the lower bound can be approximated by that of Gaussian bandit models. 
There are two important features of this statement. 
First, it clarifies the main term of an order $O(\Delta^2)$, and it shows that the higher-order term $C\Delta^3$ is asymptotically negligible as $\Delta \to 0$. 
Since the main terms are described by the Fisher information $I_a(\mu_a)$, it is applicable for a wide range of distributions. 

For example, the Bernoulli bandit models satisfy the assumptions, which is defined as 
$\mathcal{M}^B = \{\nu = (\mathcal{B}(\mu_1), \mathcal{B}(\mu_0)): (\mu_1, \mu_0)\in\mathbb{R}^2, \mu_1\neq \mu_0\}$, where $\mathcal{B}(\mu_a)$ denotes the Bernoulli distribution of arm $a$ with the mean $\mu_a$.  The Bernoulli bandit models satisfy Assumption~\ref{asm:reg_cond}. We can easily confirm that there exists a constant $C' > 0$ such that $|I_a(\mu'_a) - I_a(\mu_a)| = \mu'_a(1-\mu'_a) - \mu_a(1-\mu_a)\leq C'|\mu'_a - \mu_a|$. Therefore, for an alternative hypothesis $\mathrm{Alt}(\nu) = \{\nu' \in \mathcal{M}^B: \mu'_1 < \mu'_0, |\mu'_1 - \mu'_0| \leq C_{\mathrm{Alt}}\Delta\}$, the lower bound is given as the following Corollary~\ref{prp:lowerbound_2arms_bernoulli}. 
\begin{corollary}\label{prp:lowerbound_2arms_bernoulli}
Under Assumption~\ref{asm:bounded_mean_variance_general} and \ref{asm:reg_cond}, there exists a constant $C>0$ such that 
 for each $\nu  \in \mathcal{M}^B$, any consistent strategy satisfies 
\begin{align*}
    \limsup_{T \to \infty} - \frac{1}{T}\log \mathbb{P}_\nu(\hat{a}_T \neq a^*(\nu)) \le \frac{\Delta^2}{2 (\sqrt{\mu_1(1-\mu_1)} + \sqrt{\mu_0(1-\mu_0)})^2} + C \Delta^3.
\end{align*}
\end{corollary}
Here, by using the variance of $\mathcal{B}(\mu_a)$, $\sigma^2_a = \mu_a(1-\mu_a)$, the Fisher information can be written as $I_a(\mu_a) = 1/\sigma^2_a$. 

We show the proof of Corollary~\ref{prp:lowerbound_2arms_fisher} as follows. First, we show the following lemma.
\begin{lemma}
\label{lem:kl_fisher}
There exists a constant $C>0$ such that under Assumptions~\ref{asm:bounded_mean_variance_general} and \ref{asm:reg_cond}, for each $a \in \{1, 0\}$, for any $\mu_a, \mu_a' < \interior{\Theta}$ with $ |\mu_a -\mu_a'| \le \Delta$, if $\Delta$ is sufficiently small such that with the neighborhood $U(\mu_a)$ defined in Assumption~\ref{asm:reg_cond} (v),  $ \mu_a' \in U(\mu_a)$, the following holds.
\begin{align*}
\left|\KL(\nu_a', \nu_a) -  \frac{(\mu'_a - \mu_a\big)^2I_a(\mu_a)}{2} \right|\leq C\Delta^3.
\end{align*}
\end{lemma}
\begin{proof}
From the Taylor series expansion,
\begin{align}\label{eq:taylor_likelihood}
    \ell(\mu_a) -\ell(\mu_a') &= \dot{\ell}(\mu_a')(\mu_a - \mu_a') + \frac{1}{2}\ddot{\ell}(\mu_a')(\mu_a - \mu_a')^2 + \frac{1}{6}\dddot{\ell}(\tilde{\mu})(\mu_a - \mu_a')^3
\end{align}
where $\tilde{\mu}$ has the value between $\mu_a$ and $\mu_a'$.
Note that $\mathbb{E}_{\nu'}[\ell(\mu_a) - \ell(\mu_a')] = -\KL(\nu_a', \nu_a)$, $\mathbb{E}_{\nu'}[\dot{\ell}(\mu_a')] = 0$, and $\mathbb{E}_{\nu'}[\ddot{\ell}(\mu_a')] = -I_a(\mu'_a)$. Taking the expectation $\mathbb{E}_{\nu'}[\cdot]$ for both sides of \eqref{eq:taylor_likelihood}, 
\begin{align*}
    -\KL(\nu_a', \nu_a) &= -\frac{I_a(\mu'_a)}{2}(\mu_a' - \mu_a)^2 + \frac{1}{6}\Ebb_{\nu'}[\dddot{\ell}(\tilde{\mu})](\mu_a' - \mu_a)^3.
\end{align*}
From the assumption, there exist constants $c_1, c_2>0$ such that $I_a(\mu_a) - I(\mu'_a) \le c_1|\mu_a - \mu'_a| \leq c_1\Delta$ and $\Ebb_{\nu'}[\dddot{\ell}(\tilde{\mu})] \le c_2$. Therefore, we conclude the proof.
\end{proof}

Then, we show Corollary~\ref{prp:lowerbound_2arms_fisher}.
\begin{proof}
From Lemma~\ref{lem:kl_fisher}, for each $a \in \{1,0\}$,
\begin{align*}
    \KL(\nu_a', \nu_a) \leq \frac{(\mu'_a - \mu_a\big)^2I_a(\mu_a)}{2} + C\Delta^3.
\end{align*}

Then,
\begin{align*}
     \limsup_{T \to \infty} \frac{1}{T}\log \frac{1}{\Pbb_\nu(\hat{a}_T \neq 1)} & 
     \le \min_{\lambda \in \mathbb{R}} \max_{a \in \{1, 0\}}\frac{(\lambda - \mu_a)^2I_a(\mu_a)}{2} + C\Delta^3 .
\end{align*}
When the minimum over $\lambda \in \mathbb{R}$ is attained, 
\begin{align*}
 \frac{(\lambda - \mu_1)^2I_1(\mu_1)}{2 } & = \frac{(\lambda - \mu_0)^2I_0(\mu_0)}{2}
\\
    \lambda & = \frac{\mu_1 \sqrt{1/I_0(\mu_0)} + \mu_0 \sqrt{1/I_1(\mu_1)}}{\sqrt{2} \sqrt{1/I_1(\mu_1)} \sqrt{1/I_0(\mu_0)}}.
\end{align*}
Thus, we have 
\begin{align*}
    \limsup_{T \to \infty} \frac{1}{T}\log \frac{1}{\Pbb_\nu(\hat{a}_T \neq 1)} \le \frac{(\mu_1 - \mu_0)^2}{2 (\sqrt{1/I_1(\mu_1)} + \sqrt{1/I_0(\mu_0)})^2} + C\Delta^3.
\end{align*}
This concludes the proof.
\end{proof}

\section{Additional Experimental Results}
\label{sec:appdx_experiments}
Here, we show the additional results of the experiments (Section~\ref{sec:experiments}). 

\begin{figure}[htbp]
    \begin{tabular}{cc}
      \begin{minipage}[t]{0.45\hsize}
        \centering
        \includegraphics[width=70mm]{figure/Exp1-eps-converted-to.pdf}
        \caption*{Scenario~1}
      \end{minipage} &
      \begin{minipage}[t]{0.45\hsize}
        \centering
        \includegraphics[width=70mm]{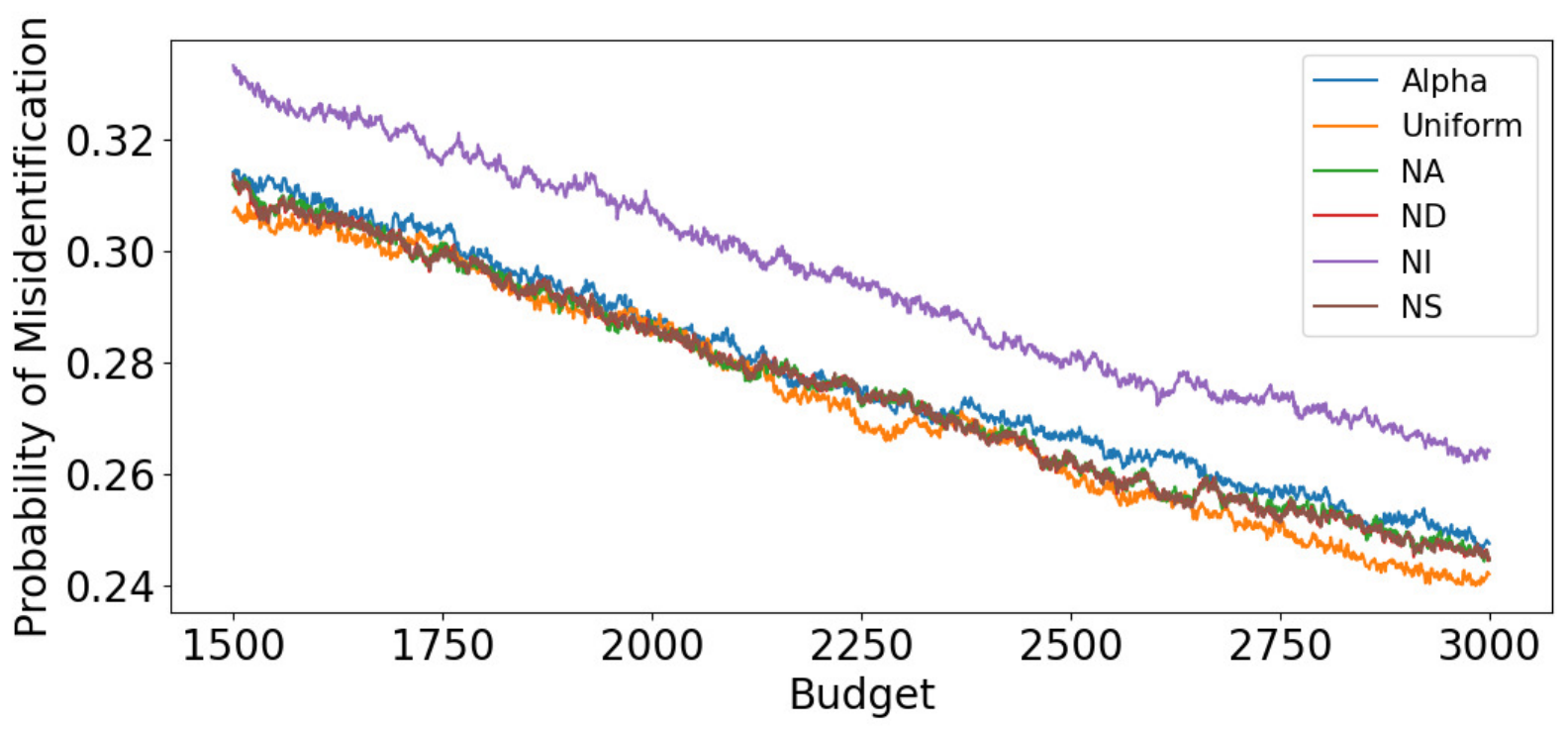}
        \caption*{Scenario~5}
      \end{minipage} \\
      \begin{minipage}[t]{0.45\hsize}
        \centering
        \includegraphics[width=70mm]{figure/Exp2-eps-converted-to.pdf}
        \caption*{Scenario~2}
      \end{minipage} &
      \begin{minipage}[t]{0.45\hsize}
        \centering
        \includegraphics[width=70mm]{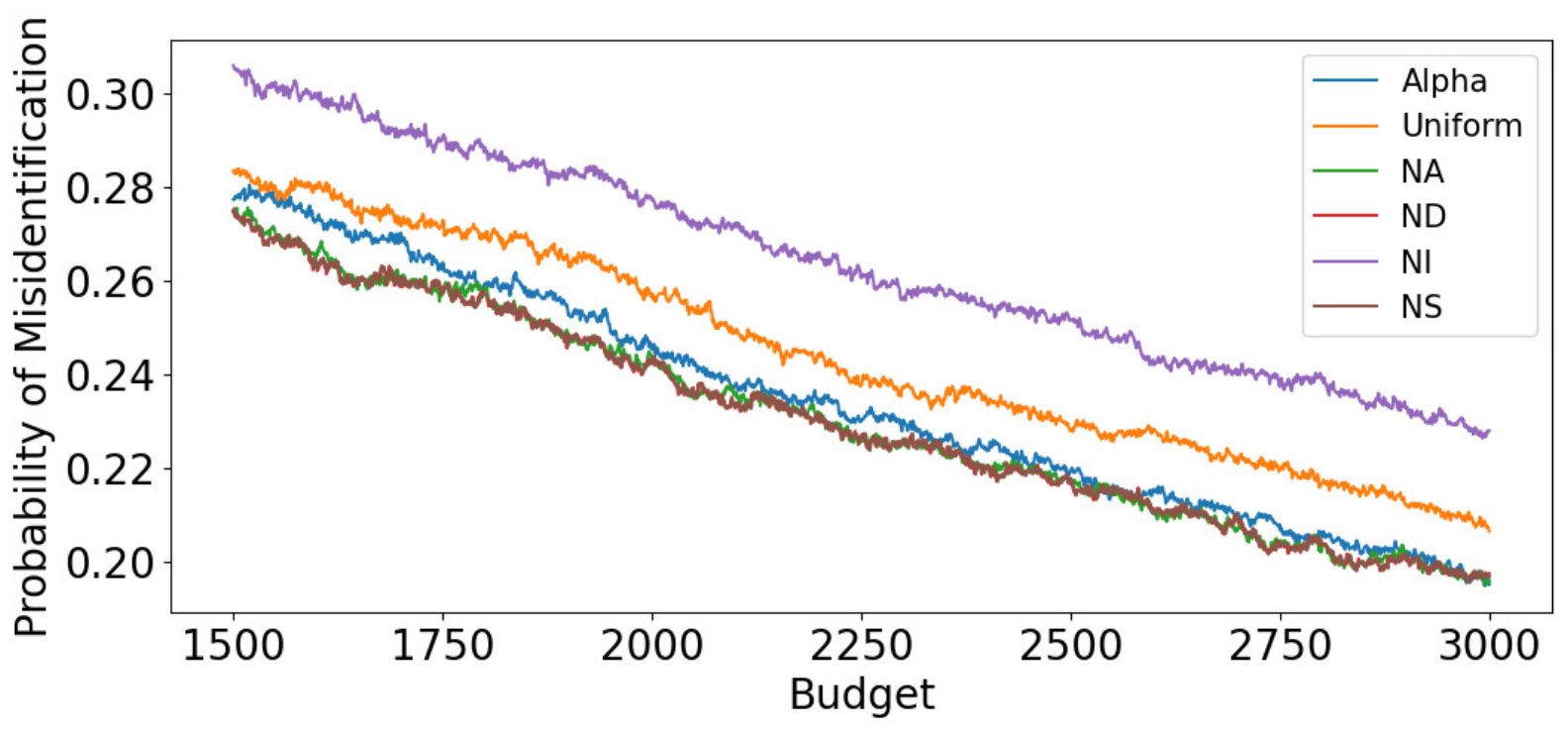}
        \caption*{Scenario~6}
      \end{minipage}\\
   
      \begin{minipage}[t]{0.45\hsize}
        \centering
        \includegraphics[width=70mm]{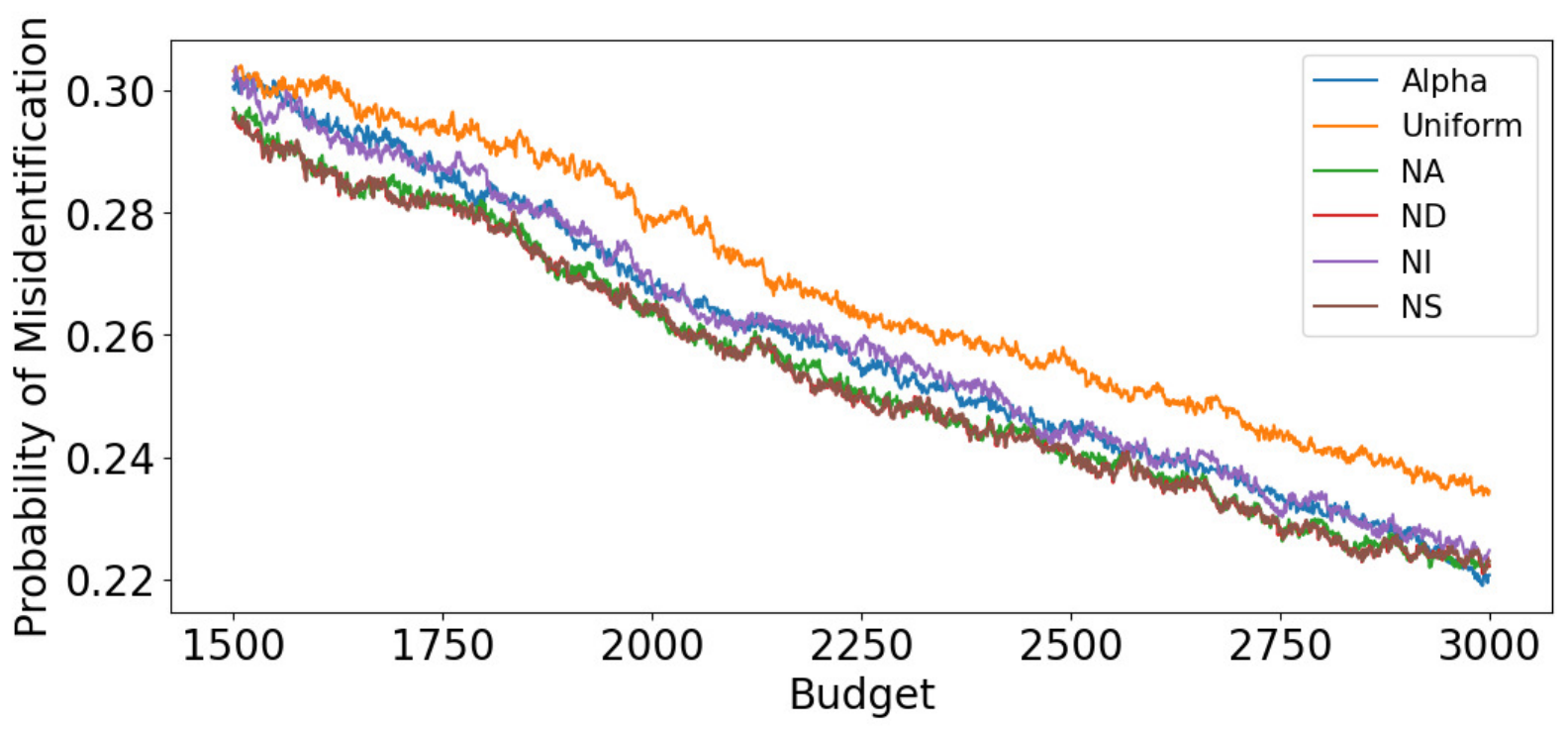}
        \caption*{Scenario~3}
      \end{minipage} &
      \begin{minipage}[t]{0.45\hsize}
        \centering
        \includegraphics[width=70mm]{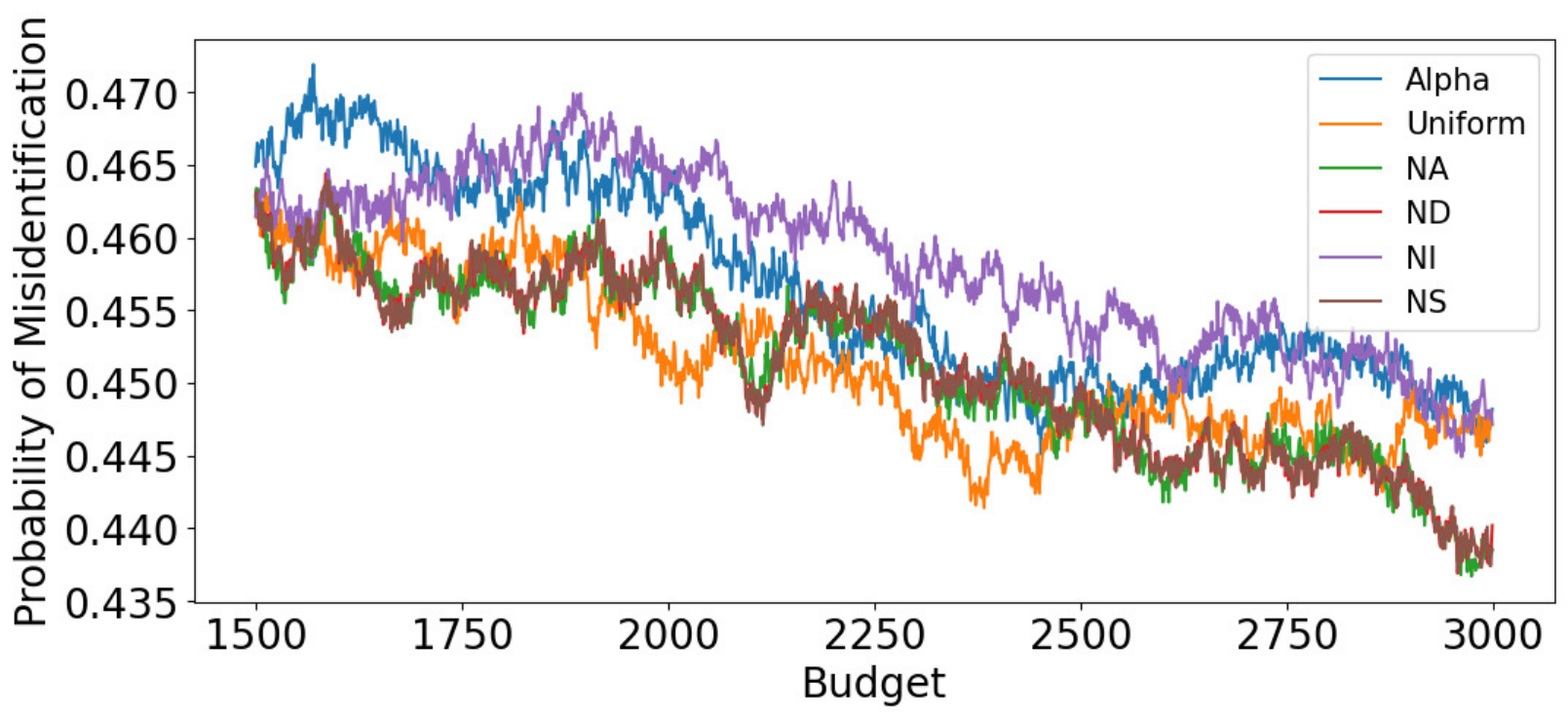}
        \caption*{Scenario~7}
      \end{minipage} \\
      
      \begin{minipage}[t]{0.45\hsize}
        \centering
        \includegraphics[width=70mm]{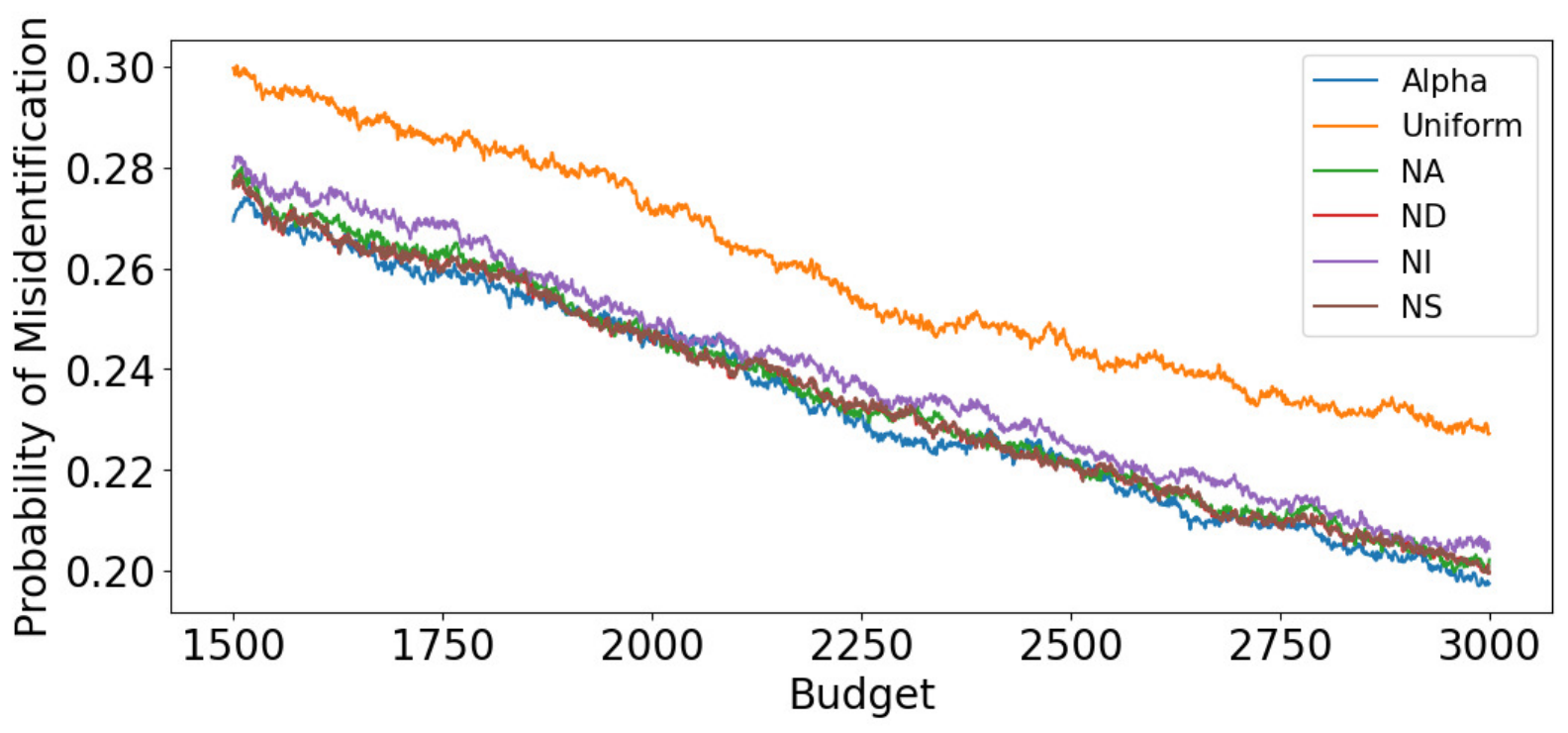}
        \caption*{Scenario~4}
      \end{minipage} &
      \begin{minipage}[t]{0.45\hsize}
        \centering
        \includegraphics[width=70mm]{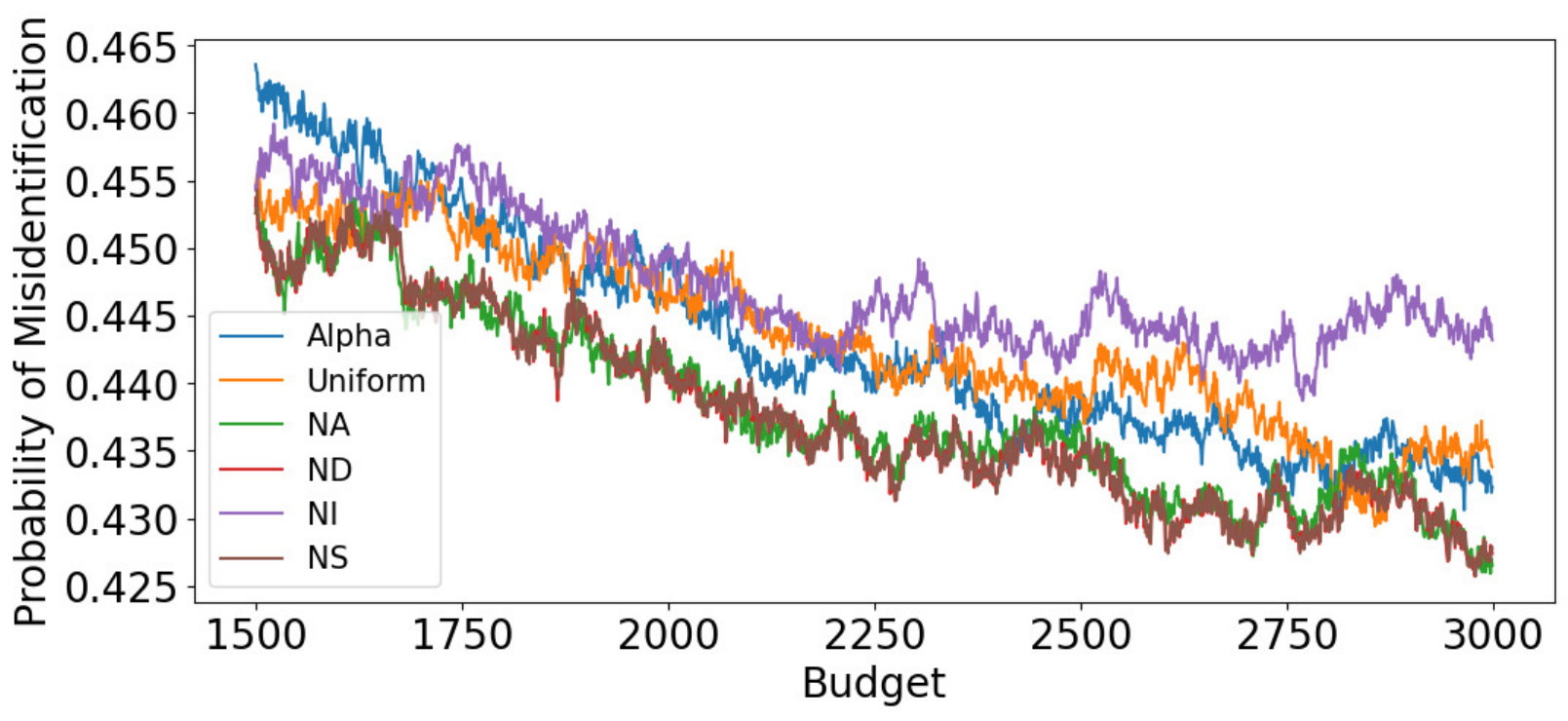}
        \caption*{Scenario~8}
      \end{minipage}
    \end{tabular}
    \caption{Results of the two-armed Gaussian bandits. We compute the empirical probability of misidentification. Note that the result of ND is overlapped with that of the NS.}
\label{fig:synthetic_results2}
  \end{figure}

\end{document}